\newtheorem{theorem}{Theorem}
\newtheorem{lemma}{Lemma}
\newtheorem{proposition}{Proposition}
\title{Towards Generalized Implementation of \\ Wasserstein Distance in GANs}
\author {
    Minkai Xu\textsuperscript{\rm 1,2},
    Zhiming Zhou\textsuperscript{\rm 3},
    Guansong Lu\textsuperscript{\rm 4}, 
    Jian Tang\textsuperscript{\rm 2,5,6}, 
    Weinan Zhang\textsuperscript{\rm 4}, Yong Yu\textsuperscript{\rm 4} \\
    \textsuperscript{\rm 1}University of Montreal
    \textsuperscript{\rm 2}Mila - Quebec AI Institute\\
    \textsuperscript{\rm 3}Shanghai University of Finance and Economics
    \textsuperscript{\rm 4}Shanghai Jiao Tong University\\
    \textsuperscript{\rm 5}HEC Montreal 
    \textsuperscript{\rm 6}CIFAR AI Research Chair \\
    minkai.xu@umontreal.ca,  heyohaizhou@gmail.com\\
    \{gslu,wnzhang,yyu\}@apex.sjtu.edu.cn,
    jian.tang@hec.ca
}
\begin{document}

\maketitle

\begin{abstract}
    Wasserstein GANs (WGANs), built upon the Kantorovich-Rubinstein (KR) duality of Wasserstein distance, is one of the most theoretically sound GAN models. However, in practice it does not always outperform other variants of GANs. This is mostly due to the imperfect implementation of the Lipschitz condition required by the KR duality. Extensive work has been done in the community with different implementations of the Lipschitz constraint, which, however, is still hard to satisfy the restriction perfectly in practice. In this paper, we argue that the strong Lipschitz constraint might be unnecessary for optimization. Instead, we take a step back and try to relax the Lipschitz constraint. Theoretically, we first demonstrate a more general dual form of the Wasserstein distance called the \emph{Sobolev duality}, which relaxes the Lipschitz constraint but still maintains the favorable gradient property of the Wasserstein distance. Moreover, we show that the KR duality is actually a special case of the Sobolev duality. Based on the relaxed duality, we further propose a generalized WGAN training scheme named Sobolev Wasserstein GAN (SWGAN), and empirically demonstrate the improvement of SWGAN over existing methods with extensive experiments.\footnote{Code is available at \url{https://github.com/MinkaiXu/SobolevWassersteinGAN}.}
\end{abstract}
\section{Introduction}

Generative adversarial networks (GANs) \cite{gan} have attracted huge interest in both academia and industry communities due to its effectiveness in a variety of applications.
Despite its effectiveness in various tasks, a common challenge for GANs is the training instability \cite{gan-tutorial}.
In literature, many works have been developed to mitigate this problem \cite{principled_methods,lucic2017gans,heusel2017gans,mescheder2017numerics,mescheder2018training,yadav2017stabilizing}. 

By far, it is well known that the problem of training instability of the original GANs mainly comes from the ill-behaving
distance metric \cite{principled_methods}, \textit{i.e.}, the Jensen-Shannon divergence metric, which remains constant when two distributions are disjoint. The Wasserstein GAN \cite{wgan} improves this by using the Wasserstein distance, which is able to continuously measure the distance between two distributions. Such a new objective has been shown to be effective in improving training stability. 

In practice, since the primal form of the Wasserstein distance is difficult to optimize, the WGAN model \cite{wgan} instead proposed to optimize it with the Kantorovich-Rubinstein (KR) duality \cite{krdual}. However, though the new WGAN scheme is theoretically more principled, it does not yield better performance in practice compared to other variants of GANs \cite{lucic2017gans}. The main obstacle is that the WGAN requires the discriminator (or the critic) to be a Lipschitz function. However, this is very hard to satisfy in practice though a variety of different implementations have been tried such as weight clipping \cite{wgan}, gradient penalty (GP) \cite{wgan-gp}, Lipschitz penalty (LP) \cite{wgan-lp} and spectral normalization (SN) \cite{sngan}. As a result, WGAN is still unable to always achieve very compelling results.

In this paper, we argue that the strong Lipschitz condition might be unnecessary in the inner optimization loop for WGAN's critic. Intuitively, a looser constraint on the critic, which results in a larger function space, can simplify
the practical constrained optimization problem of the restricted critic, and the better-trained critic would further benefit the training of the generator. Therefore, instead of developing new methods to impose the Lipschitz constraint, in this paper we propose to relax this constraint. In other words, we move our attention from ``how to better implement the Lipschitz constraint'' to ``how to loosen the Lipschitz constraint''. More specifically, in this paper we demonstrate a new dual form of the Wasserstein distance where the Lipschitz constraint is relaxed to the Sobolev constraint \cite{2003sobolev,sgan}. We further show that the new duality with the relaxed constraint indeed is a generalization of the KR duality, and it still keeps the gradient property of the Wasserstein distance. Based on this relaxed duality, we propose a generalized WGAN model called Sobolev Wasserstein GAN. To the best of our knowledge, among the restricted GAN models \cite{wgan,wgan-gp,sgan,fishergan,bellemare2017cramer,adler2018banach}, 
Sobolev Wasserstein GAN is the most relaxed one that can still avoid the training instability problem. 



The main contributions of this paper can be summarized as follows: 


\begin{itemize}
\vspace{-5pt}
\item We demonstrate the Sobolev duality of Wasserstein distance and demonstrate that the new duality is also capable of alleviating the training instability problem in GANs (Section~\ref{section-sdual}). We further clarify the relation between Sobolev duality and other previous metrics and highlight that by far Sobolev duality is the most relaxed metric that can still avoid the non-convergence problem in GANs (Section~\ref{wgan-sec}).
\vspace{-3pt}
\item Based on Sobolev duality, we introduce the Inequality Constraint Augmented Lagrangian Method \cite{alm} to build the practical Sobolev Wasserstein GAN (SWGAN) training algorithm (Section \ref{section-swgan}). 
\vspace{-5pt}
\end{itemize}
We conduct extensive experiments to study the practical performance of SWGAN. We find that generally our proposed model achieves better sample quality and is less sensitive to the hyper-parameters. We also present a theoretical analysis of the minor sub-optimal equilibrium problem common in WGAN family models, and further propose an improved SWGAN with a better convergence.

\section{Preliminaries}
\label{preliminaries}

\subsection{Generative adversarial networks}

Generative adversarial networks \cite{gan} perform generative modeling via two competing networks. The generator network $G$ learns to map samples from a noise distribution to a target distribution, while the discriminator network $D$ is trained to distinguish between the real data and the generated samples. Then the generator $G$ is trained to output images that can fool the discriminator $D$. The process is iterated. 
Formally, the game between the generator $G$ and the discriminator $D$ leads to the minimax objective: 
\begin{equation}
\label{JSD}
\begin{aligned}
    \min_G \max_D \Big \{ &\mathbb{E}_{x \sim P_r}[\log(D(x))] +
    &\mathbb{E}_{z \sim P_z}[\log(1-D(G(z))] \Big \},
\end{aligned}
\end{equation}
where $P_r$ denotes the distribution of real data and $P_z$ denotes the noise distribution. 

This objective function is proven to be equivalent to the Jensen-Shannon divergence (JSD) between the real data distribution $P_r$ and fake data distribution $P_g$ when the discriminator is optimal. Assuming the discriminator is perfectly trained, the optimal discriminator is as follows: 
\begin{equation}
\label{gan-discriminator}
    D^*(x) = \frac{P_r}{P_r+P_g}.
\end{equation} 
However,  recently \cite{LGAN} points out that the gradients provided by the optimal discriminator in vanilla GAN cannot consistently provide meaningful information for the generator's update, which leads to the notorious training problems in GANs such as gradient vanishing \cite{gan,principled_methods} and mode collapse \cite{mode_gan,unrolled_gan,kodali2017convergence,arora2017generalization}. This view would be clear when checking the gradient of optimal discriminator in Eq.~\eqref{gan-discriminator}: the value of the optimal discriminative function $D^*(x)$ at each point is independent of other points and only reflects the local densities of $P_r(x)$ and $P_g(x)$, thus, when the supports of the two distributions are disjoint, the gradient produced by a well-trained discriminator is uninformative to guide the generator \cite{LGAN}. 

\subsection{Wasserstein distance}
Let $P_r$ and $P_g$ be two data distributions on $\mathbb{R}^n$. The Wasserstein-$1$ distance between $P_r$ and $P_g$ is defined as
\begin{align}
\label{emd}
    W(P_r,P_g) = \inf_{\pi \in \Pi(P_r,P_g)} \mathbb{E}_{(x_i,x_j)\sim\pi}[\Vert x_i-x_j \Vert],
\end{align}
where the coupling $\pi$ of $P_r$ and $P_g$ is the probability distribution on $\mathbb{R}^n \times \mathbb{R}^n$ with marginals $P_r$ and $P_g$, and $\Pi(P_r,P_g)$ denotes the set of all joint distributions $\pi$. The Wasserstein distance can be interpreted as the minimum cost of transporting one probability distribution to another. 
The Kantorovich-Rubinstein (KR) duality \cite{krdual} provides a new way to evaluate the Wasserstein distance between distributions. The duality states that 
\begin{equation}
\label{krdual}
\begin{aligned}
W&(P_r,P_g) = {\sup}_{f} \, \Big \{ \mathbb{E}_{x \sim P_r} [f(x)] - \mathbb{E}_{x \sim P_g}  [f(x)] \Big \} , \\
&\emph{s.t.  } \, f(x_i) - f(x_j) \leq \Vert x_i-x_j \Vert, \,\, \forall x_i, \forall x_j.
\end{aligned}
\end{equation}
where the supremum is taken over all 
functions $f$: $\mathcal{X} \rightarrow \mathbb{R}$ whose Lipschitz constant is no more than one.

\subsection{Wasserstein GAN}
\label{ot-section}
The training instability issues of vanilla GAN is considered to be caused by the unfavorable property of distance metric \cite{principled_methods}, \textit{i.e.}, the JSD remains constant when the two distributions are disjoint. Accordingly, \cite{wgan} proposed Wasserstein distance in the form of KR duality Eq.~\eqref{krdual} as an alternative objective.

The Wasserstein distance requires to enforce the Lipschitz condition on the critic network $D$. It has been observed in previous work that imposing Lipschitz constraint in the critic leads to improved stability and sample quality \cite{wgan,kodali2017convergence,fedus2017many,farnia2018convex}. Besides, some researchers also found that applying Lipschitz continuity condition to the
generator can benefit the quality of generated samples \cite{zhang2018self,odena2018generator}.
Formally, in WGAN family, with the objective being Wasserstein distance, the optimal critic $f^*$ under Lipschitz constraint holds the following property \cite{wgan-gp}:
\begin{proposition}
\label{grad-direc}
    Let $\pi^*$ be the optimal coupling in Eq.~\eqref{emd}, then the optimal function $f^*$ in KR duality  Eq.~\eqref{krdual} satisfies that: let $x_t = tx_i + (1-t)x_j$ with $0 \leq t \leq 1$, if $f^*$ is differentiable and $\pi^*(x, x)=0$ for all $x$, then it holds that $P_{(x_i, x_j)\sim \pi^*}[\nabla f^* (x_t)=\frac{x_i-x_j}{\Vert x_i-x_j \Vert}]=1$. 
\end{proposition}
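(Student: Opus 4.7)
The plan is to exploit the complementary slackness between the primal transport problem and its KR dual: at optimality, the Lipschitz inequality must become an equality on exactly those pairs that the optimal coupling transports, and this equality can only be compatible with Lipschitz-ness if $f^{*}$ is affine with slope one along the transport segments. Differentiating this affine behaviour and invoking Cauchy--Schwarz will pin down the gradient direction.

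First, I would combine strong duality with the primal optimum: because
\[
\mathbb{E}_{(x_i,x_j)\sim\pi^{*}}\bigl[f^{*}(x_i)-f^{*}(x_j)\bigr]=W(P_r,P_g)=\mathbb{E}_{(x_i,x_j)\sim\pi^{*}}\bigl[\|x_i-x_j\|\bigr],
\]
while the 1-Lipschitz constraint forces $f^{*}(x_i)-f^{*}(x_j)\leq\|x_i-x_j\|$ pointwise, equality must hold $\pi^{*}$-almost surely. Under the hypothesis $\pi^{*}(x,x)=0$ we may restrict to pairs with $x_i\neq x_j$.

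Next, for any such pair, I would run the standard ``triangle-equality'' argument along the segment $x_t=tx_i+(1-t)x_j$. For $0\leq s\leq t\leq 1$ the Lipschitz bound gives three inequalities
\[
f^{*}(x_i)-f^{*}(x_t)\leq(1-t)\|x_i-x_j\|,\quad f^{*}(x_t)-f^{*}(x_s)\leq(t-s)\|x_i-x_j\|,\quad f^{*}(x_s)-f^{*}(x_j)\leq s\|x_i-x_j\|,
\]
whose sum equals $\|x_i-x_j\|=f^{*}(x_i)-f^{*}(x_j)$. So each inequality is saturated, and in particular $f^{*}(x_t)-f^{*}(x_s)=(t-s)\|x_i-x_j\|$ for all $0\leq s\leq t\leq 1$; that is, $t\mapsto f^{*}(x_t)$ is affine with slope $\|x_i-x_j\|$.

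Finally, assuming $f^{*}$ is differentiable at $x_t$, differentiating $f^{*}(x_t)$ in $t$ via the chain rule yields $\nabla f^{*}(x_t)\cdot(x_i-x_j)=\|x_i-x_j\|$, equivalently $\nabla f^{*}(x_t)\cdot v=1$ with $v=(x_i-x_j)/\|x_i-x_j\|$. The 1-Lipschitz property gives $\|\nabla f^{*}(x_t)\|\leq 1$, so Cauchy--Schwarz is tight, forcing $\nabla f^{*}(x_t)=v$. Taking the union over the $\pi^{*}$-measure-one set of pairs on which equality holds yields the claimed probability-one statement. I expect the main obstacle to be a clean measure-theoretic justification that the $\pi^{*}$-a.e.\ equality transfers to the line segments simultaneously for $\pi^{*}$-a.e.\ pair---this is where the assumption $\pi^{*}(x,x)=0$ and differentiability of $f^{*}$ are essential, and the rest is routine.
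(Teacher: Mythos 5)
Your argument is correct, and it is essentially the standard proof of this proposition from the WGAN-GP paper of Gulrajani et al.\ (the paper under review does not prove Proposition~\ref{grad-direc} itself; it simply cites \cite{wgan-gp} for it, and only gives a proof of the Sobolev analogue, Proposition~\ref{grad-sdual}). Your three steps---(1) strong duality plus the pointwise Lipschitz bound forces $f^{*}(x_i)-f^{*}(x_j)=\|x_i-x_j\|$ $\pi^{*}$-a.s.; (2) the triangle-equality along the segment makes $t\mapsto f^{*}(x_t)$ affine with slope $\|x_i-x_j\|$; (3) chain rule gives $\nabla f^{*}(x_t)\cdot v=1$, and the Lipschitz bound $\|\nabla f^{*}\|\le 1$ together with Cauchy--Schwarz forces $\nabla f^{*}(x_t)=v$---are exactly the cited argument. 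It is worth noting that the paper's own appendix proof of the Sobolev-duality analogue cannot reuse step (2)/(3) directly, because its critic class only controls a path integral of $\|\nabla f\|^{2}$ rather than a pointwise Lipschitz bound; it instead establishes the a.s.\ equality by sandwiching the Sobolev function class between the KR class and the further-relaxed class of \cite{LGAN}, and then derives the gradient direction and unit norm from saturation of the Cauchy--Schwarz inequality inside the Sobolev-ball path integral. Your proof is therefore the cleaner, more elementary route for the KR case, but it does not transfer to the Sobolev case, which is why the paper needs the more elaborate argument there.
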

This property indicates that for each coupling of generated datapoint $x_j$ and real datapoint $x_i$ in $\pi^*$, the gradient at any linear interpolation between $x_i$ and $x_j$ is pointing towards the real datapoint $x_i$ with unit norm.
Therefore, guided by the gradients, the generated sample $x_j$ would move toward the real sample $x_i$. This property provides the explanation, from the gradient perspective, on why WGAN can overcome the training instability issue. 


\subsection{Sobolev space}
Let $\mathcal{X}$ be a compact space in $\mathbb{R}^n$ and let $\mu$(x) to be a distribution defined on $\mathcal{X}$ as a dominant measure. Functions in the Sobolev space $ W^{1,2}(\mathcal{X},\mu)$ \cite{2003sobolev} can be written as: 
\begin{equation}
\begin{aligned}
	W^{1,2}(\mathcal{X},\mu) = \Big\{ f:\mathcal{X} \rightarrow \mathbb{R}, \int_{\small{\mathcal{X}}} \Vert \nabla_x f(x)\Vert ^2 \mu (x) dx < \infty \Big\}.
\end{aligned}
\end{equation}
Restrict functions to the Sobolev space $W^{1,2}(\mathcal{X},\mu)$ vanishing at the boundary and denote this space by $W^{1,2}_0(\mathcal{X},\mu)$, then the semi-norm in  $W^{1,2}_0(\mathcal{X},\mu)$ can be defined as:
\begin{align}
   \Vert f \Vert _{W^{1,2}_0(\mathcal{X},\mu)} = \sqrt {\int _\mathcal{X} \Vert \nabla _x f(x) \Vert ^2 \mu (x) dx}.
\end{align}
Given the notion of semi-norm, we can define the Sobolev unit ball constraint as follows: 
\begin{equation}
\label{sobolev-unit-ball}
\begin{aligned}
    \mathcal{F}_S(\mathcal{X},\mu) = \Big\{& f:\mathcal{X} \rightarrow \mathbb{R}, \,f \in W^{1,2}_0(\mathcal{X},\mu),  \\
    &\Vert f \Vert _{W^{1,2}_0(\mathcal{X},\mu)} \leq 1 \Big\}.
\end{aligned}
\end{equation}
Sobolev unit ball is a function class that restricts the square root of integral of squared gradient norms according to the dominant measure $\mu(x)$.

\subsection{Sobolev GAN}
\label{sobolev-gan}
After WGAN, many works are devoted to improving GAN model by imposing restrictions on the critic function. Typical instances are the GANs based on Integral Probability Metric (IPM) \cite{fishergan,bellemare2017cramer}. Among them, Sobolev GAN (SGAN) \cite{sgan} proposed using Sobolev IPM as the metric for training GANs, which restricts the critic network $D$ in $\mathcal{F}_S(\mathcal{X},\mu)$:
\begin{equation}
\begin{aligned}
\label{sipm}
    \mathcal{S}_{\mu}(P_r,P_g) = \sup_{f \in \mathcal{F}_S(\mathcal{X},\mu)} \Big \{\mathbb{E}_{x \sim P_r} [f(x)] - \mathbb{E}_{x \sim P_g}  [f(x)] \Big \}.
\end{aligned}
\end{equation}
The following choices of measure $\mu$ for $\mathcal{F}_S$ are considered, which we will take as our baselines:
\begin{enumerate}[label=(\alph*)]
	\vspace{-3pt}
    \item $\mu =\frac{P_r+P_g}{2}$: the mixed distribution of $P_r $ and $P_g$; 
    \vspace{-2pt}
    \item $\mu_{gp}$: $x= t x_i +(1-t)x_j$, where $x_i \sim P_r$, $x_j \sim P_g$ and $t \sim \text{U}[0,1]$, \textit{i.e.}, the distribution defined by the interpolation lines between $P_r$ and $P_g$ as in \cite{wgan-gp}.
    \vspace{-3pt}
\end{enumerate}
Let $F_{P_r}$ and $F_{P_g}$ be the cumulative distribution functions (CDF) of $P_r$ and $P_g$ respectively, and assume that the $n$ partial derivatives of $F_{P_r}$ and $F_{P_g}$ exist and are continuous. Define the differential operator $D^- = (D^{-1},...,D^{-n})$ where $D^{-i}$ = $\dfrac{\partial^{n-1}}{\partial x_1...\partial x_{i-1}\partial x_{i+1}...\partial x_n}$, which computes $(n-1)$ high-order partial derivative excluding the $i$-th dimension. Let $x^{-i} = (x_1,...,x_{i-1},x_{i+1},...,x_d)$. According to \cite{sgan}, the Sobolev IPM in Eq.~\eqref{sipm} has the following equivalent form:
\begin{equation}
\small
\label{sobolev-ipm-dual}
\mathcal{S}_{\mu}(P_r, P_g) =  \frac{1}{n} \sqrt{\mathbb{E}_{x\sim \mu} \sum_{i=1}^n \bigg ( \frac{D^{-i} F_{P_r}(x) -  D^{-i} F_{P_g}(x)}{\mu(x)} \bigg )^2}.
\end{equation}
Note that, for each $i$, $D^{-i} F_{P}(x)$ is the cumulative distribution of the variable $X_i$ given the other variables $X^{-i} = x^{-i}$ weighted by the density function of $X^{-i}$ at $x^{-i}$, \emph{i.e.}, 
\begin{equation}
\label{conditional}
\begin{aligned}
D^{-i} F_{P}(x) = P_{[X^{-i}]}(x^{-i}) F_{P_{[X_i|X^{-i}=x^{-i}]}}(x_i).
\end{aligned}
\end{equation}
Thus, the Sobolev IPM can be seen as a comparison of coordinate-wise conditional CDFs. 
Furthermore, \cite{sgan} also proves that the optimal critic $f^*$ in SGAN holds the following property:
\begin{equation}
\small
\label{sobolev-f}
\nabla_x f^*(x)= \frac{1}{n \mathcal{S}_{\mu}(P_r,P_g)} \frac{D^- F_{P_g}(x)- D^-F_{P_r}(x)}{\mu(x)}.
\end{equation}

\section{Sobolev duality of Wasserstein distance}

\subsection{Sobolev duality}
\label{section-sdual}

Let $x_i$ and $x_j$ be two points in $\mathbb{R}^n$. The linear interpolation between $x_i$ and $x_j$ can be written as $x = t x_i + (1-t)x_j$ with $0 \le t \le 1$. Regarding $x$ as a random variable on the line between $x_i$ and $x_j$, we can then define its probability distribution as $\mu^{x_i,x_j}(x)$, which we will later use as the dominant measure for Sobolev space. Formally, let $t$ be the random variable that follows the uniform distribution U$[0, 1]$. Then $\mu^{x_i,x_j}(x)$ can be written as:
\begin{equation}
\label{sample-mu}
    \mu^{x_i,x_j}(x) = 
    \begin{cases}
    \dfrac{1}{\Vert x_i - x_j \Vert}, & x = t x_i + (t-1)x_j,
    \vspace{4pt}
    \cr
    0, &otherwise.
    \end{cases}
\end{equation}
With the above defined notation, we propose our new dual form of Wasserstein distance as follows, which we call \emph{Sobolev duality}\footnote{We provide the proofs of \emph{Sobolev duality} and Proposition \ref{grad-sdual} in Appendix.}
\begin{equation}
\label{s-dual}
\begin{aligned}
    W&(P_r,P_g) = {\sup}_{f} \, \Big \{\mathbb{E}_{x \sim P_r} [f(x)] - \mathbb{E}_{x \sim P_g}  [f(x)] \Big \}, \\
    &\emph{s.t.  } \, f \in \mathcal{F}_S (\mathcal{X}, \mu ^{x_i,x_j}), \,\, \forall x_i \sim P_r, \forall x_j \sim P_g,
\end{aligned}
\end{equation}
where 
$\mathcal{F}_S (\mathcal{X}, \mu ^{x_i,x_j})$ denotes 
the Sobolev unit ball of 
\begin{equation}
\label{sobolev_unit_ball}
    \Vert f \Vert _{W^{1,2}_0 (\mathcal{X}, \mu ^{x_i,x_j})} = \sqrt{ \int _\mathcal{X} \Vert \nabla _x f(x) \Vert ^2 \mu ^{x_i,x_j} (x) dx } \leq 1.
\end{equation}
Note that the support of $\mu ^{x_i,x_j} (x)$ is the straight line between $x_i$ and $x_j$. Thus $\Vert f \Vert _{W^{1,2} (\mathcal{X}, \mu ^{x_i,x_j})}$ is the square root of the path integral of squared gradient norms from $x_i$ to $x_j$. In other words, the constraint is restricting the gradient integral on each line between $P_r$ and $P_g$ to be no more than 1. 

Corresponding to Proposition~\ref{grad-direc} of KR duality, we highlight the following property of the proposed Sobolev duality: 
\begin{proposition}
\label{grad-sdual}
    Let $\pi^*$ be the optimal coupling in Eq.~\eqref{emd}, then the optimal function $f^*$ in Sobolev duality Eq.~\eqref{s-dual} satisfies that: let $x_t = tx_i + (1-t)x_j$ with $0 \leq t \leq 1$, if $f^*$ is differentiable and $\pi^*(x, x)=0$ for all $x$, then it holds that $P_{(x_i, x_j)\sim \pi^*}[\nabla f^* (x_t)=\frac{x_i-x_j}{\Vert x_i-x_j \Vert}]=1$. 
\end{proposition}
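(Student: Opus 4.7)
The plan is to exploit the fact that, under the constraint $f \in \mathcal{F}_S(\mathcal{X}, \mu^{x_i,x_j})$, the Sobolev semi-norm reduces to a one-dimensional path integral of $\|\nabla f\|^2$ along the segment joining $x_i$ and $x_j$, and then to tighten the standard Kantorovich-type upper bound using Cauchy--Schwarz and Jensen. Concretely, parametrize the segment by $x_t = t x_i + (1-t) x_j$ with $t \in [0,1]$. Because $\mu^{x_i,x_j}$ is supported on this segment with density $1/\|x_i - x_j\|$, and the arclength element is $\|x_i - x_j\|\, dt$, the constraint \eqref{sobolev_unit_ball} becomes
\begin{equation*}
\int_0^1 \|\nabla f(x_t)\|^2 \, dt \;\le\; 1 \quad \text{for every } x_i \sim P_r,\ x_j \sim P_g.
\end{equation*}

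Next I would bound the objective via the optimal coupling $\pi^*$. Since $\pi^*$ has marginals $P_r$ and $P_g$, we can rewrite
\begin{equation*}
\mathbb{E}_{P_r}[f] - \mathbb{E}_{P_g}[f] = \mathbb{E}_{(x_i,x_j)\sim \pi^*}\bigl[f(x_i) - f(x_j)\bigr],
\end{equation*}
and by the fundamental theorem of calculus along the segment,
\begin{equation*}
f(x_i) - f(x_j) = \int_0^1 \nabla f(x_t) \cdot (x_i - x_j)\, dt.
\end{equation*}
Two applications of Cauchy--Schwarz then yield, for each pair $(x_i,x_j)$,
\begin{equation*}
f(x_i) - f(x_j) \;\le\; \|x_i - x_j\| \int_0^1 \|\nabla f(x_t)\|\, dt \;\le\; \|x_i - x_j\| \sqrt{\int_0^1 \|\nabla f(x_t)\|^2\, dt} \;\le\; \|x_i - x_j\|,
\end{equation*}
where the last inequality uses the reformulated Sobolev constraint. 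Taking expectation under $\pi^*$ gives $\mathbb{E}_{P_r}[f] - \mathbb{E}_{P_g}[f] \le W(P_r, P_g)$, which confirms that \eqref{s-dual} is a valid dual and shows that any $f^*$ attaining the supremum must saturate all three inequalities for $\pi^*$-a.e.\ pair $(x_i,x_j)$.

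Finally, I would extract the gradient characterization by reading off the equality conditions. Tightness of the first Cauchy--Schwarz forces $\nabla f^*(x_t)$ to be a nonnegative multiple of $(x_i - x_j)/\|x_i - x_j\|$ for almost every $t$; tightness of the second (Jensen-type) Cauchy--Schwarz forces $\|\nabla f^*(x_t)\|$ to be constant in $t$ along the segment; and tightness of the Sobolev constraint then forces that constant to equal $1$. Under the assumptions that $f^*$ is differentiable and $\pi^*(x,x)=0$ (so $\|x_i - x_j\| > 0$ $\pi^*$-a.s., ensuring the direction $(x_i - x_j)/\|x_i - x_j\|$ is well defined), this gives $\nabla f^*(x_t) = (x_i - x_j)/\|x_i - x_j\|$ with $\pi^*$-probability one, matching the claim.

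The main obstacle I expect is the bookkeeping around almost-sure versus pointwise statements: the equality conditions only guarantee the gradient identity on $\pi^*$-a.e.\ segment and $dt$-a.e.\ $t$, so one must invoke continuity of $\nabla f^*$ (from the differentiability assumption) to upgrade to a statement that holds for every $t \in [0,1]$ along a $\pi^*$-a.s.\ chosen segment, and to carefully handle the measure-zero set $\{x_i = x_j\}$ excluded by the hypothesis $\pi^*(x,x) = 0$. A secondary technical point is confirming existence and attainment of the supremum in \eqref{s-dual}, for which one can appeal to the same compactness argument that underlies the Sobolev-duality proof in the appendix.
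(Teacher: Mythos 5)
Your proof is correct and, for the key lemma, takes a genuinely different and cleaner route than the paper. Both arguments agree on the second half (reading off equality conditions from the saturated path integral, Cauchy--Schwarz, and Sobolev ball constraint). Where you diverge is in establishing the intermediate fact that $f^*(x_i) - f^*(x_j) = \|x_i - x_j\|$ holds $\pi^*$-a.s. The paper proves this as a separate Lemma by a sandwich argument across three function classes: it observes that the 1-Lipschitz ball of the KR dual is contained in $\mathcal{F}_S(\mathcal{X},\mu^{x_i,x_j})$, which in turn is contained in the relaxed Lipschitz class of \cite{LGAN}, and then imports the known tightness property from both endpoints. Your argument instead combines the inequality chain
\begin{equation*}
f(x_i)-f(x_j)=\int_0^1 \nabla f(x_t)\cdot(x_i-x_j)\,dt \;\le\; \|x_i-x_j\|\sqrt{\textstyle\int_0^1\|\nabla f(x_t)\|^2\,dt}\;\le\;\|x_i-x_j\|
\end{equation*}
directly with the equality $\sup_f(\mathbb{E}_{P_r}[f]-\mathbb{E}_{P_g}[f])=\mathbb{E}_{\pi^*}\|x_i-x_j\|$ (the Sobolev duality theorem itself), so that optimality of $f^*$ forces pointwise saturation $\pi^*$-a.s. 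This is more self-contained: it does not appeal to the external results of \cite{krdual} and \cite{LGAN}, and it recycles the exact same inequality chain used later for the equality analysis, making the whole proposition a single argument rather than Lemma plus Theorem. The one thing you leave implicit --- which you correctly flag --- is that the supremum in Eq.~\eqref{s-dual} actually \emph{equals} $W_1$ (not just $\le$); this is the Sobolev duality theorem proved separately in the appendix, and without it the "attaining $f^*$ saturates the chain" step would not follow. Two small remarks: what you call a Jensen-type inequality at $\int_0^1\|\nabla f\| \le (\int_0^1\|\nabla f\|^2)^{1/2}$ is really Cauchy--Schwarz against the constant function $1$ (the paper labels it as such); and the upgrade from "$dt$-a.e.\ $t$" to "all $t$" technically requires continuity of $\nabla f^*$, which mere differentiability of $f^*$ does not guarantee --- but this regularity gap is present in the paper's own proof as well, so it is not a defect specific to your attempt.
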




That is, with Sobolev duality, the gradient direction for every fake datum is the same as WGAN. Hence, enforcing the Sobolev duality constraint on discriminator can be an effective alternative of the Lipschitz condition to guarantee a stable training for GAN model. 



\subsection{Relation to other metrics}
\label{wgan-sec}

\textbf{Relation to KR duality in Eq.~(\ref{krdual}).}
As indicated by Proposition~\ref{grad-sdual}, the optimal critic $f^*$ of Sobolev duality actually holds the same gradient property as KR duality in Proposition~\ref{grad-direc}.
However, as clarified below, the constraint in Sobolev duality is indeed looser than KR duality, which would potentially benefit the optimization. 

In the classic KR duality
, $f$ is restricted under Lipschitz condition, \textit{i.e.}, the gradient norms of \textit{all points} in the metric space are enforced to no more than $1$. By contrast, in our Sobolev duality, we restrict \textit{the integral} of squared gradient norms \textit{over each line} between $P_r$ and $P_g$. This implies that Lipschitz continuity is a sufficient condition of the constraint in Sobolev duality. In summary, Sobolev duality is a generalization of KR duality where the constraint is relaxed, while still keeps the same property of training stability. 

\textbf{Relation to Sobolev IPM in Eq.~(\ref{sipm}).}
We now clarify the difference between Sobolev IPM in Eq.~\eqref{sipm} and Sobolev duality of Wasserstein distance in Eq.~\eqref{s-dual}. In the former metric, when implementing $\mu_{gp}$ (defined in Section \ref{sobolev-gan}), the \textit{total integral} of squared gradient norms on all interpolation lines between $P_r$ and $P_g$ is enforced to no more than $1$; while in the latter metric, 
the integral \textit{over each interpolation line} between $P_r$ and $P_g$ is restricted. Therefore, Sobolev duality enforces stronger constraint than Sobolev IPM. 

However, we should also note that the stronger constraint is necessary to ensure the favorable gradient property in Proposition~\ref{grad-sdual}. By contrast, as shown in Eq.~\eqref{sobolev-f}, Sobolev IPM measures coordinate-wise conditional CDF, which cannot always provide gradients as good as the optimal transport plan in Wasserstein distance. A toy example is provided in Appendix
to show the case that Sobolev IPM is sometimes insufficiently constrained to ensure the convergence.

\section{Sobolev Wasserstein GAN}
\label{section-swgan}

Now we define the GAN model with Sobolev duality, which we name as Sobolev Wasserstein GAN (SWGAN). Formally, SWGAN can be written as:
\begin{equation}
\begin{aligned}
\label{loss}
    \min_G \max_D \mathcal{L}_S&(D_w,G_\theta)
    & = \mathbb{E}_{x \sim P_r}D_w(x) - \mathbb{E}_{z \sim P_z}D_w(G_\theta(z)),
\end{aligned}
\end{equation}
with the constraint that
\begin{align}
    &\mathbb{E}_{x\sim \mu ^{x_i,x_j}}\Vert \nabla _x D_w(x)\Vert ^2 \leq 1, \forall x_i \sim P_r, \forall x_j \sim P_g, 
\end{align}
where $\mu ^{x_i,x_j}$ is the interpolation distribution on lines between pairs of points $x_i$ and $x_j$ as defined in Eq.~\eqref{sample-mu}. 



Let $\Omega_{ij}$ denote $1 - \mathbb{E}_{x\sim \mu ^{x_i,x_j}}\Vert \nabla _x D_w(x)\Vert ^2$, then the constraint is to restrict $\Omega_{ij}$ to be greater than or equal to 0 for all the pairs of $(x_i, x_j)$. Inspired by \cite{sgan}, we define the following Augmented Lagrangian inequality regularization \cite{alm} corresponding to SWGAN ball constraints:
\begin{equation}
\label{loss-alm}
\begin{aligned}
    &\mathcal{L}^{(ij)}_{al}(w,\theta,\alpha) = \alpha (\Omega_{ij} - s_{ij}) - \frac{\rho}{2} (\Omega_{ij} - s_{ij})^2, \\
    &\mathcal{L}_{al}(w,\theta,\alpha) = \mathbb{E}_{x_i \sim P_r} \mathbb{E}_{x_j \sim P_g} \mathcal{L}^{(ij)}_{al}(w,\theta,\alpha).
\end{aligned}
\end{equation}
where $\alpha$ is the Lagrange multiplier, $\rho$ is the quadratic penalty weight and $s_{ij}$ represents the slack variables. Practically, $s_{ij}$ is directly substituted by its optimal solution: 
\begin{equation}
    s^*_{ij} = \max\Big\{\Omega_{ij}-\frac{\alpha}{\rho}\,,\,0\Big\}.
\end{equation}
As in \cite{wgan} and \cite{sgan}, the regularization term in Eq.~\eqref{loss-alm} is added to the loss only when training the critic. To be more specific, the training process is: given the generator parameters $\theta$, we train the discriminator by maximizing $\mathcal{L}_S + \mathcal{L}_{al}$; then given the discriminator parameters $w$, we train the generator via minimizing $\mathcal{L}_S$. We leave the detailed training procedure in Appendix.
\section{Experiments}
\label{experiment}

We tested SWGAN on both synthetic density modeling and real-world image generation task.

\begin{figure*}[!t]
	\centering
	\begin{subfigure}{0.11\linewidth}
		\includegraphics[width=0.99\columnwidth]{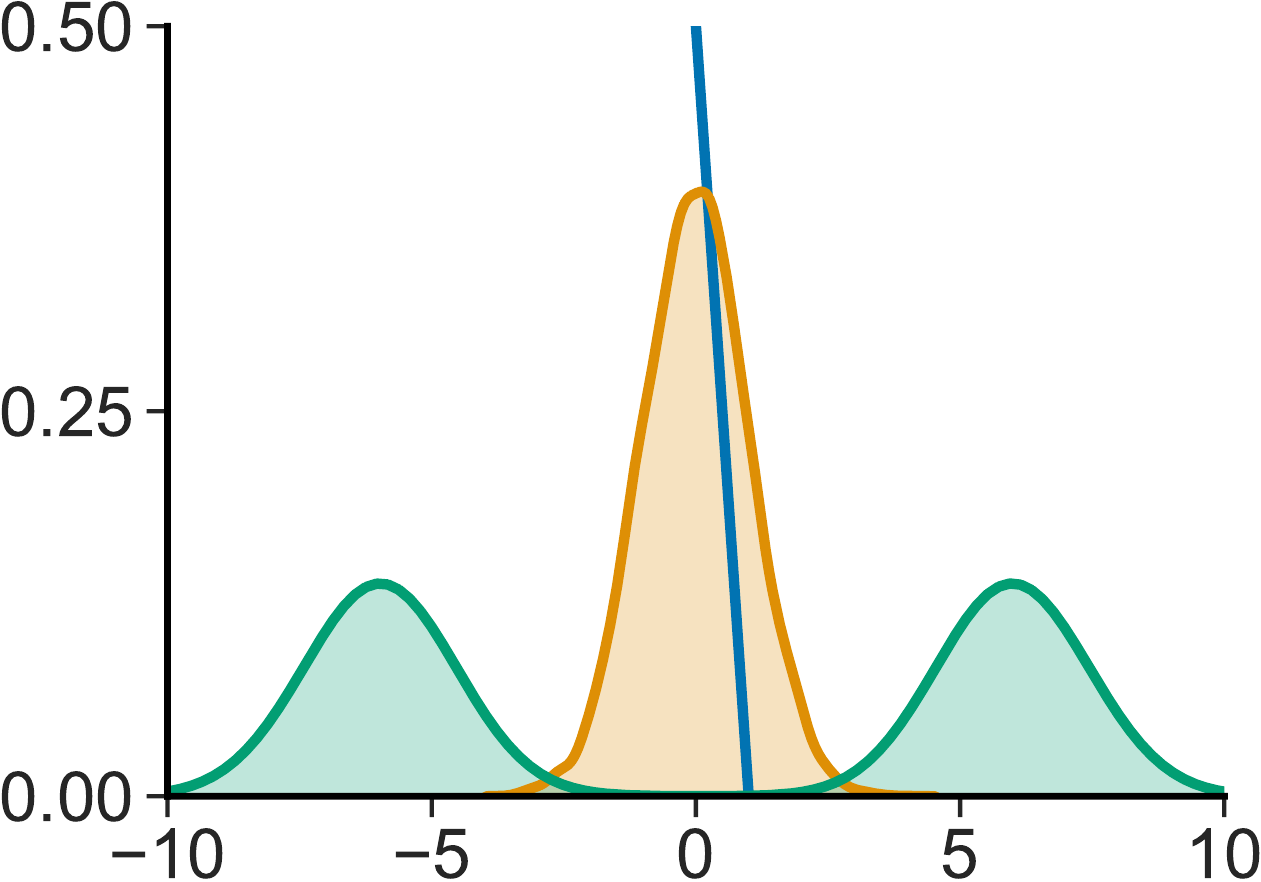}
		\vspace{-5pt}
		\label{1}
	\end{subfigure}	
	\begin{subfigure}{0.11\linewidth}
		\includegraphics[width=0.99\columnwidth]{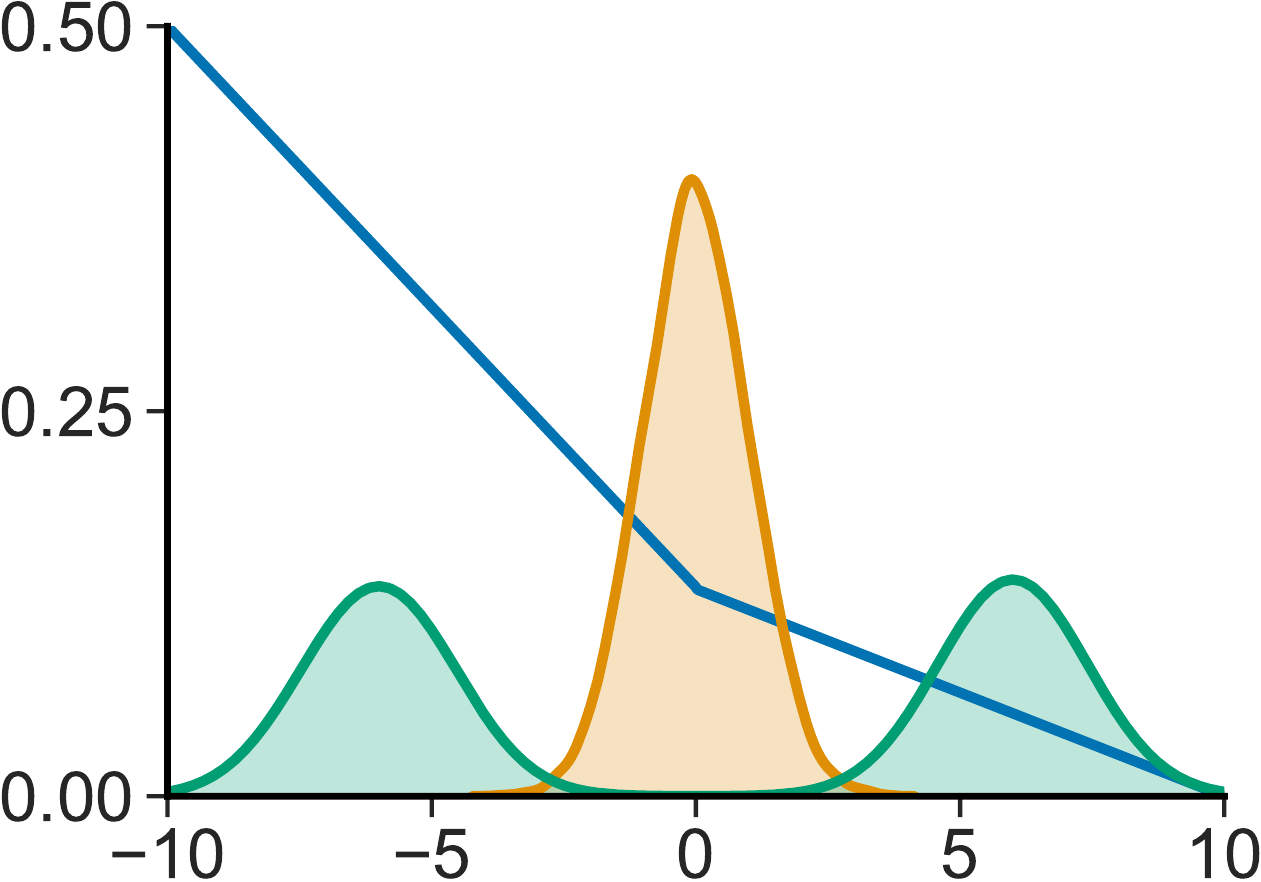}
		\vspace{-5pt}
		\label{1-1}
	\end{subfigure}	
	\begin{subfigure}{0.11\linewidth}
		\includegraphics[width=0.99\columnwidth]{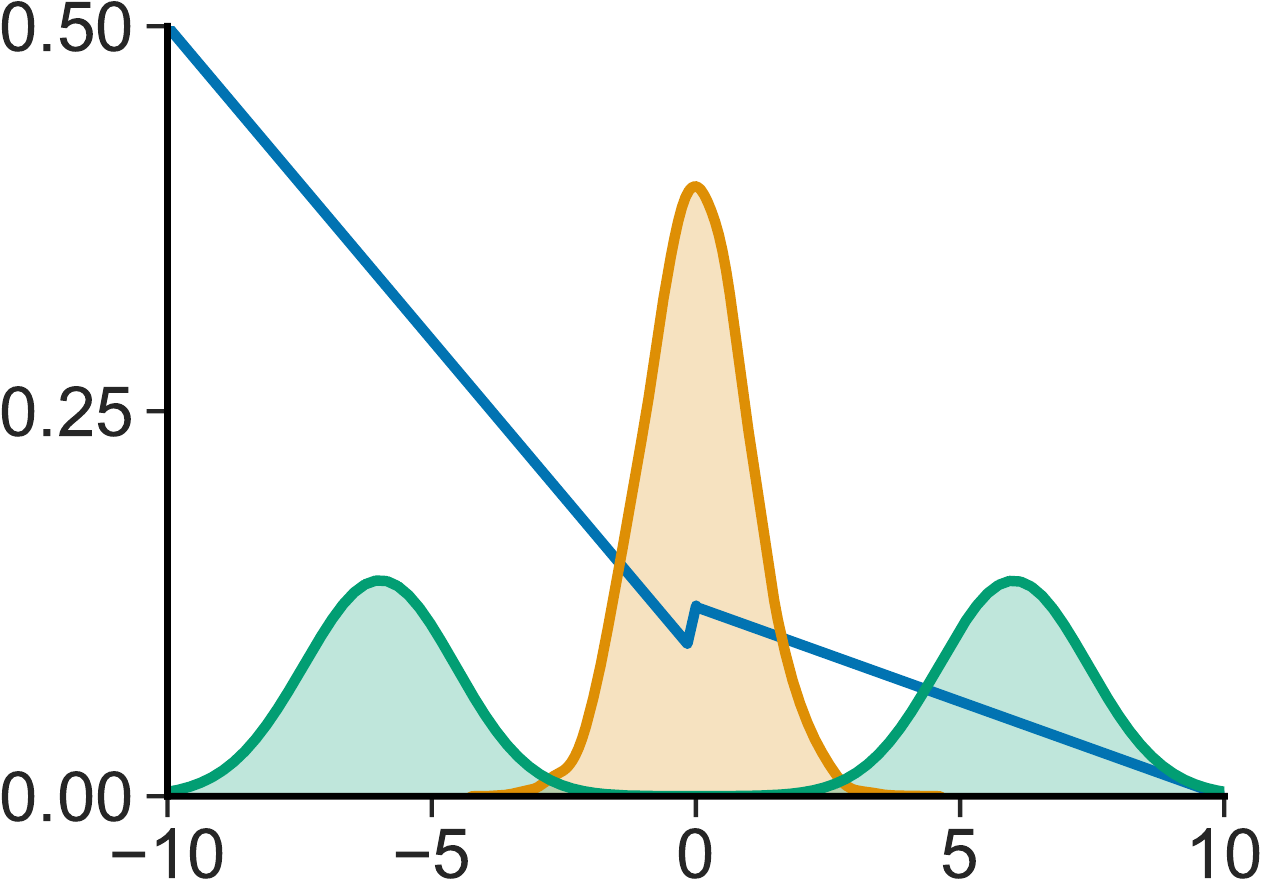}
		\vspace{-5pt}
		\label{2}
	\end{subfigure}
	\begin{subfigure}{0.11\linewidth}
		\includegraphics[width=0.99\columnwidth]{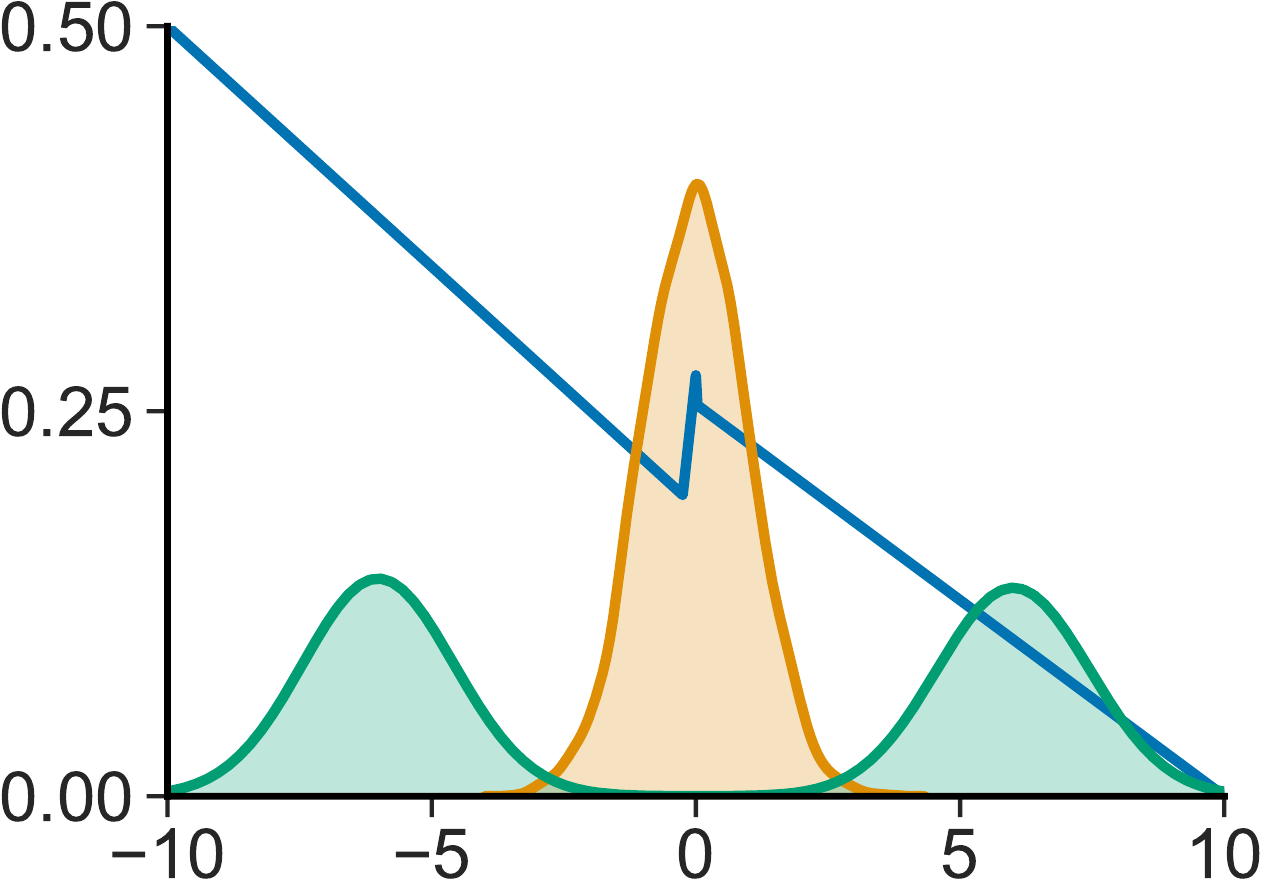}
		\vspace{-5pt}
		\label{2-1}
	\end{subfigure}	
	\begin{subfigure}{0.11\linewidth}
		\includegraphics[width=0.99\columnwidth]{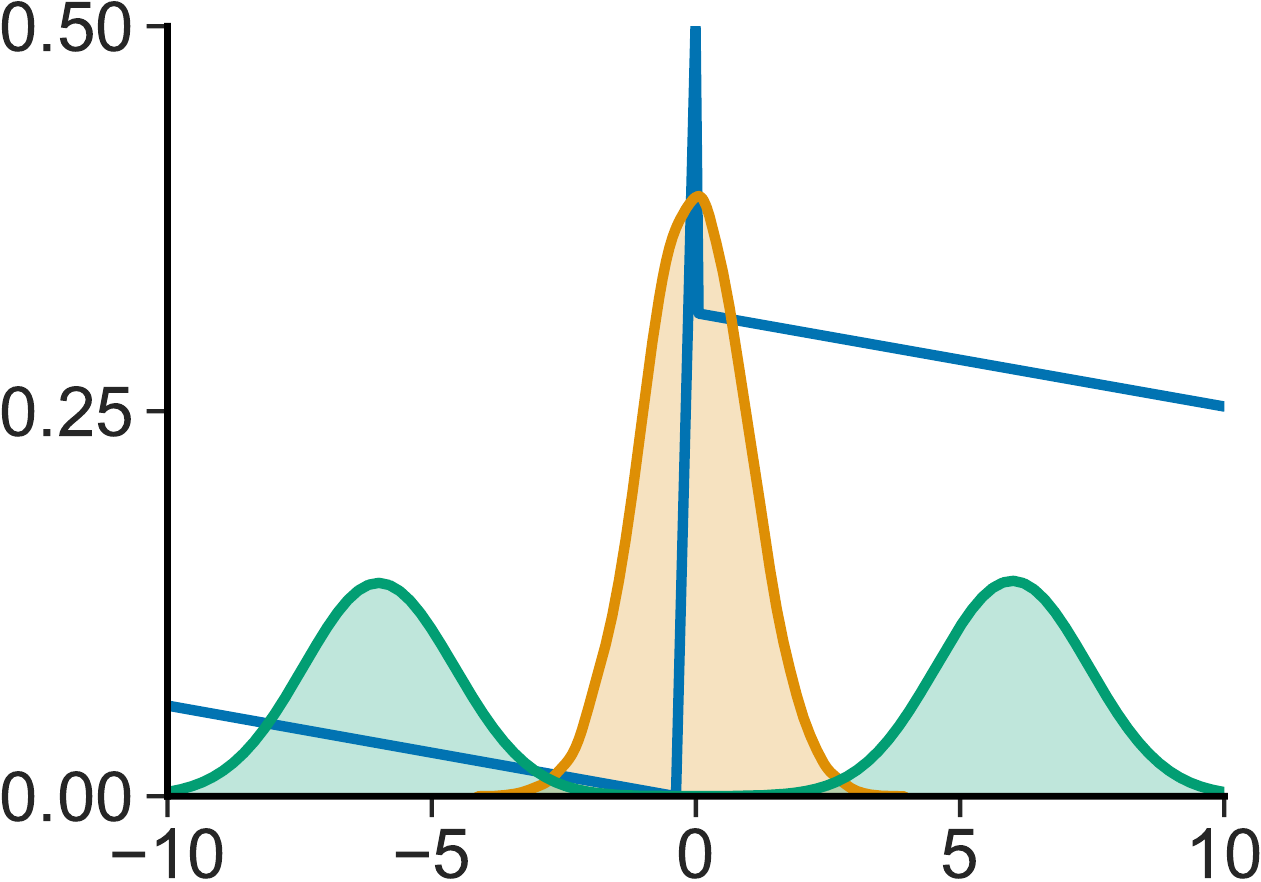}
		\vspace{-5pt}
		\label{3}
	\end{subfigure}
	\begin{subfigure}{0.11\linewidth}
		\includegraphics[width=0.99\columnwidth]{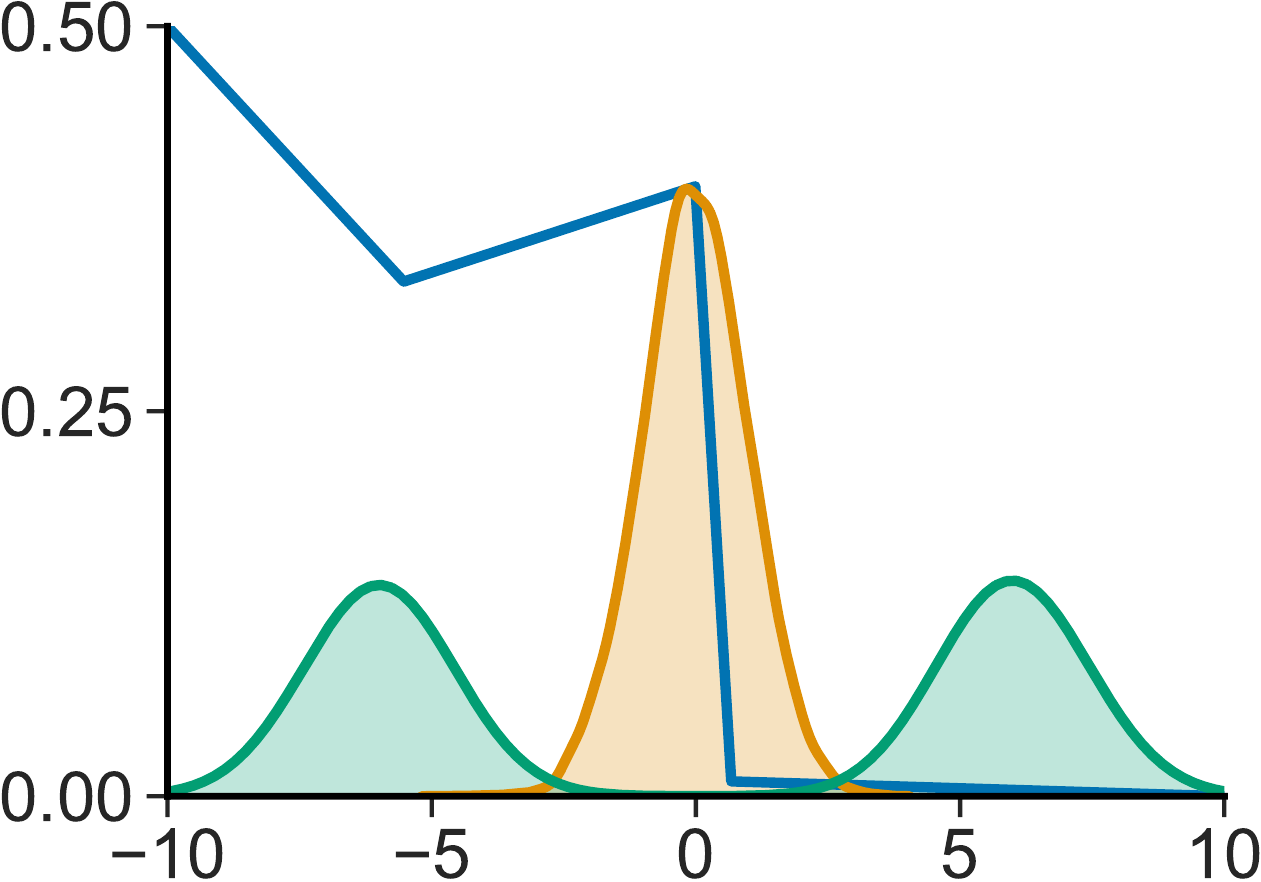}
		\vspace{-5pt}
		\label{4}
	\end{subfigure}
	\begin{subfigure}{0.11\linewidth}
		\includegraphics[width=0.99\columnwidth]{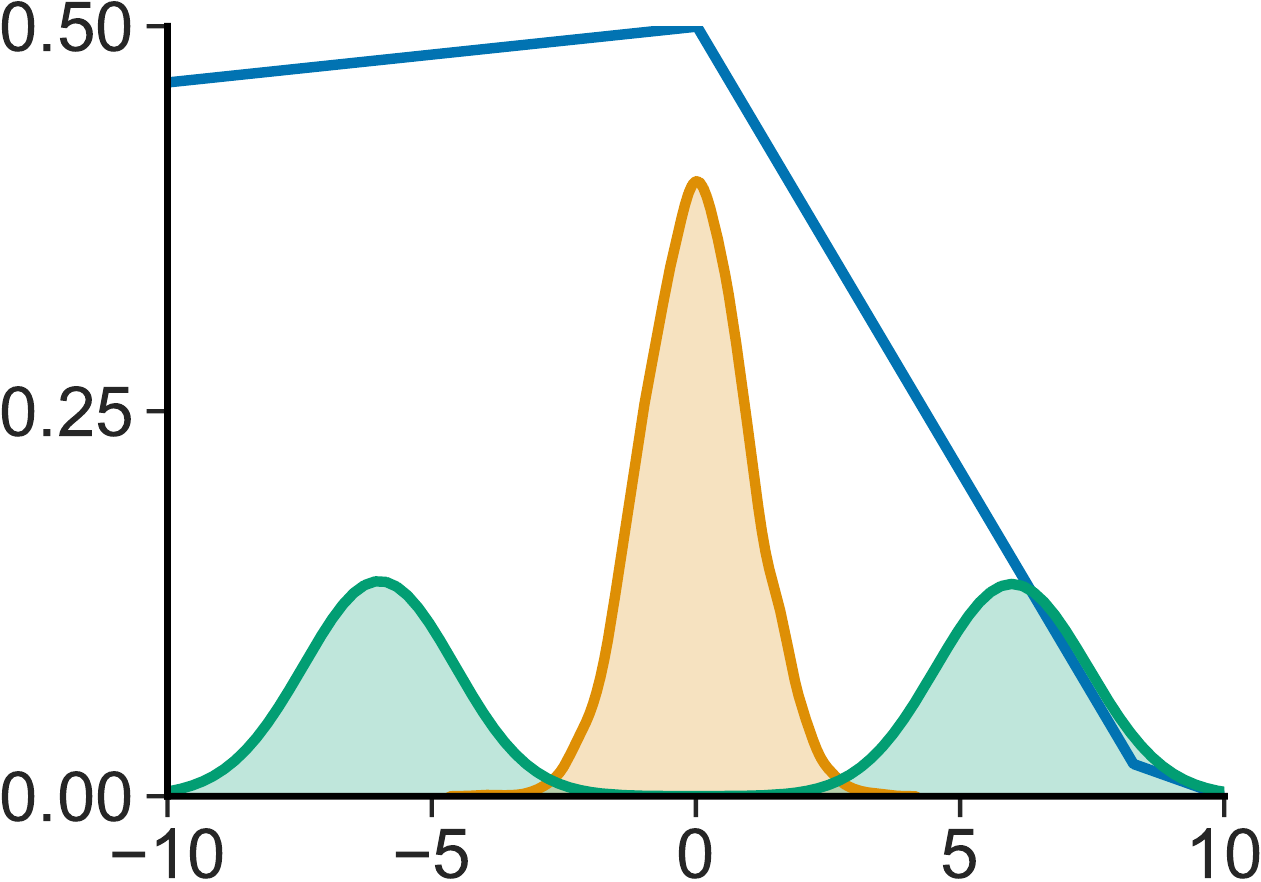}
		\vspace{-5pt}
		\label{5}
	\end{subfigure}
	\begin{subfigure}{0.11\linewidth}
		\includegraphics[width=0.99\columnwidth]{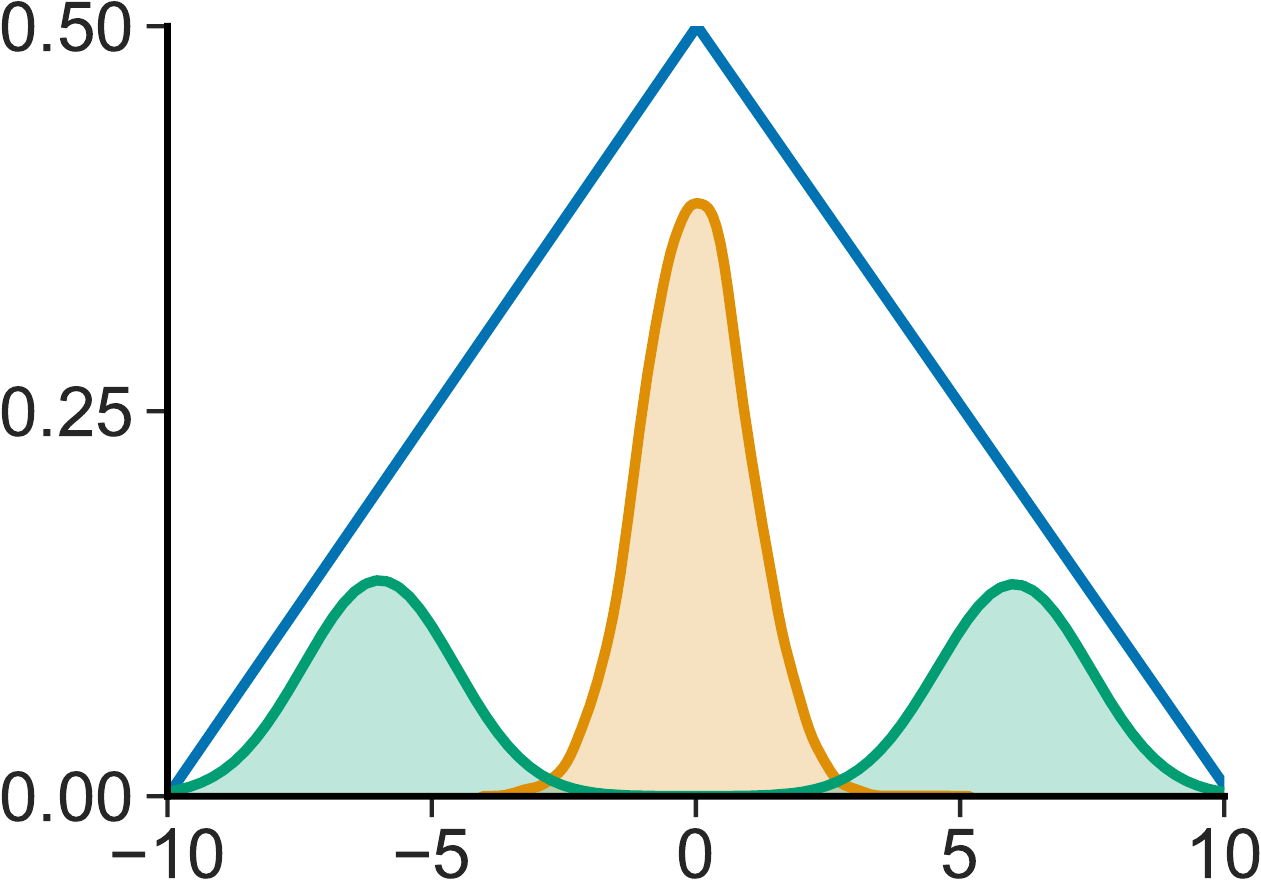}
		\vspace{-5pt}
		\label{6}
	\end{subfigure}	
	
	\begin{subfigure}{0.11\linewidth}
		\includegraphics[width=0.99\columnwidth]{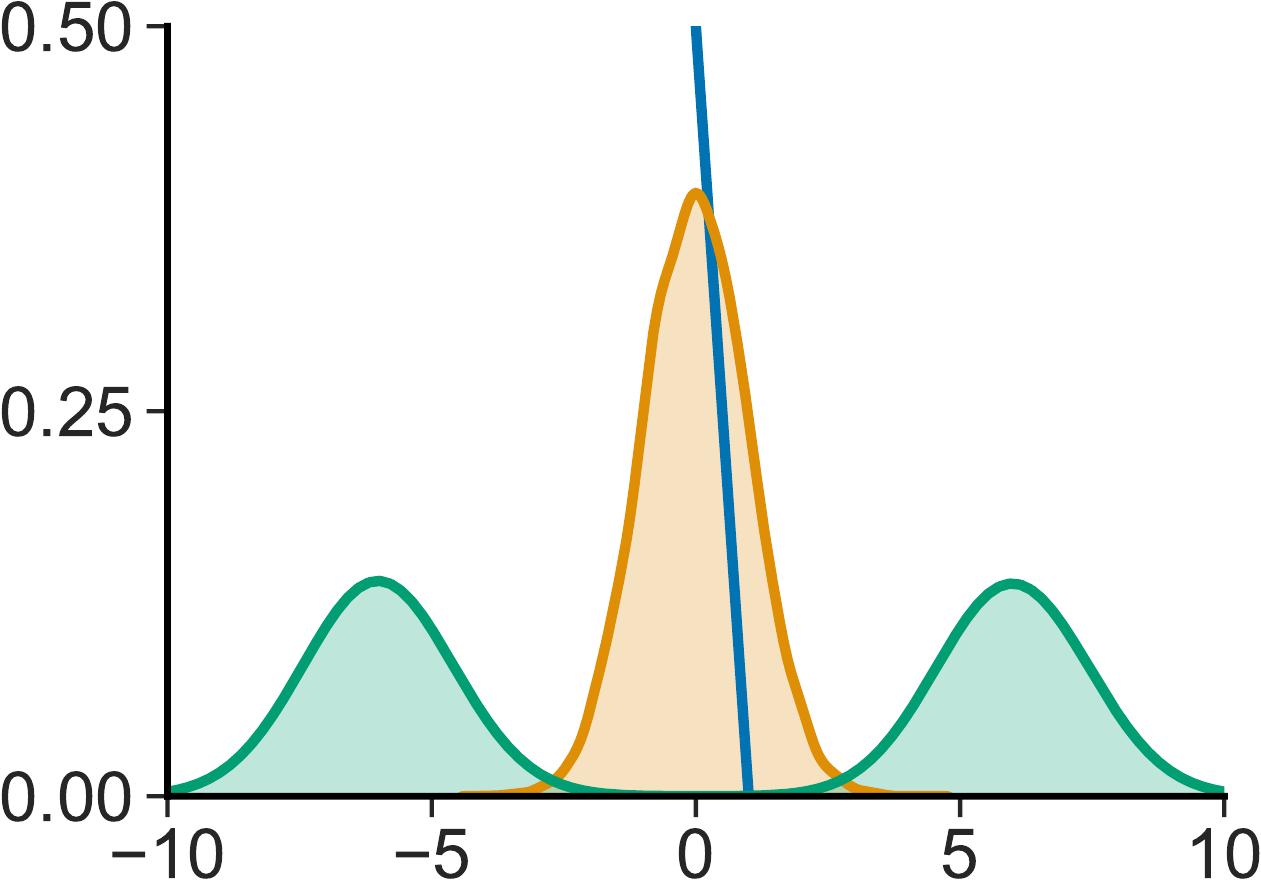}
		\vspace{-10pt}
		\label{7}
	\end{subfigure}
	\begin{subfigure}{0.11\linewidth}
		\includegraphics[width=0.99\columnwidth]{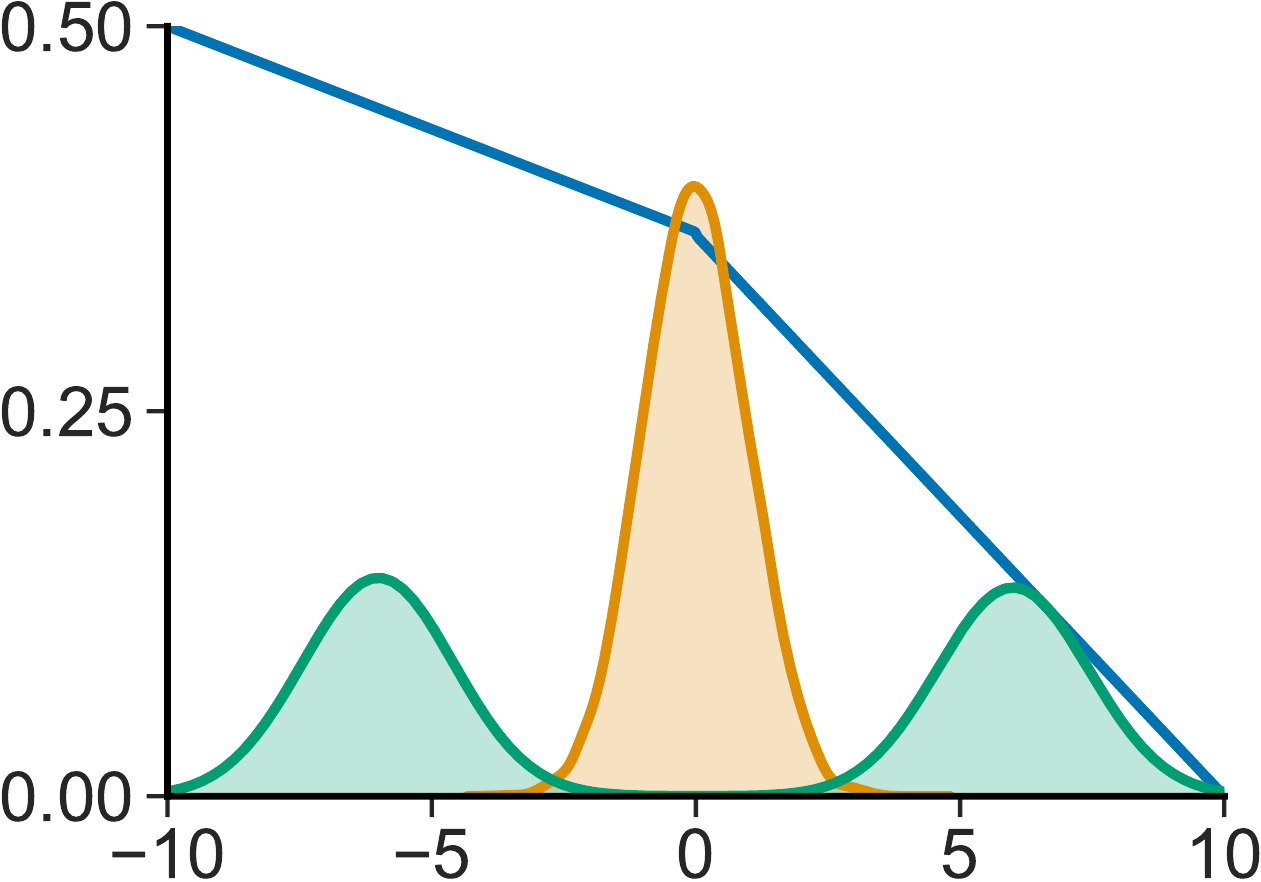}
		\vspace{-10pt}
		\label{7-1}
	\end{subfigure}	
	\begin{subfigure}{0.11\linewidth}
		\includegraphics[width=0.99\columnwidth]{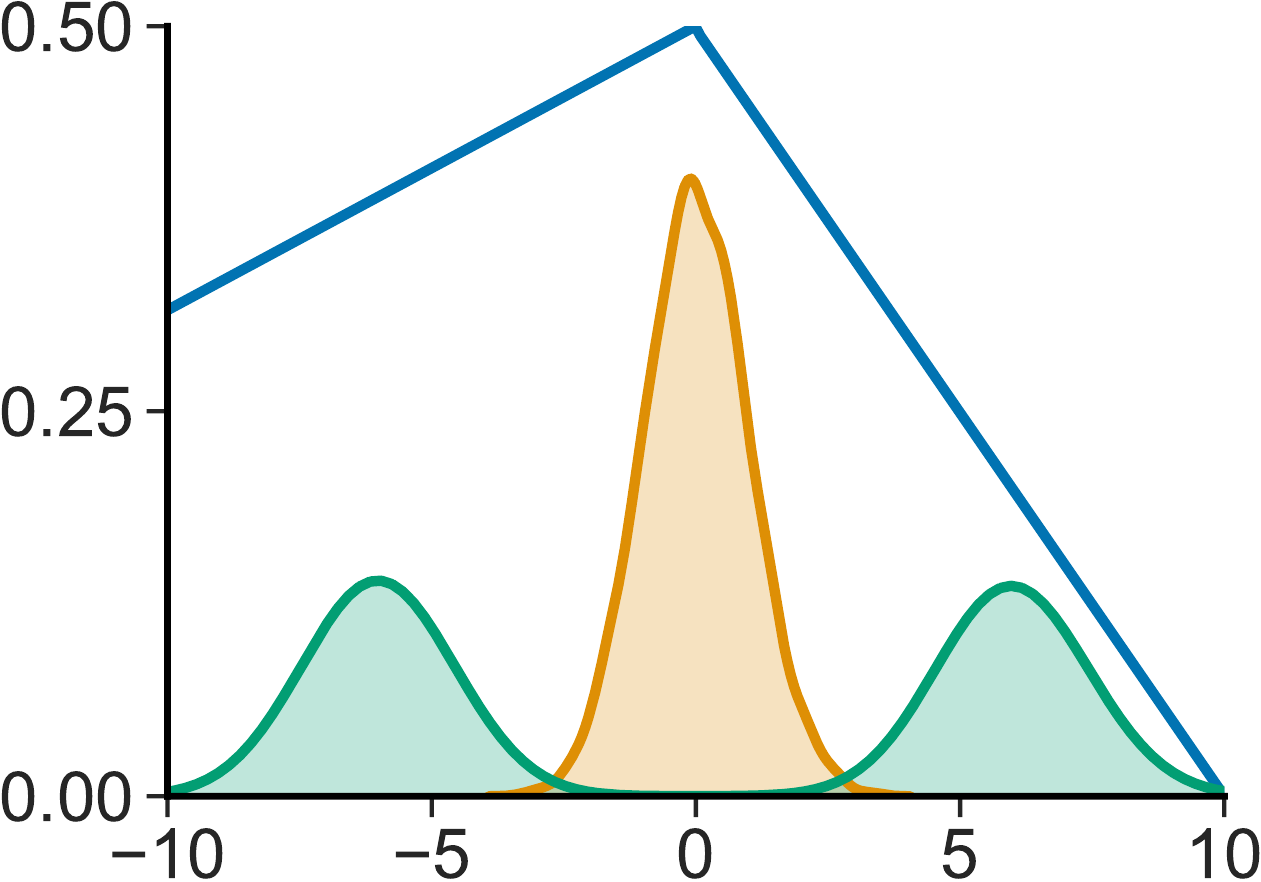}
		\vspace{-10pt}
		\label{8}
	\end{subfigure}
	\begin{subfigure}{0.11\linewidth}
		\includegraphics[width=0.99\columnwidth]{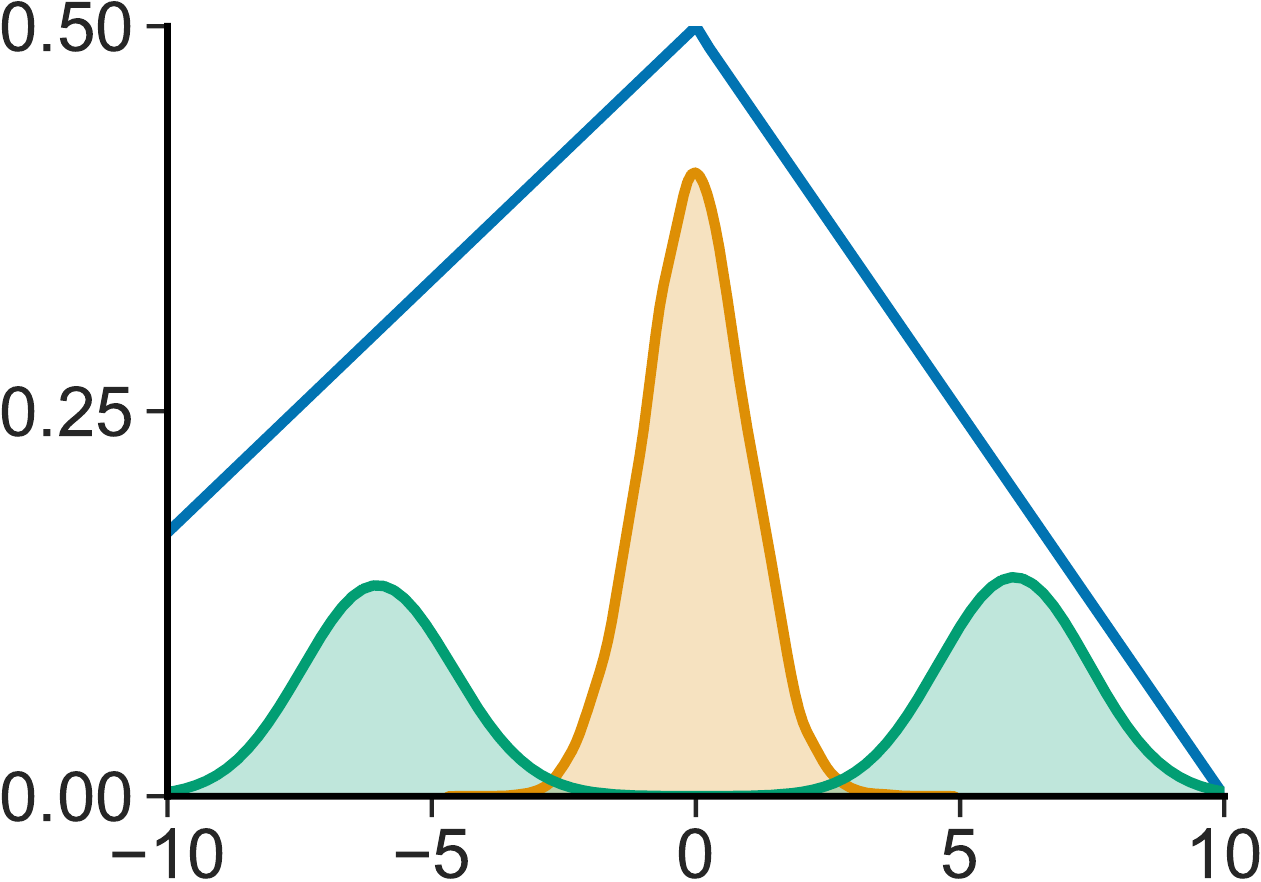}
		\vspace{-10pt}
		\label{8-1}
	\end{subfigure}	
	\begin{subfigure}{0.11\linewidth}
		\includegraphics[width=0.99\columnwidth]{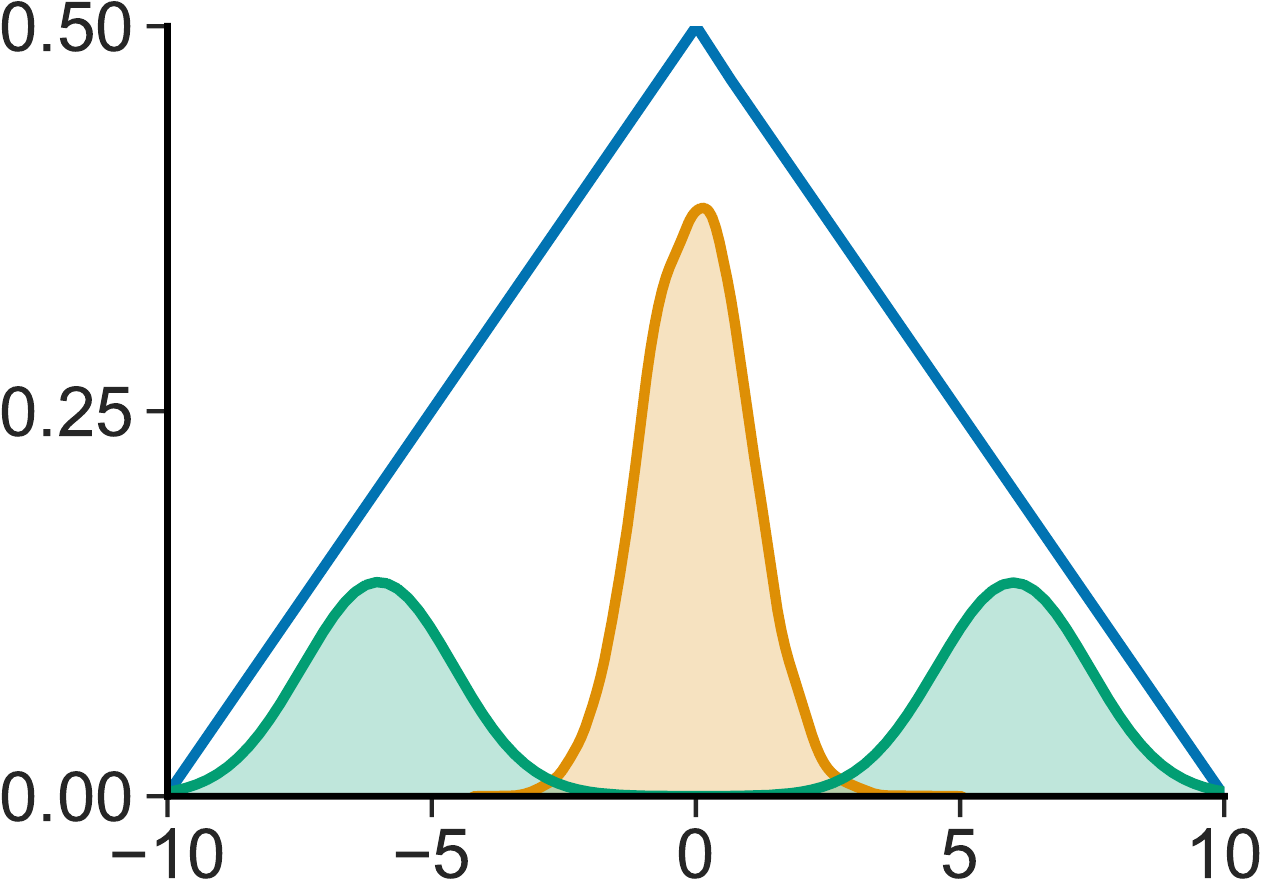}
		\vspace{-10pt}
		\label{9}
	\end{subfigure}
	\begin{subfigure}{0.11\linewidth}
		\includegraphics[width=0.99\columnwidth]{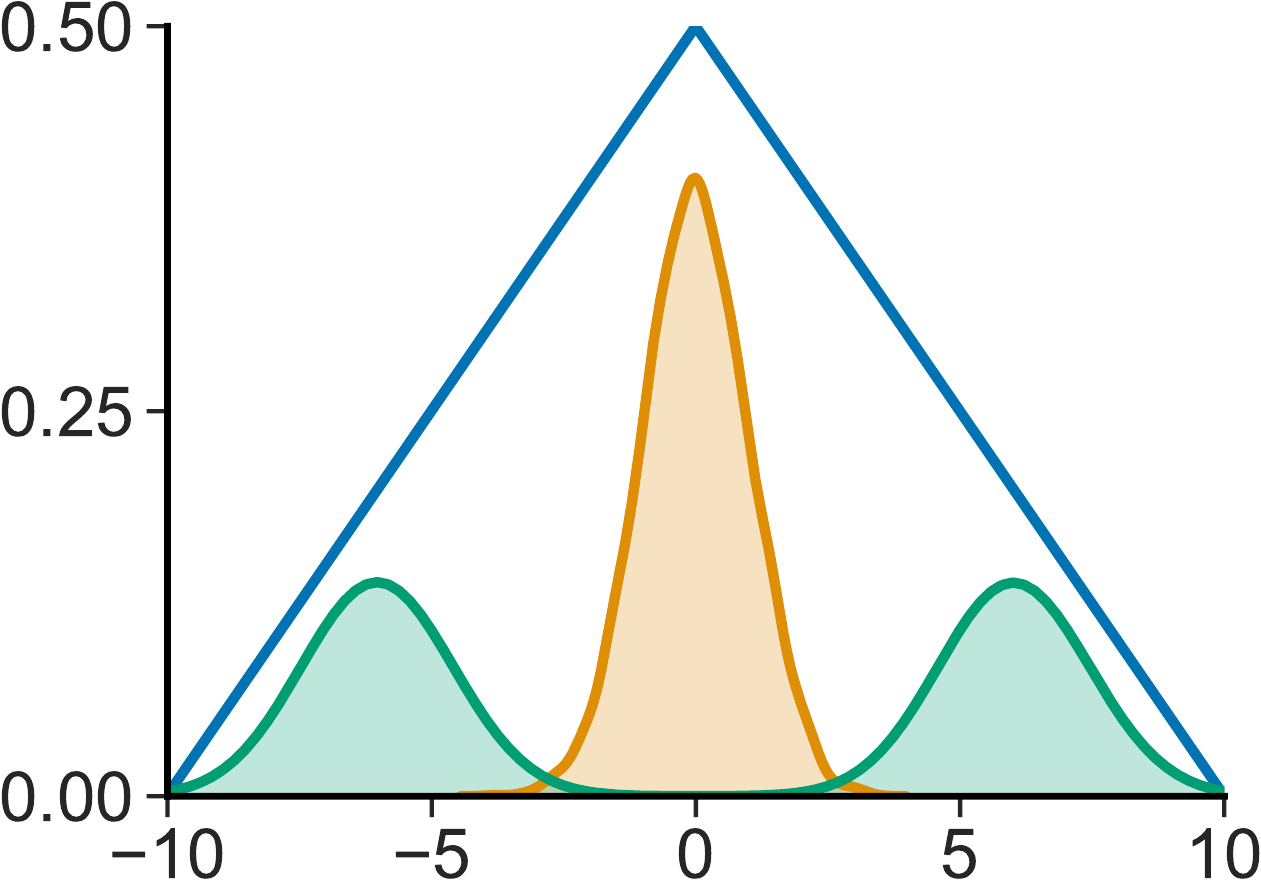}
		\vspace{-10pt}
		\label{10}
	\end{subfigure}
	\begin{subfigure}{0.11\linewidth}
		\includegraphics[width=0.99\columnwidth]{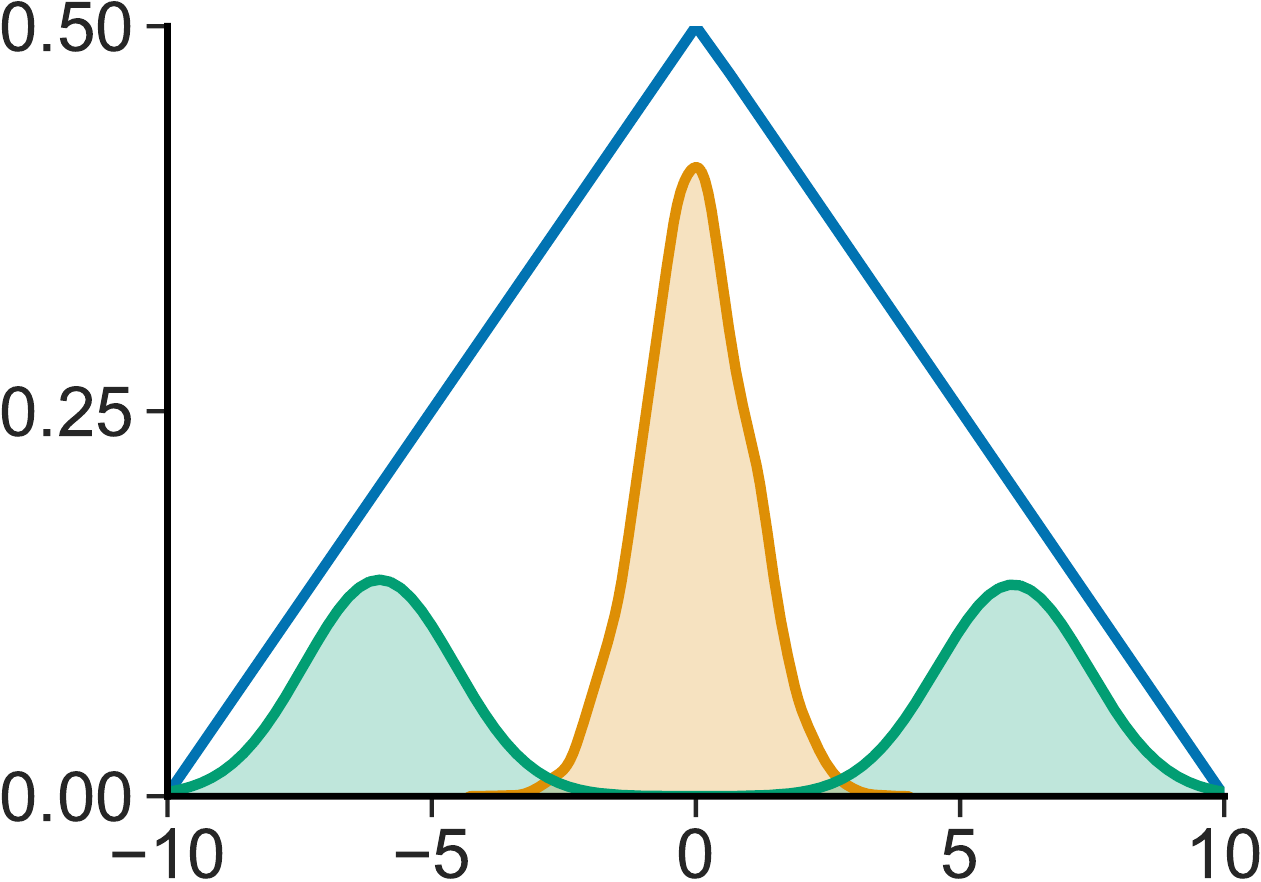}
		\vspace{-10pt}
		\label{11}
	\end{subfigure}
	\begin{subfigure}{0.11\linewidth}
		\includegraphics[width=0.99\columnwidth]{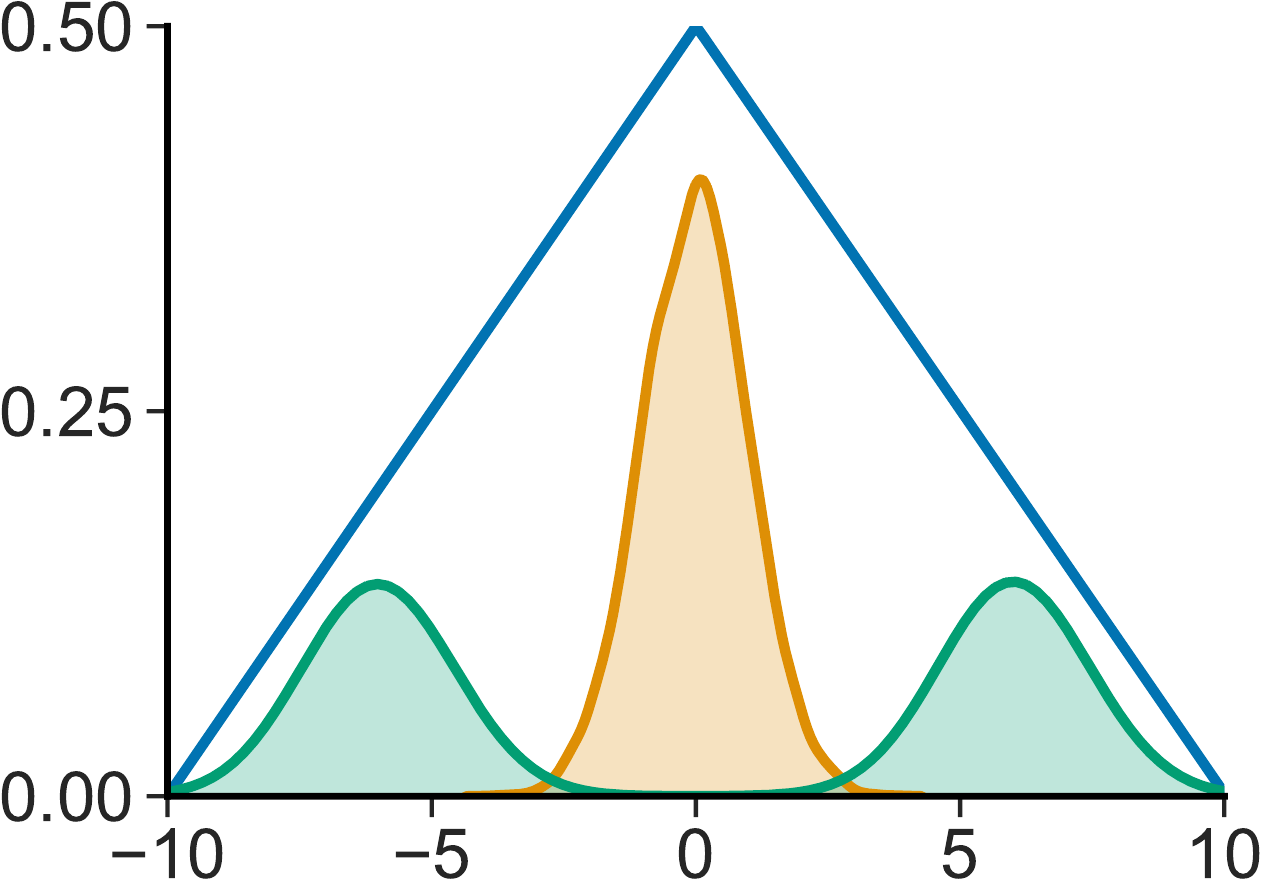}
		\vspace{-10pt}
		\label{12}
	\end{subfigure}
	\caption{1D Training comparison. Upper: WGAN. Lower: SWGAN. Orange: real data, sampled from $N(0,1)$. Green: fake data, sampled from $\frac{1}{2}(N(-5,1)+N(5,1))$. Blue: the re-scaled critic, which is normalized to $[0, 0.5]$. From left to right, different columns correspond to iteration $0, 30, 60, 90, 120, 180, 240, 300$ respectively. 
	The critic of SWGAN holds a faster and smoother convergence. 
	}
	\label{fig:1d_gan}
\end{figure*}

\subsection{Synthetic density modeling} 


\textbf{1D distribution modeling.}
Displaying the level sets is a standard qualitative approach to evaluate the learned critic function for two-dimensional data sets \cite{wgan-gp,kodali2017train,wgan-lp}. Here, we consider both real data distribution $P_r$ and generated fake distribution $P_g$ are fixed simple one-dimensional Gaussian distributions. Our goal is to investigate whether the critics of both WGAN and SWGAN can be efficiently optimized to provide the favorable gradients presented in Proposition \ref{grad-direc} and \ref{grad-sdual}.
We observed that while both critics can be trained to the theoretical optimum, the latter one can always enjoy a faster convergence compared with the former one. An empirical example is visualized in Fig.~\ref{fig:1d_gan}. As shown here, with the same initial states and hyper-parameters, the critic of SWGAN holds a faster and smoother convergence towards the optimal state. This meaningful observation verifies our conjecture that a larger function space of the critic would benefit the training. 


\begin{figure}[!b] 
	\centering
	\begin{subfigure}{0.25\linewidth}
		\centering
		\includegraphics[width=0.99\columnwidth]{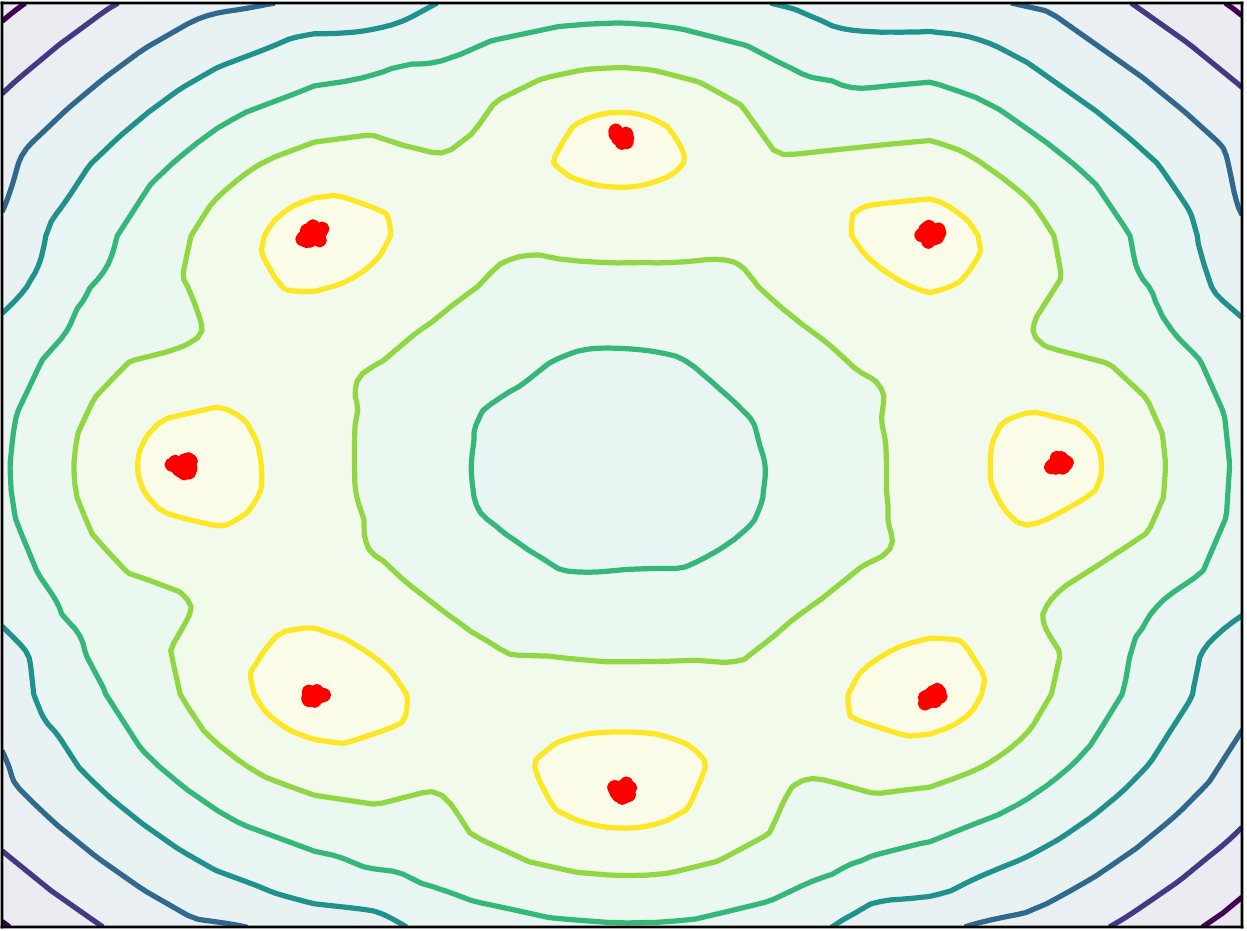}
		\caption{8 Gaussians}
		\label{fig:swgan-al}
	\end{subfigure}
	\hspace{2pt}
	\begin{subfigure}{0.25\linewidth}
		\centering
		\includegraphics[width=0.99\columnwidth]{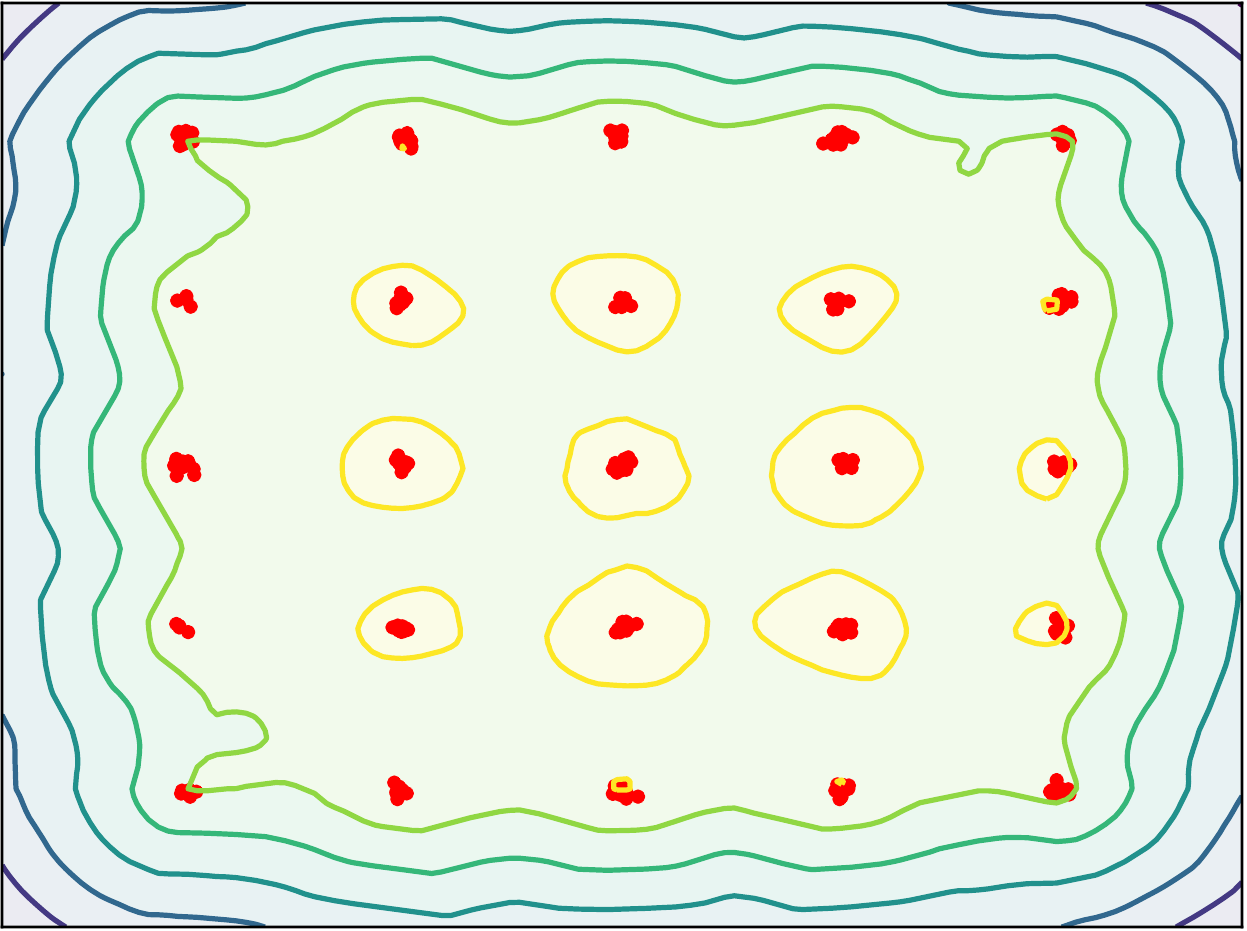}
		\caption{25 Gaussians}
		\label{fig:wgan-al}
	\end{subfigure}	
	\hspace{2pt}
	\begin{subfigure}{0.25\linewidth}
		\centering
		\includegraphics[width=0.99\columnwidth]{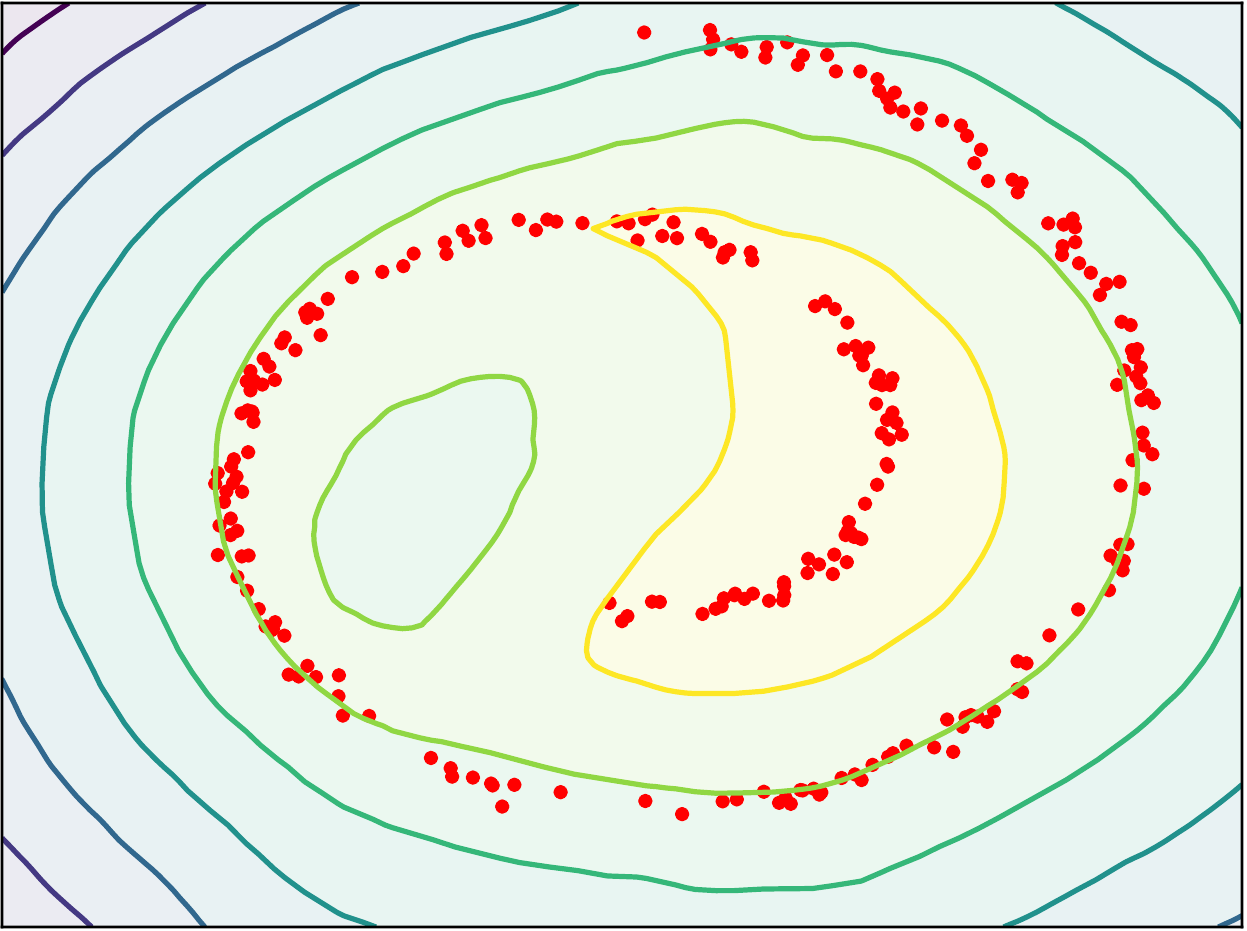}
		\caption{Swiss Roll}
		\label{fig:wgan-gp}
	\end{subfigure}	
	\vspace{-5pt}
	\caption{Level sets of SWGANs critic. Yellow corresponds to high values and purple to low. The training samples are indicated in red and the generated distribution is fixed at the real data plus Gaussian noise.}
	\label{fig:toy}
\end{figure}

\textbf{Level sets of the critic}
In this section we give another 2D level sets visualization. As analyzed in Section~\ref{section-sdual}, Sobolev constraint in SWGAN is a generalization of Lipschitz constraint. Therefore, theoretically SWGAN critic should also be capable of modeling more challenging real and fake data and providing meaningful gradients. To demonstrate this, we train SWGAN critics to optimality on several toy distributions. The value surfaces of the critics are plotted in Figure~\ref{fig:toy}, which shows good fitness of the distribution following our theoretical analysis. 

\subsection{Real-world image generation}
\label{image_synthesis}

\subsubsection{Experimental setup}

\begin{table*}[!t] 
	\vspace{-5pt}
	\centering
	\footnotesize
	\begin{threeparttable}[b]
	\begin{tabular}{|c|c | c|c | c|}
		\hline \rule{0pt}{2.5ex}
		\multirow{2}{*}{GANs} & \multicolumn{2}{c|}{CIFAR-10} & \multicolumn{2}{c|}{Tiny-ImageNet}  \\
		\cline{2-5} \rule{0pt}{2.4ex} & IS & FID & IS & FID  \\
		\hline
		\hline \rule{0pt}{2.2ex}
		WGAN-GP$^*$ & 7.85$\pm$.07 & 18.21$\pm$.12 & 8.17$\pm$.03 & 18.70$\pm$.05 \\
		WGAN-GP with $m=8$ & 7.88$\pm$.09 & 18.08$\pm$.22 & 8.17$\pm$.04 & 18.69$\pm$.10 \\
		WGAN-AL & 7.79$\pm$.09 & 17.86$\pm$.16 & 8.26$\pm$.03 & 18.70$\pm$.06 \\
		WGAN-AL with $m=8$ & 7.89$\pm$.09 & 17.52$\pm$.27 & 8.31$\pm$.02 & 18.61$\pm$.09 \\
		\hline \rule{0pt}{2.6ex}
		SGAN$^*$, $\mu=\frac{P_r+P_g}{2}$ & 7.81$\pm$.11 & 17.89$\pm$.27 & 8.30$\pm$.04 & 18.90$\pm$.04 \\
		SGAN$^*$, $\mu=\mu_{GP}$ & 7.83$\pm$.10 & 18.03$\pm$.24 & 8.31$\pm$.03 & 18.90$\pm$.08 \\
		SGAN, $\mu=\mu_{GP}$ with $m=8$ & 7.86$\pm$.09 & 17.74$\pm$.24 & 8.33$\pm$.03 & 18.75$\pm$.07 \\
		\hline
		\hline \rule{0pt}{2.3ex}
        SWGAN-GP & \textbf{7.98$\pm$.08} & 17.50$\pm$.19 & 8.38$\pm$.03 & 18.50$\pm$.03\\
        SWGAN-AL & 7.93$\pm$.09 & \textbf{16.75$\pm$.24} & \textbf{8.41$\pm$.03} & \textbf{18.32$\pm$.05} \\
		\hline
	\end{tabular}
	\begin{tablenotes}
		\item[*] denotes the vanilla version of our baselines.
	\end{tablenotes}
	\end{threeparttable}
	\captionof{table}{Performance of GANs on CIFAR-10 and Tiny-ImageNet.}
	\vspace{-5pt}
	\label{table:performance}
\end{table*}

\textbf{Controlled variables.}
To make the comparisons more convincing, we also include extended versions of the existing GAN models to control the contrastive variables. 
The controlled variables includes: 
\begin{itemize}
	\vspace{-3pt}
	\item Sampling size. In SWGAN-AL 
	we need to sample $m$ points on each interpolation line between $P_r$ and $P_g$, while in WGAN-GP \cite{wgan-gp} and SGAN \cite{sgan} only one point is sampled. To yield a more fair comparison, we perform additional experiments of WGAN and SGAN with the sampling size equal to $m$.
	\vspace{-3pt}
	\item Optimization method. In our baseline WGAN-GP \cite{wgan-gp}, the restriction is imposed by Penalty Method (PM). By contrast, SGAN and SWGAN-AL use Augmented Lagrangian Method (ALM). 
	ALM is a more advanced algorithm than PM for strictly imposing the constraint. 
	To see the practical difference, we add experiment settings of SWGAN with penalty regularization term (named SWGAN-GP). Formally, the penalty can be written as:
	\begin{equation}
	\begin{aligned}
	\label{eq:gp-loss}
		\mathcal{L}_{gp}&(w,\theta) = - \lambda \; \mathbb{E}_{x_i \sim P_r} \mathbb{E}_{x_j \sim P_g} \Omega_{ij} ^2 (D_w,G_\theta) , 
	\end{aligned}
	\end{equation}
	where $\lambda$ is the gradient penalty coefficient. 
	$\mathcal{L}_{gp}$ is the alternative term of the ALM penalty $\mathcal{L}_{al}$ in Eq.~\eqref{loss-alm} for the training of SWGAN-GP. 
\end{itemize}

\textbf{Baselines.} For comparison, we also evaluated the WGAN-GP \cite{wgan-gp} and Sobolev GAN (SGAN) \cite{sgan} with different sampling sizes and penalty methods. The choice of baselines is due to their close relation to SWGAN as analyzed in Section \ref{wgan-sec}. We omit other previous methods since as a representative of state-of-the-art GAN model, WGAN-GP has been shown to rival or outperform a number of former methods, such as the original GAN \cite{gan}, Energy-based generative adversarial network \cite{zhao2016energy}, the original WGAN with weight clipping \cite{wgan}, Least Squares GAN \cite{mao2017least},  Boundary equilibrium GAN \cite{berthelot2017began} and GAN with denoising feature matching \cite{warde2016improving}.

\textbf{Evaluation metrics.}
Since GAN lacks the capacity to perform reliable likelihood estimations \cite{theis2015note}, we instead concentrate on evaluating the quality of generated images. We choose to compare the maximal Frechet Inception Distances (FID)~\cite{two_time_scale_gan} 
and Inception Scores ~\cite{improved_gan} 
reached during training iterations, both computed from 50K samples. A high image quality corresponds to high Inception and low FID scores. 
Specifally, IS is defined as $\exp(\mathbb{E}_x\operatorname{KL}(p(y|x)||p(y)))$, where $p(y|x)$ is the distribution of label $y$ conditioned on generated data $x$, and $p(y)$ is the marginal distribution. 
IS combines both the confidence of the class predictions for each synthetic images (quality) and the integral of the marginal probability of the predicted classes (diversity).
The classification probabilities were estimated by the Inception model \cite{improved_gan}, a classifier pre-trained upon the ImageNet dataset \cite{deng2009imagenet}. 
However, in practice we note that IS is hard to detect the mode collapse problems. 
FID use the same Inception model to capture computer-vision-specific features of a collection of real and generated images, and then calculate the Frechet distance 
(also called $Wasserstein$-$2$ distance) 
\cite{aronov2006frechet} between two activation distributions. 
The intuition of IS is that high-quality images should lead to high confidence in classification, while FID is aiming to measure the computer-vision-specific similarity of generated images to real ones through Frechet distance (\textit{a.k.a.},~\textit{Wasserstein-}$2$ distance)~\cite{aronov2006frechet}.

\textbf{Data.}
We test different GANs on CIFAR-10~\cite{krizhevsky2009learning} 
and Tiny-ImageNet~\cite{deng2009imagenet}
, which are standard datasets widely used in GANs literatures. Both datasets consist of
tens of thousands of real-world color images with class labels. 

\textbf{Network architecture.}
For all experimental settings, we follow WGAN-GP \cite{wgan-gp} and adopt the same Residual Network (ResNet)~\cite{he2016deep} 
structures and hyperparameters. Details of the architecture are provided in Appendix.

\textbf{Other implementation details.}
For SWGAN metaparameter, we choose $8$ as the sample size $m$. Adam optimizer ~\cite{kingma2014adam} 
is set with learning rate decaying from $2\cdot 10^{-4}$ to $0$ over 100K iterations with $\beta_1=0,\beta_2=0.9$. We used $5$ critic updates per generator update, and the batch size used was $64$.

\begin{figure*}[!t]
	\centering
	\begin{minipage}{0.49\linewidth}
		\centering
    	\includegraphics[width=1.\linewidth]{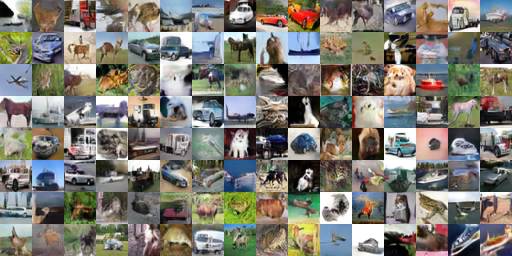}
    	\vspace{-5pt}
    	\caption{Generated CIFAR-10 samples.} 
    	\label{fig:exp:swgan-cifar}
    \end{minipage}
    \begin{minipage}{0.49\linewidth}
        \centering
    	\includegraphics[width=1.\linewidth]{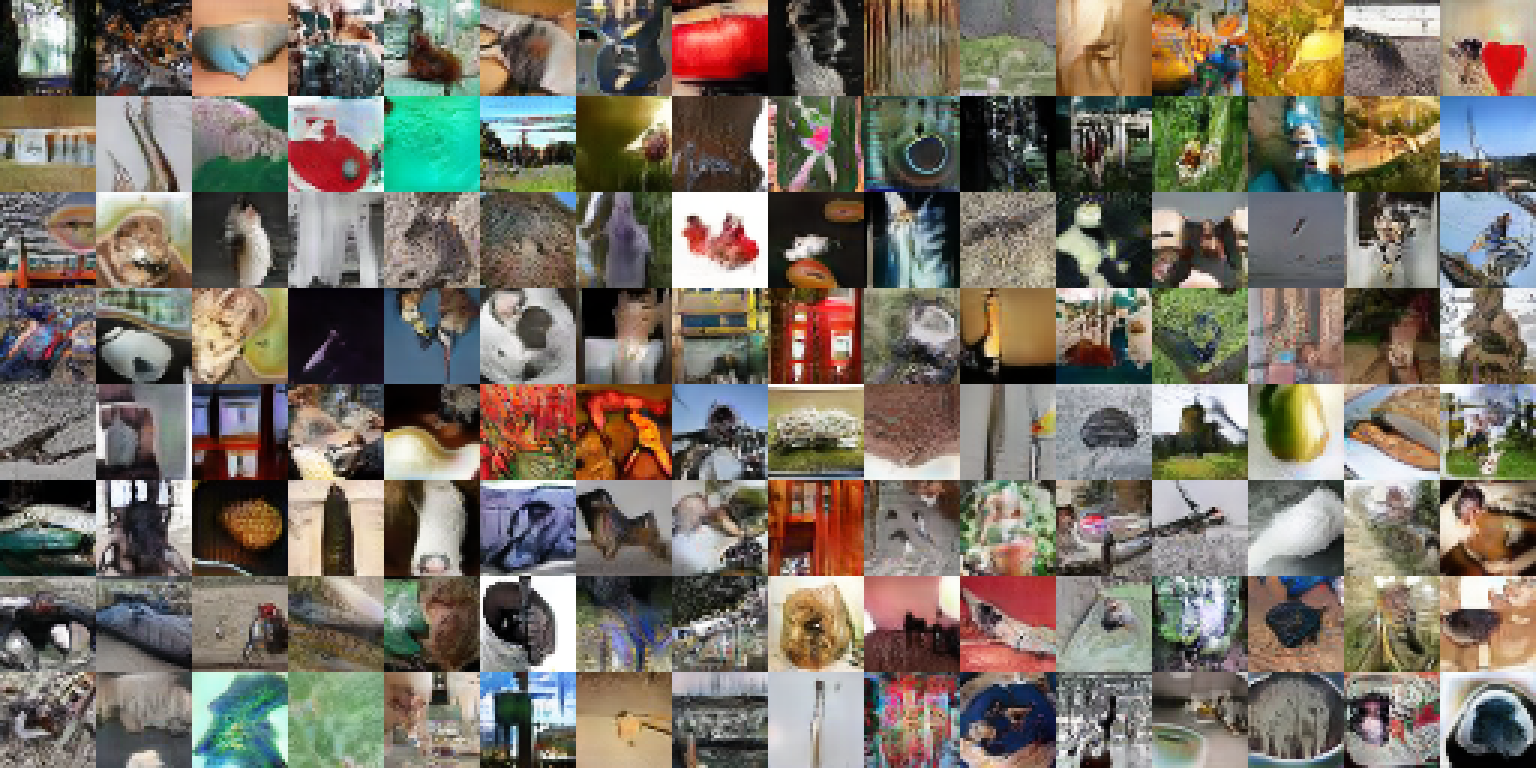}
    	\vspace{-5pt}
    	\caption{Generated Tiny-ImageNet samples.}
    	\label{fig:exp:swgan-tiny}
    \end{minipage}
\end{figure*}

\begin{figure*}[!b] 
    \begin{minipage}{0.47\linewidth}
        \centering
    	\includegraphics[width=.95\linewidth]{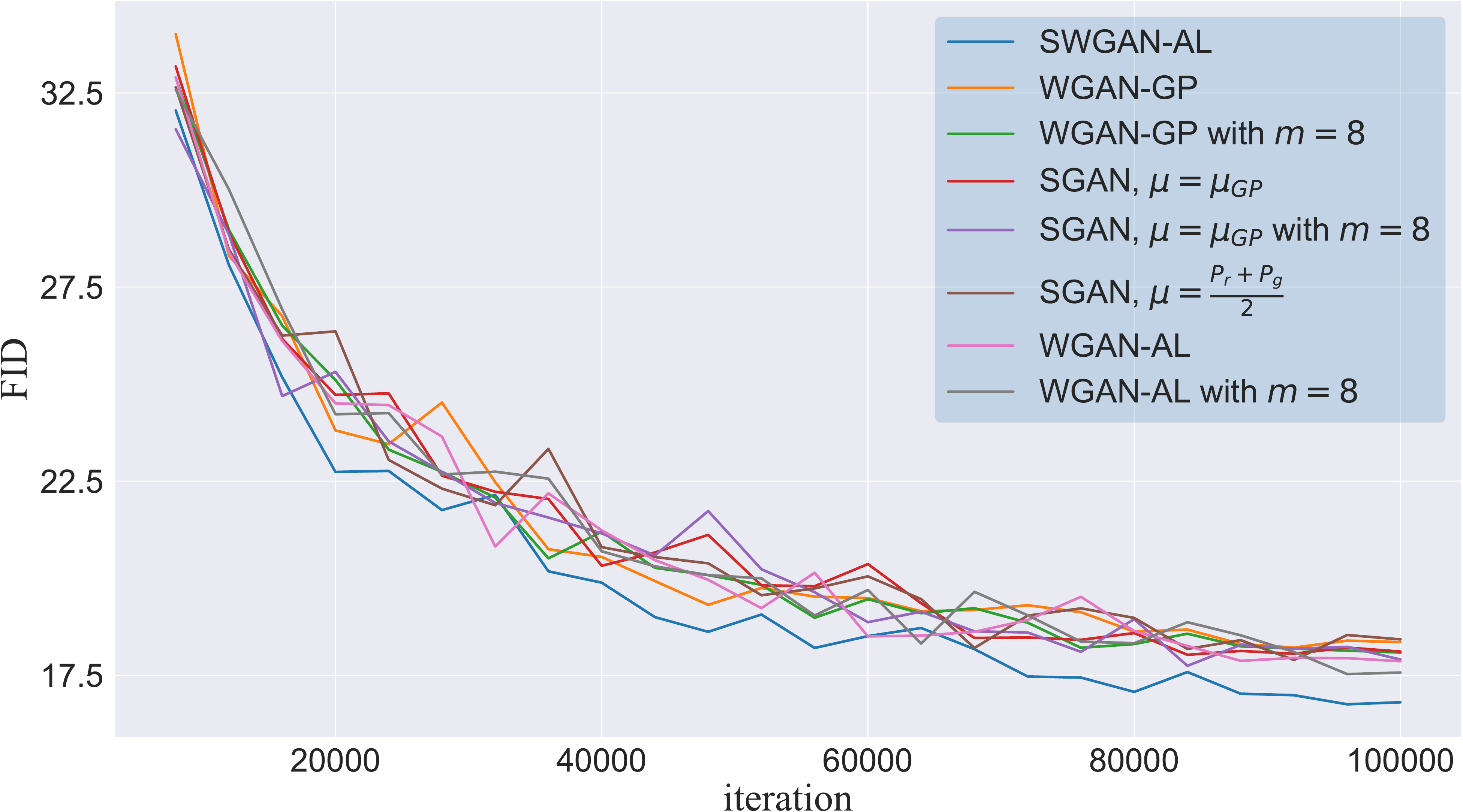}
    	\caption{Training curves on CIFAR-10.} 
    	\label{fig:curve}
    \end{minipage}
    \begin{minipage}{0.53\linewidth}
        \centering
    	\hspace{-5pt}
    	\includegraphics[width=0.99\linewidth]{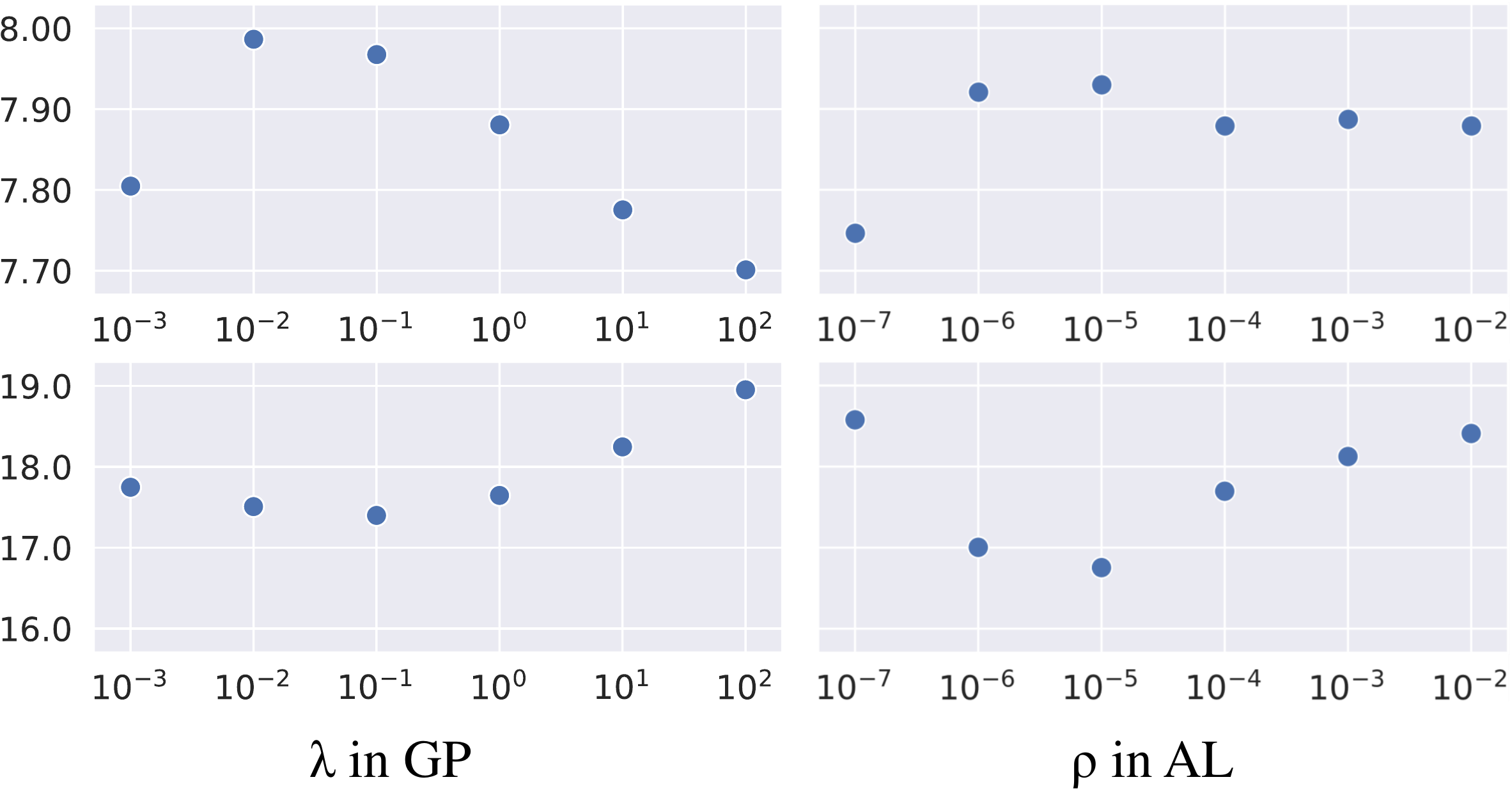}
    	\caption{The comparison of SWGANs with different regularization terms and parameters. Top: Inception  scores. Bottom: Frechet Inception Distances.} 
    	\label{fig:sensitive}
    \end{minipage}
\end{figure*}

\subsubsection{Results}
We here also introduce WGAN with Augmented Lagrangian Method (WGAN-AL) for further comparison, which is similar to SGAN \cite{sgan}. Scores in terms of FID and IS on CIFAR-10 and Tiny-ImageNet are reported in Table~\ref{table:performance}.
Some representative samples from the resulting generator of SWGAN are provided in Fig.~\ref{fig:exp:swgan-cifar} and Fig.~\ref{fig:exp:swgan-tiny}. 
Some representative samples from the resulting generator of SWGAN are provided in Appendix. In experiments, we note that IS is remarkably unstable during training and among different initializations, while FID is fairly stable. So we plot the training curves in terms of FID on CIFAR-10 in Figure~\ref{fig:curve} to show the training process of different GANs.

From Table~\ref{table:performance} and Figure~\ref{fig:curve}, we can see that SWGANs generally work better than the baseline models. The experimental results also show that WGAN and SGAN tend to have slightly better performance when using ALM or sample more interpolation points. However, compared with SWGAN-AL and SWGAN-GP, the performances in these cases are still not competitive enough. This indicates that the larger sampling size and ALM optimization algorithm are not the key elements for the better performance of SWGAN, \textit{i.e.}, these results evidence that \textit{it is the relaxed constraint in Sobolev duality that leads to the improvement}, which is in accordance with our motivation that a looser constraint would simplify the constrained optimization problem and lead to a stronger GAN model. 

We further test SWGAN with different regularization terms and parameters on CIFAR-10. 
The scores are shown in Figure~\ref{fig:sensitive}. 
As shown in Figure~\ref{fig:sensitive}, generally ALM is a better choice when considering FID, while GP is better for IS. 
A meaningful observation is that SWGAN is not sensitive to different values of penalty weights $\rho$ and $\lambda$. By contrast, a previous large scale study reported that the performance of WGAN-GP holds strong dependence on the penalty weight $\lambda$ (see Figure 8 and 9 in \cite{lucic2017gans}). This phenomenon demonstrates a more smooth and stable convergence and well-behaved critic performance throughout the whole training process of SWGAN. 
\section{Local Convergence Analysis}
\label{sec:suboptimal}

\begin{figure}[!t] 
	\centering
	\begin{subfigure}{0.32\linewidth}
		\centering
		\includegraphics[width=0.99\columnwidth]{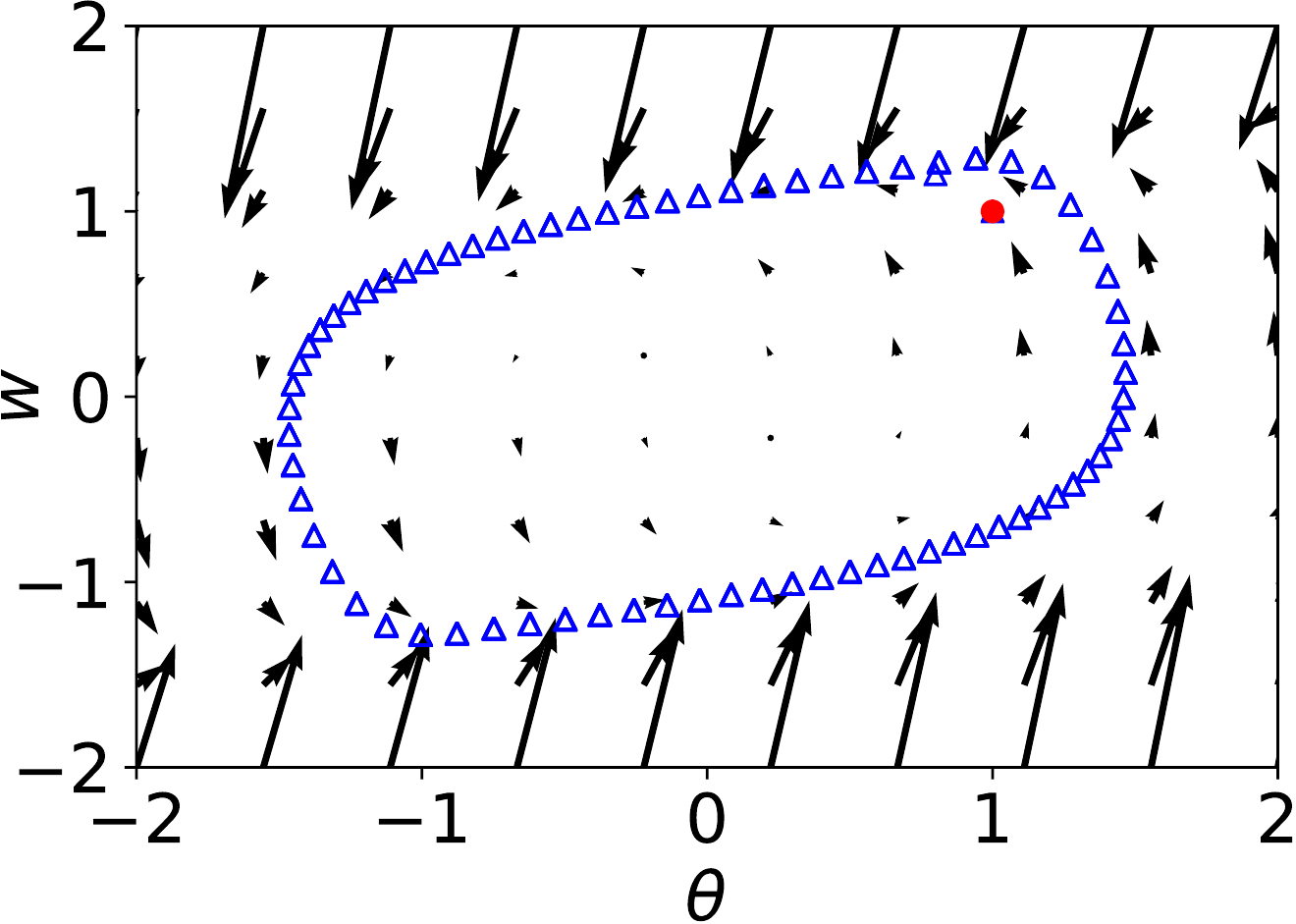}
		\vspace{-10pt}
		\caption{SimGD}
		\vspace{+5pt}
		\label{fig:gp-sgd}
	\end{subfigure}
	\begin{subfigure}{0.32\linewidth}
		\centering
		\includegraphics[width=0.99\columnwidth]{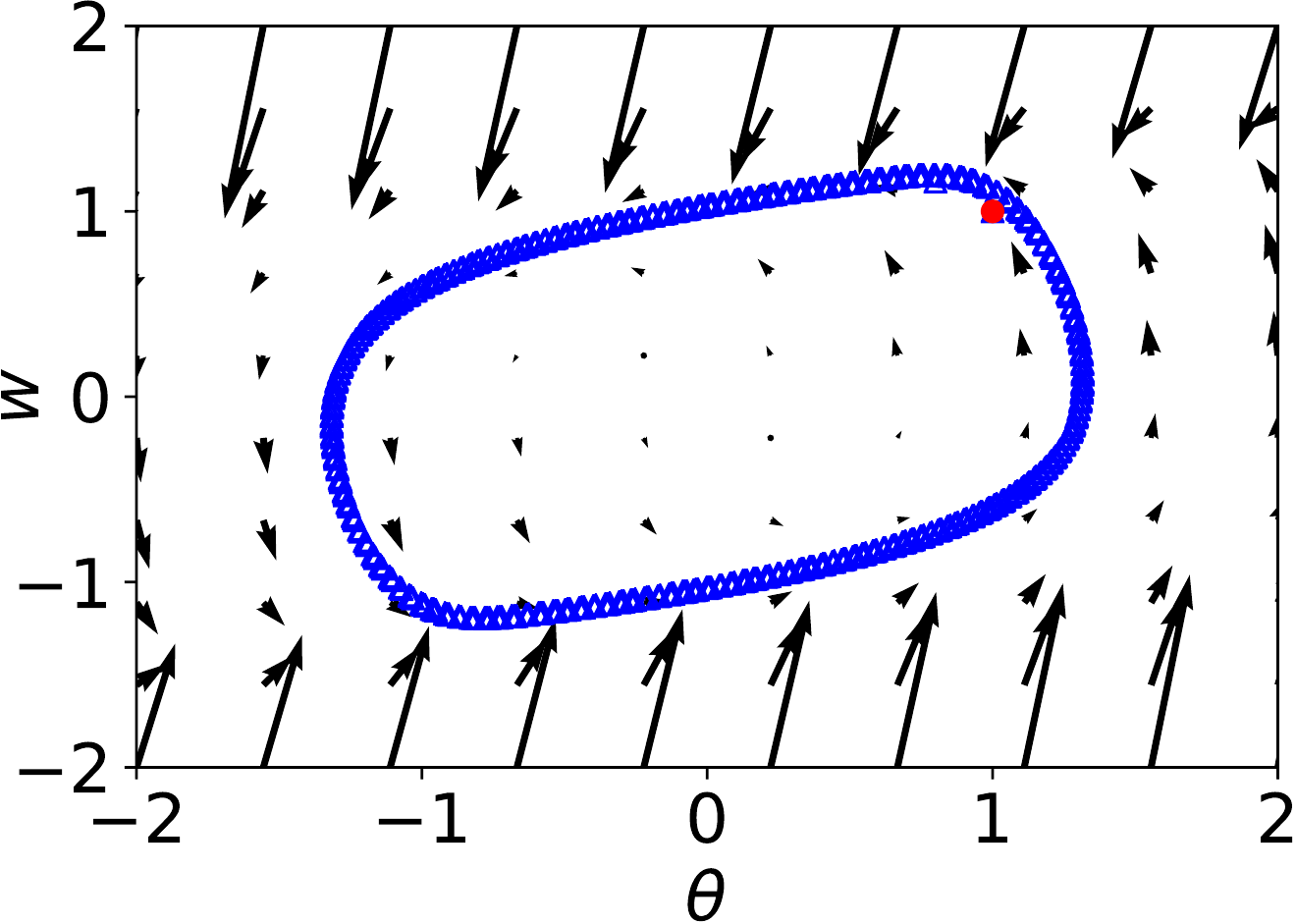}
		\vspace{-10pt}
		\caption{AltGD($n_d$=$1$)}
		\vspace{+5pt}
		\label{fig:gp-agd1}
	\end{subfigure}	
	\begin{subfigure}{0.32\linewidth}
		\centering
		\includegraphics[width=0.99\columnwidth]{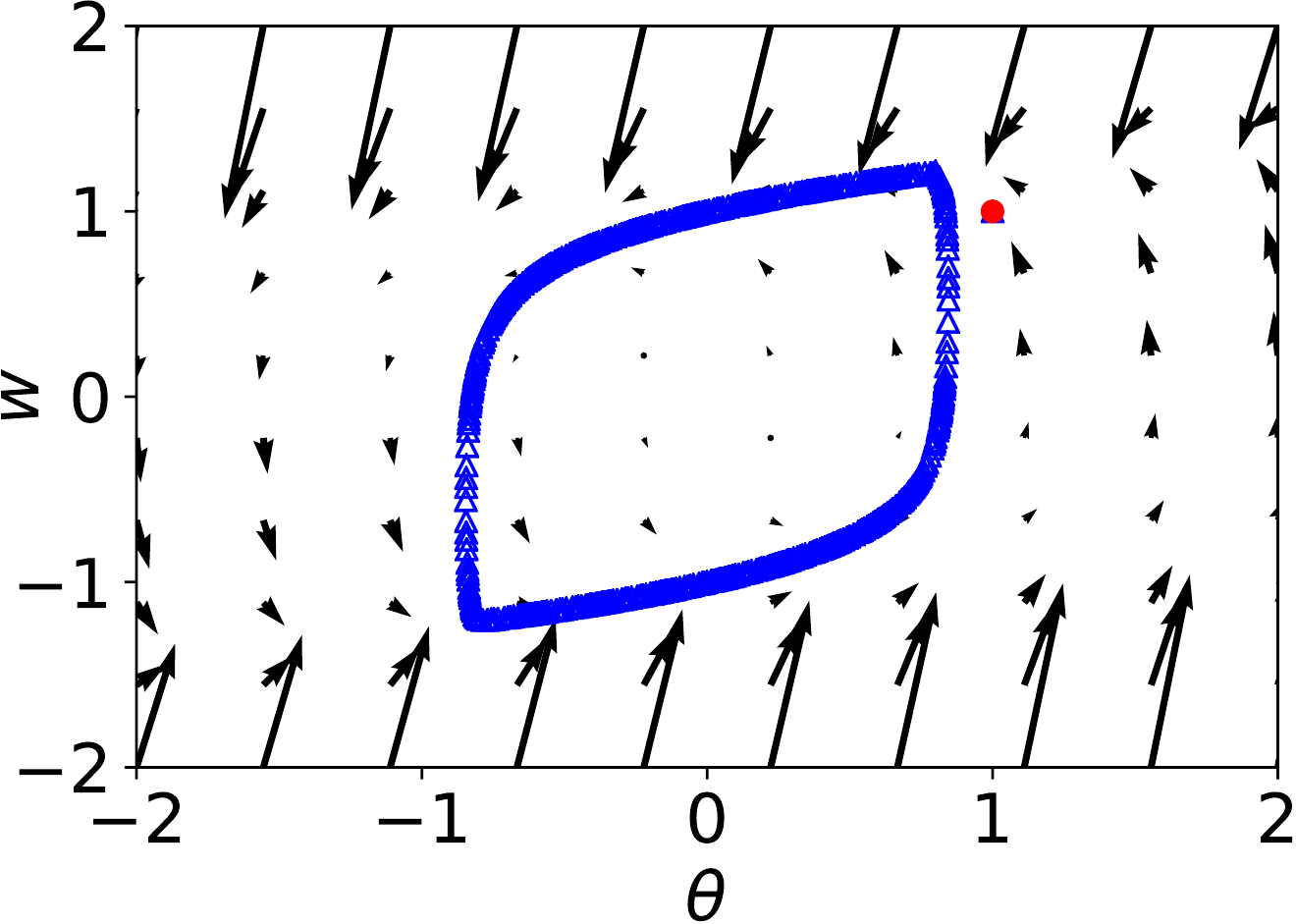}
		\vspace{-10pt}
		\caption{AltGD($n_d$=$5$)}
		\vspace{+5pt}
		\label{fig:gp-agd5}
	\end{subfigure}	
%
	\begin{subfigure}{0.32\linewidth}
		\centering
		\includegraphics[width=0.99\columnwidth]{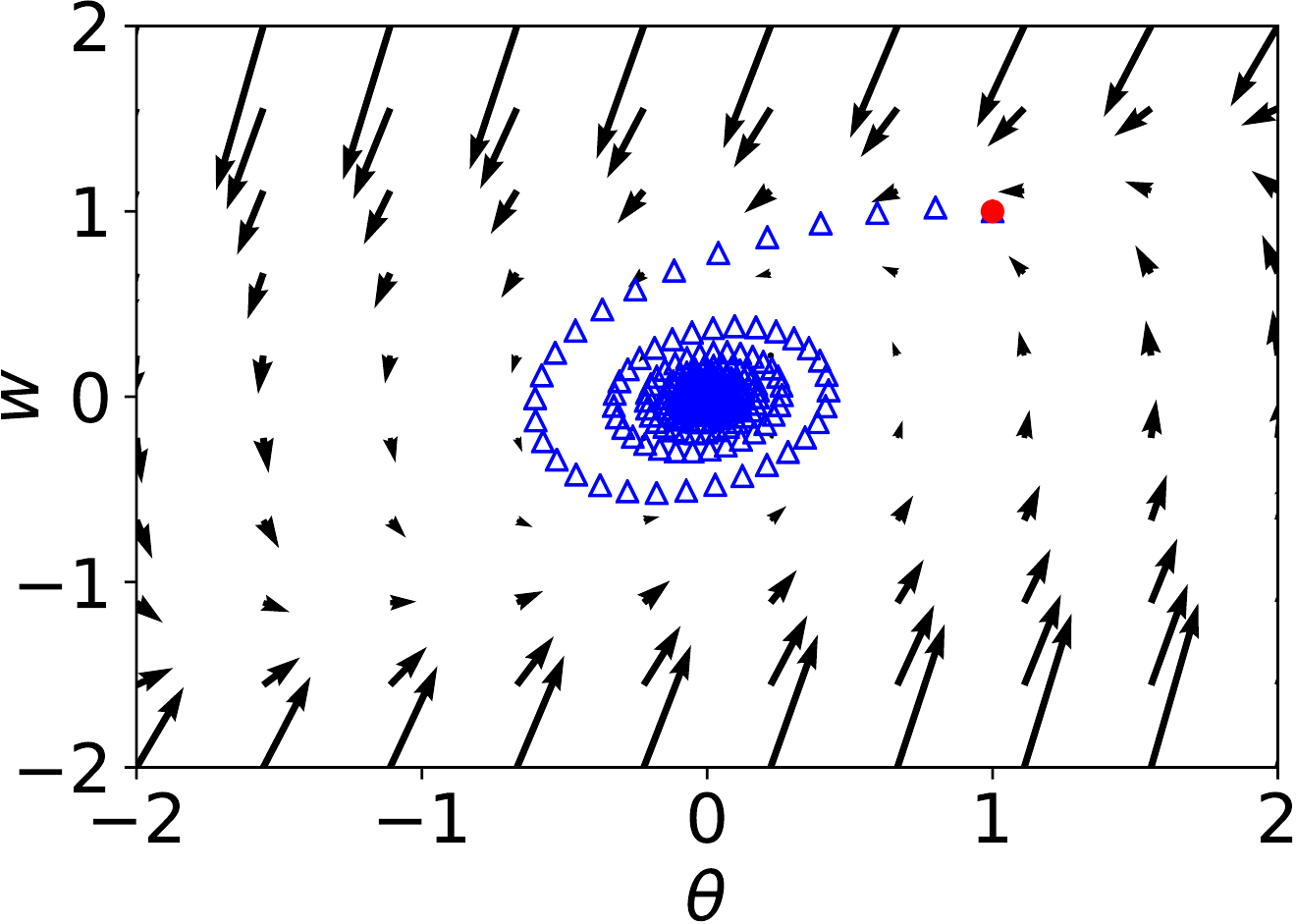}
		\vspace{-10pt}
		\caption{SimGD}
		\label{fig:cp-sgd}
	\end{subfigure}
	\begin{subfigure}{0.32\linewidth}
		\centering
		\includegraphics[width=0.99\columnwidth]{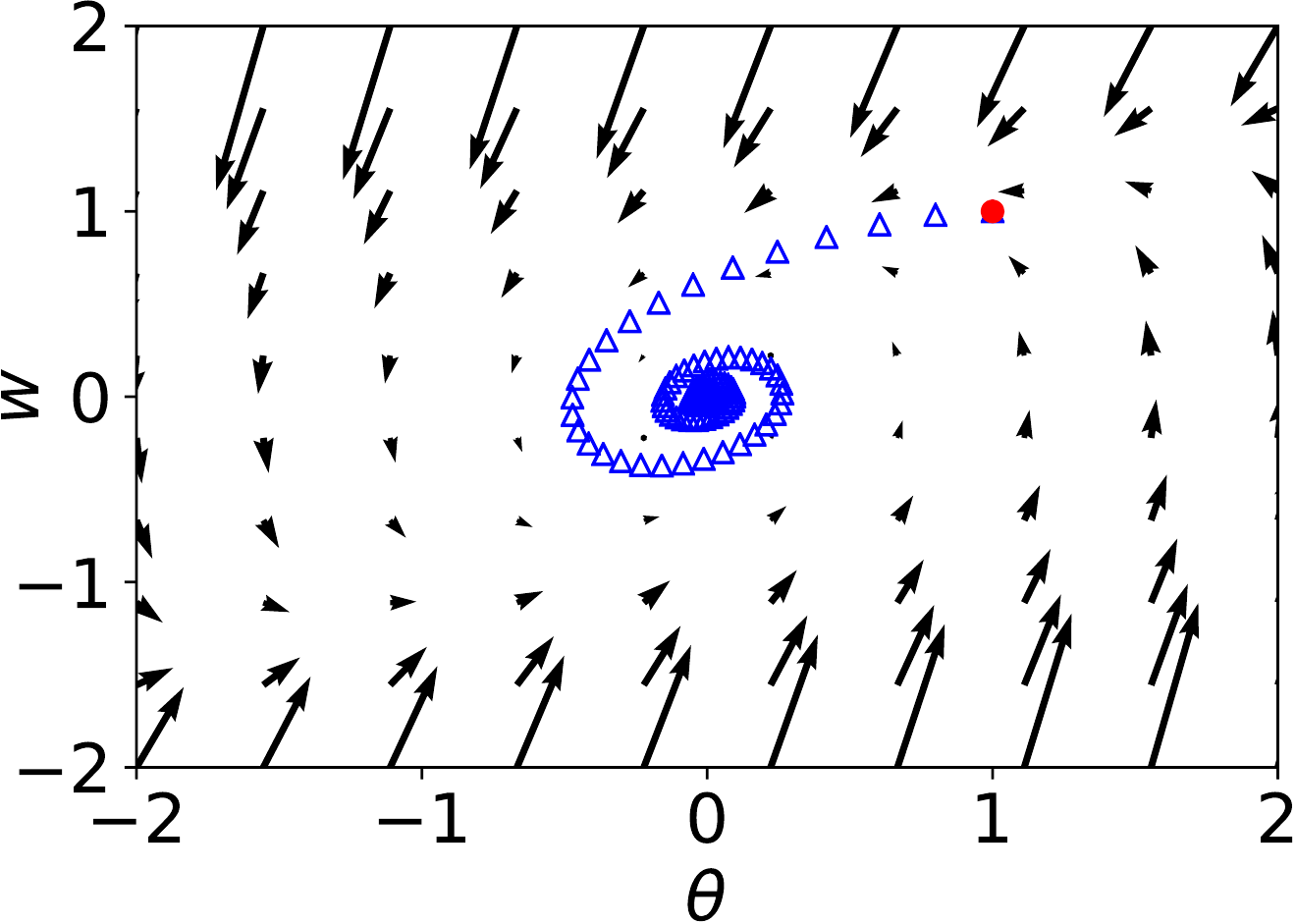}
		\vspace{-10pt}
		\caption{AltGD($n_d$=$1$)}
		\label{fig:cp-agd1}
	\end{subfigure}	
	\begin{subfigure}{0.32\linewidth}
		\centering
		\includegraphics[width=0.99\columnwidth]{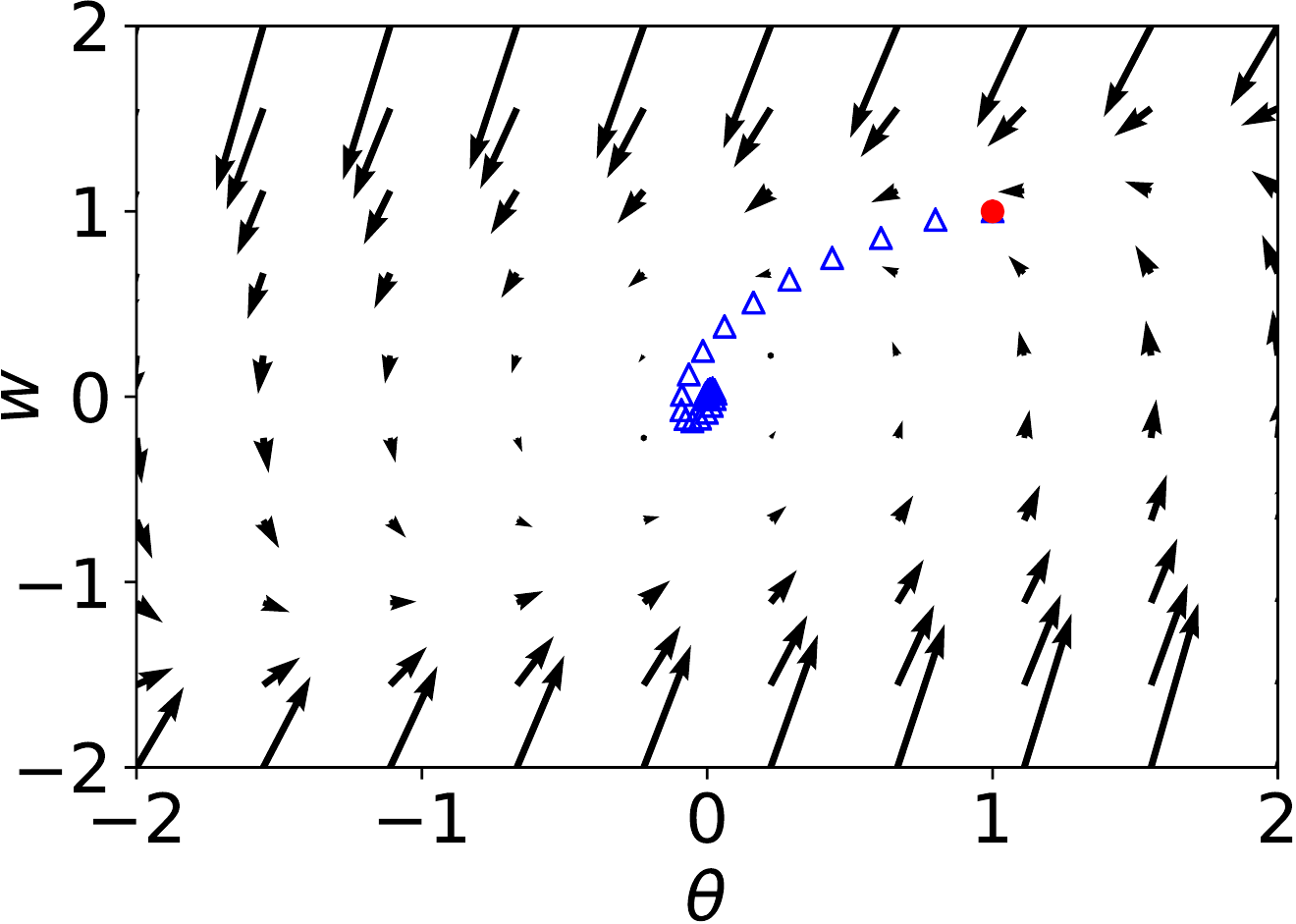}
		\vspace{-10pt}
		\caption{AltGD($n_d$=$5$)}
		\label{fig:cp-agd5}
	\end{subfigure}	
	\caption{Training behavior of SWGANs with different gradient penalty and gradient desent method. The starting iterate is marked in red. Upper: SWGAN-GP. Bottom: SWGAN-CP. First column: update the generator and critic simultaneously. Second and third column: $n_d$ critic updates per generator update.}
	\label{fig:vector-swgan-cp}
\end{figure}

Some literature pay attention to the sub-optimal convergence problem of GANs \cite{mescheder2017numerics,mescheder2018training,arora2017generalization,farnia2018convex},
which is an important direction that deserves further investigation. Moreover, \cite{mescheder2018training} recently shows that in some toy case that distributions that are not absolutely continuous, WGAN based models are not always convergent. 

We employ the same prototypical counterexample to further analyze the convergence of SWGAN: the real data distribution $P_r$ is defined as a Dirac-distribution concentrated at 0; generator distribution is set as $P_g=\delta_\theta$ and the linear critic is set as $D_w(x) = w \cdot x$. We also follow them to focus on investigating the gradient penalty version (SWGAN-GP). In the toy case, the regularizer term $\mathcal{L}_{gp}$ in Eq.~\eqref{eq:gp-loss} can be simplified to:
\begin{equation}
\label{eq:toy-case-gp-loss}
	\mathcal{L}_{gp}(w)= -\lambda \left(w^2-g_{0}\right)^{2},
\end{equation}
where $g_0$ is the gradient penalty target that set as $1$ in SWGAN-GP. Following the previous work, we set the object $\mathcal{L}(w,\theta)$ as $\mathcal{L}_S(w,\theta)+\mathcal{L}_{gp}(w)$ and then study the sub-optimal convergence problem through the \textit{gradient vector field}:
\begin{equation}
\label{eq:gp-field}
	v_{gp}(w,\theta) :=\left(\hspace{-5pt}\begin{array}{c}{-\nabla_{\theta} \mathcal{L}(w, \theta)} \\ {\nabla_{w} \mathcal{L}(w,\theta)}\end{array}\hspace{-5pt}\right) = \left(\hspace{-5pt}\begin{array}{c}{-w} \\ {\theta- 4w \lambda \left(w^2-g_{0}\right)}\end{array}\hspace{-5pt}\right).
\end{equation}

In practice, GANs are usually trained through either Simultaneous or Alternating Gradient Descent (SimGD and AltGD), which means updating the parameters $(w,\theta)$ according to the gradient field simultaneously or alternately. \cite{mescheder2018training} proves that the unique equilibrium point in the above defined case is $w=\theta=0$. However, in Eq.~\eqref{eq:gp-field} the gradient field shows a discontinuity at the equilibrium point,
which indicates that similar to WGAN-GP, SWGAN-GP also cannot converge locally in this case (Fig.~\ref{fig:gp-sgd},~\ref{fig:gp-agd1},~and~\ref{fig:gp-agd5}). To bypass this minor sub-optimal problem, we design a zero-centered gradient penalty (SWGAN-CP) by simply setting the $g_0$ in Eq.~\eqref{eq:toy-case-gp-loss} as $0$ and adding another penalty on the squared norm of the
gradients (Fig.~\ref{fig:cp-sgd},~\ref{fig:cp-agd1},~and~\ref{fig:cp-agd5}). The proof for the local convergence of SWGAN-CP is provided in Appendix.

However, as acknowledged by \cite{mescheder2018training}, despite these minor ill-behaviors, WGAN has been successfully applied in practice. Furthermore, in our real-world experiments, we also noticed that there are few positive improvements when taking zero-centered penalty as the regularization term. 

\section{Conclusion}
In this paper, we proposed a new dual form of Wasserstein distance with the Lipschitz constraint relaxed and demonstrate that it is still capable of eliminating the training instability issues. This new dual form leads to a generalized WGAN model. We built Sobolev Wasserstein GAN based on the proposed duality and provided empirical evidence that our GAN model outperforms the previous approaches, which either impose the strong Lipschitz penalty or cannot theoretically guarantee the convergence.

This work was motivated by the intuition that with a less restricted function space, the critic would be easier to be trained to optimum, thus benefiting the training of GANs. To the best of our knowledge, Sobolev Wasserstein GAN is the GAN model with the most relaxed restriction that can still avoid the training instability problem. In the future, we hope that practitioners can take a step back and investigate whether we can further relax the constraint imposed on the function space of the critic and what the minimal requirement for the convergence guarantee could be. 
\newpage
\bibliographystyle{named}
\bibliography{neurips_2020}
\onecolumn
\newpage
\appendix

\section{Algorithm for SWGAN-AL}
\label{app:sec:algorithm}

\begin{algorithm}[H]
	\caption{Sobolev Wasserstein GAN with Augmented Lagrangian}
	\label{swgan-al}
	\textbf{Input}: $\rho$ penalty weight, $\eta$ learning rate, $k$ number of critic iterations per generator iteration, $n$ batch size, $m$ number of points sampled between each pair of fake and real data
	
	\textbf{Initial}: critic parameters $w$, generator parameters $\theta$, Lagrange multiplier $\alpha$
	\begin{algorithmic}[1] 
		\WHILE{$\theta$ has not converged}
		\FOR{$l=1,...,k$}
		\STATE Sample a minibatch $x_i$, $i=1,...,n$, $x_i\sim P_r$
		\STATE Sample a minibatch $z_i$, $i=1,...,n$, $z_i\sim P_z$
		\STATE $\widetilde{x}_i \leftarrow G_{\theta}(z_i)$, $i=1,...,n$
		\FOR{$i=1,...,n$}
		\STATE Sample a minibatch $\hat{x}_{ij} = t_jx_i + (1-t_j) \widetilde{x}_i$, $j=1,...,m$, $t_j\sim \text{U}(0,1)$
		\STATE $\Omega^{(i)} \leftarrow 1- \frac{1}{m} \sum_{j=1}^{m} \Vert\nabla_xD(\hat{x}_{ij}) \Vert ^2$
		\STATE $s^{(i)} \leftarrow \max\big\{\Omega^{(i)}-\frac{\alpha}{\rho}\,,\,0\big\}$.
		\STATE $\mathcal{L}_{al}^{(i)} \leftarrow \alpha (\Omega^{(i)} - s^{(i)}) - \frac{\rho}{2} (\Omega^{(i)} - s^{(i)})^2$
		\ENDFOR
		\STATE $\mathcal{L} \leftarrow \frac{1}{n} \sum_{i=1}^{n} \big\{ D_w(x_i) - D_w(\widetilde{x}_i) + \mathcal{L}_{al}^{(i)} \big\}$
		\STATE $(g_w,g_{\alpha}) \leftarrow (\nabla_{w} \mathcal{L},\nabla_{\alpha} \mathcal{L}) (w,\theta,\alpha)$
		\STATE $w \leftarrow w + \eta \text{ADAM}(w,g_w)$
		\STATE $\alpha \leftarrow \max \big \{\alpha - \rho g_{\alpha},0 \big \}$
		\ENDFOR
		\STATE Sample a minibatch $z_i$, $i=1,...,n$, $z_i\sim P_z$
		\STATE $g_{\theta} \leftarrow -\nabla_{\theta} \frac{1}{n} \sum_{i=1}^{n} D_w(G_{\theta}(z_i))$
		\STATE $\theta \leftarrow \theta - \eta \text{ADAM} (\theta,g_{\theta})$
		\ENDWHILE
	\end{algorithmic}
\end{algorithm}

\section{Proofs for \emph{Sobolev duality}}

\subsection{Proof of the \emph{Sobolev duality} of Wasserstein Distance}
\label{app:proof:section-sobolev-dual}

We here provide a proof for our new dual form of Wasserstein distance.

The Wasserstein distance is given as follows
\begin{equation}\label{eq_w_primal_app}
W_1(P_r,P_g) =  \inf_{\pi \in \Pi(P_r,P_g)} \, \mathbb{E}_{(x_i,x_j) \sim \pi} \, [\Vert x_i - x_j \Vert],
\end{equation}
where $\Pi(P_r, P_g)$ denotes the set of all probability measures with marginals $P_r$ and $P_g$ on the first and second factors, respectively. The Kantorovich-Rubinstein (KR) dual is written as
\begin{equation}
\begin{aligned}
W_{KR}&(P_r,P_g) = {\sup}_{f} \,\, \Big\{ \mathbb{E}_{x_i \sim P_r} \, [f(x_i)] - \mathbb{E}_{x_j \sim P_g} \, [f(x_j)] \Big\},  \, \\
&\emph{s.t.} \, f(x_i) - f(x_j) \leq \Vert x_i - x_j \Vert, \,\, \forall x_i, \forall x_j.
\end{aligned}
\label{eq_w_dual_form_app1}
\end{equation}
We will prove that Wasserstein distance in its dual form can also be written in \emph{Sobolev duality}:
\begin{equation}
\label{eq_s_dual_form_app2}
\begin{aligned}
    W_S&(P_r,P_g) = {\sup}_{f} \, \Big \{\mathbb{E}_{x \sim P_r} [f(x)] - \mathbb{E}_{x \sim P_g}  [f(x)] \Big \}, \\
    &\emph{s.t.} \, f \in \mathcal{F}_S (\mathcal{X}, \mu ^{x_i,x_j}), \,\, \forall x_i \sim P_r, \forall x_j \sim P_g,
\end{aligned}
\end{equation}
where
\begin{equation}
\label{app:eq:sample-mu}
\mu^{x_i,x_j}(x) = 
\begin{cases}
\dfrac{1}{\Vert x_i - x_j \Vert}, & x = t x_i + (t-1)x_j,
\vspace{+5pt}
\cr
0, &otherwise.
\end{cases}
\end{equation}
which relaxes the Lipschitz constraint in the KR dual form of Wasserstein distance.

\begin{theorem}
Given $W_{KR}(P_r,P_g)=W_1(P_r,P_g)$, we have $W_{KR}(P_r,P_g) = W_{S}(P_r,P_g) = W_1(P_r,P_g) $. 
\end{theorem}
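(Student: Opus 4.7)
The equality $W_{KR}=W_1$ is given, so the task reduces to proving $W_{KR}=W_S$. The plan is to sandwich $W_S$ between $W_{KR}$ and $W_1$ via the two easy inequalities $W_{KR}\le W_S\le W_1$, and then invoke the hypothesis.

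\textbf{Step 1 ($W_{KR}\le W_S$): the KR admissible set is contained in the Sobolev admissible set.} If $f$ is $1$-Lipschitz then, wherever $f$ is differentiable, $\|\nabla f(x)\|\le 1$. Since the support of $\mu^{x_i,x_j}$ is the segment from $x_i$ to $x_j$ and is a probability measure on that segment, the change of variables $x=tx_i+(1-t)x_j$ gives
\begin{equation*}
\int_\mathcal{X}\|\nabla_x f(x)\|^2\,\mu^{x_i,x_j}(x)\,dx
=\int_0^1\|\nabla f(tx_i+(1-t)x_j)\|^2\,dt\le 1,
\end{equation*}
so $f\in\mathcal{F}_S(\mathcal{X},\mu^{x_i,x_j})$ for every pair $(x_i,x_j)$. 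Taking the supremum over the larger Sobolev class can therefore only increase the value, giving $W_{KR}\le W_S$.

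\textbf{Step 2 ($W_S\le W_1$): Cauchy--Schwarz along segments plus an arbitrary coupling.} Fix any admissible $f$ for $W_S$ and any coupling $\pi\in\Pi(P_r,P_g)$. For each $(x_i,x_j)$ in the support of $\pi$, the fundamental theorem of calculus applied to $t\mapsto f(tx_i+(1-t)x_j)$ gives
\begin{equation*}
f(x_i)-f(x_j)=\int_0^1\nabla f(tx_i+(1-t)x_j)\cdot(x_i-x_j)\,dt,
\end{equation*}
and Cauchy--Schwarz (both in the dot product and between $1$ and $\|\nabla f\|$) yields
\begin{equation*}
|f(x_i)-f(x_j)|\le\|x_i-x_j\|\Bigl(\int_0^1\|\nabla f(tx_i+(1-t)x_j)\|^2\,dt\Bigr)^{1/2}\le\|x_i-x_j\|,
\end{equation*}
the last inequality using the Sobolev unit ball constraint. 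Taking $\mathbb{E}_{(x_i,x_j)\sim\pi}$ on both sides and using the marginal property of $\pi$ gives $\mathbb{E}_{P_r}[f]-\mathbb{E}_{P_g}[f]\le\mathbb{E}_\pi\|x_i-x_j\|$; taking the infimum over $\pi$ and then the supremum over $f$ gives $W_S\le W_1$.

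\textbf{Step 3 (conclude).} Combining the two inequalities with the stated identity $W_{KR}=W_1$ yields $W_{KR}\le W_S\le W_1=W_{KR}$, so all three quantities coincide. The only nontrivial point is the Cauchy--Schwarz estimate in Step 2; I would expect mild regularity issues (differentiability of $f$ along almost every segment) to be the one place that needs a careful word, but standard Sobolev theory on the segment, or an approximation by smooth functions, handles this without difficulty since the Sobolev constraint is defined via the weak gradient. No additional structural input beyond the given $W_{KR}=W_1$ is needed.
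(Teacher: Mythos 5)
Your proof is correct and takes essentially the same route as the paper: show the KR admissible set sits inside the Sobolev admissible set (so $W_{KR}\le W_S$), then use the fundamental theorem of calculus along segments plus Cauchy--Schwarz to get $W_S\le W_1$, and conclude from $W_{KR}=W_1$. The only cosmetic difference is that you work directly with the $t$-parametrization $\int_0^1\|\nabla f\|^2\,dt\le 1$ whereas the paper phrases the same bound as a weighted line integral $\int_\mathcal{C}\|\nabla f\|^2\mu\,d\mathbf{r}$ with the density $\mu^{x_i,x_j}$ inserted and peeled off via Cauchy--Schwarz; and you skip the paper's (unnecessary) indicator-function bookkeeping in the coupling step, since the support of any $\pi\in\Pi(P_r,P_g)$ already lies in $\mathcal{S}_r\times\mathcal{S}_g$ where the segmentwise bound applies.
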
 

\begin{proof} $ $\newline 

    (i) As illustrated in Section 3.2.1, Lipschitz continuity is a sufficient condition of the constraint in Sobolev dual. Therefor, for any $f$ that satisfies ``$f(x_i)-f(x_j) \leq \Vert x_i-x_j \Vert, \, \forall x_i, \forall x_j$'', it must satisfy ``$f \in \mathcal{F}_S(\mathcal{X},\mu^{x_i,x_j}), \, \forall x_i \sim P_r, \forall x_j \sim P_g$''. Thus, $W_{KR}(P_r,P_g)\leq W_{S}(P_r,P_g)$.  
    
    
    (ii) Let a curve $\mathcal{C}$ be the straight line from $x_j$ to $x_i$ with its tangent vector at some chosen point defined as $\mathbf{v}$, \textit{i.e.}, $\mathbf{v}=\frac{x_i-x_j}{\Vert x_i-x_j\Vert}$. Let $P_t$ be U$[0, 1]$ and let $t$ be the random variable that follows $P_t$. Let $\mathbf{r}(t)=tx_i+(1-t)x_j$, which is a bijective parametrization of the curve $\mathcal{C}$ such that $\mathbf{r}(0)$ and $\mathbf{r}(1)$ give the endpoints of $\mathcal{C}$. Given $f \in \mathcal{F}_S (\mathcal{X}, \mu ^{x_i,x_j})$, we have:
    
    \begin{align}
        f(x_i)-f(x_j) &= \int_\mathcal{C} \mathbf{v} \cdot \nabla_x f(\mathbf{r}) d \mathbf{r} \nonumber \\ 
        &\leq \int_\mathcal{C} \Vert \nabla_x f(\mathbf{r}) \Vert d \mathbf{r}  \label{inequality-1}\\ 
        &\quad \text{(Directional derivative is less than norm of gradient)} \nonumber\\
        &=\int_\mathcal{C} \Vert \nabla_x f(\mathbf{r}) \Vert \sqrt{\mu^{x_i,x_j}(\mathbf{r})} \frac{1}{\sqrt{\mu^{x_i,x_j}(\mathbf{r})}}  d \mathbf{r} \\
        &\leq \sqrt{\int_\mathcal{C}  \Vert \nabla_x f(\mathbf{r}) \Vert ^2 \mu^{x_i,x_j}(\mathbf{r}) d \mathbf{r} } \sqrt{\int_\mathcal{C}{ \frac{1}{\mu^{x_i,x_j}(\mathbf{r})} d \mathbf{r}} } \label{inequality-2} \\
        &\quad \text{(Cauchy–Schwarz inequality)} \nonumber \\
        &\leq \sqrt{\int_\mathcal{C} \frac{1}{\mu^{x_i,x_j}(\mathbf{r})}  d \mathbf{r}} \label{inequality-3} \\
        &\quad \text{($f \in \mathcal{F}_S (\mathcal{X},\mu^{x_i,x_j})$, see Eq.~(14) in paper)} \nonumber \\
        &= \sqrt{\int_0^1 \frac{1}{\mu^{x_i,x_j}(\mathbf{r(t)})} |\mathbf{r}'(t)| d t} \nonumber\\
        &=\sqrt{\int_0^1 \Vert x_i - x_j \Vert |\mathbf{r}'(t)| d t} \nonumber\\
        & \quad \text{(see Eq.~\eqref{app:eq:sample-mu} )} \nonumber \\
        & = \Vert x_i-x_j \Vert \nonumber \\
        & \quad \text{($\mathbf{r}(t)=tx_i+(1-t)x_j$)} \nonumber \\
    \end{align}
    
    Let $F_{S}=\{f| \, f(x_i) - f(x_j) \leq \Vert x_i-x_j\Vert, \,\, \forall x_i \sim P_r, \forall x_j \sim P_g \}$.
    
    Let $A=\{(x_i,x_j) \,|\, x_i \in \mathcal{S}_r, x_j \in \mathcal{S}_g \}$ and $I_A=\begin{cases}
    1, \quad (x_i,x_j) \in A; \\
    0, \quad otherwise
    \end{cases}$.
    
    Let $A^c$ denote the complementary set of $A$ and define $I_{A^c}$ accordingly.
        
    For $\forall \pi \in \Pi(P_r, P_g)$, we have the following:
    \begin{align}
    W_{S}(P_r,P_g) &\leq {\sup}_{f\in F_{S}} \,\, \mathbb{E}_{x \sim P_r} \, [f(x)] - \mathbb{E}_{x \sim P_g} \, [f(x)]  \\ \nonumber
    & = {\sup}_{f\in F_{S}} \,\, \mathbb{E}_{(x_i,x_j) \sim \pi} [f(x_i)-f(x_j)]  \\ \nonumber
    & = {\sup}_{f\in F_{S}} \,\, \mathbb{E}_{(x_i,x_j) \sim \pi} [(f(x_i)-f(x_j)) I_A] + \mathbb{E}_{(x_i,x_j) \sim \pi} [(f(x_i)-f(x_j)) I_{A^c}] \\\nonumber
    & = {\sup}_{f\in F_{S}} \,\, \mathbb{E}_{(x_i,x_j) \sim \pi} [(f(x_i)-f(x_j)) I_A] \\\nonumber
    & \leq \mathbb{E}_{(x_i,x_j) \sim \pi} [\Vert x_j - x_i \Vert I_A]  \\\nonumber
    & \leq \mathbb{E}_{(x_i,x_j) \sim \pi} [\Vert x_i - x_j \Vert].
    \end{align}
    
    $W_{S}(P_r,P_g)\leq \mathbb{E}_{(x,y) \sim \pi} [\Vert x_i - x_j \Vert], \forall \pi \in \Pi(P_r, P_g)$ 
    
    $\Rightarrow W_{S}(P_r,P_g)\leq \inf_{\pi \in \Pi(P_r,P_g)} \, \mathbb{E}_{(x,y) \sim \pi} \, [\Vert x_i -x_j \Vert] = W_1(P_r,P_g)$. 
    
    (iii) Combining (i) and (ii), we have $W_{KR}(P_r,P_g)\leq W_{S}(P_r,P_g)\leq W_1(P_r,P_g)$. 
    
    Given $W_{KR}(P_r,P_g)=W_1(P_r,P_g)$, we have $W_{KR}(P_r,P_g)=W_{S}(P_r,P_g)=W_1(P_r,P_g)$.
\end{proof}

\subsection{Proof of Proposition 1}
\label{app:proof:prop1}

\begin{proposition}
	\label{app:prop:grad-sdual}
    Let $\pi^*$ be the optimal coupling in Eq.~\eqref{eq_w_primal_app}, then the optimal function $f^*$ in Sobolev duality Eq.~\eqref{eq_s_dual_form_app2} satisfies that: let $x_t = tx_i + (1-t)x_j$ with $0 \leq t \leq 1$, if $f^*$ is differentiable and $\pi^*(x, x)=0$ for all $x$, then it holds that $P_{(x_i, x_j)\sim \pi^*}[\nabla f^* (x_t)=\frac{x_i-x_j}{\Vert x_i-x_j \Vert}]=1$. 
\end{proposition}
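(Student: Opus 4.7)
The plan is to deduce the gradient identity by tracing equality conditions back through the chain of inequalities used in the derivation of the Sobolev duality. First, I would invoke duality together with the optimality of $\pi^*$ in \eqref{emd} and of $f^*$ in \eqref{s-dual}: since $\pi^*$ has marginals $P_r$ and $P_g$, the dual value can be rewritten as an expectation over $\pi^*$, giving
\begin{equation*}
\mathbb{E}_{(x_i,x_j)\sim\pi^*}\bigl[f^*(x_i)-f^*(x_j)\bigr]
= \mathbb{E}_{x\sim P_r}f^*(x)-\mathbb{E}_{x\sim P_g}f^*(x)
= W(P_r,P_g)
= \mathbb{E}_{(x_i,x_j)\sim\pi^*}\Vert x_i-x_j\Vert.
\end{equation*}
Next, I would combine this expectation equality with the pointwise bound $f^*(x_i)-f^*(x_j)\le\Vert x_i-x_j\Vert$, which was established as an intermediate step in the proof of the Sobolev duality for every pair with $x_i\ne x_j$ (the hypothesis $\pi^*(x,x)=0$ ensures this applies $\pi^*$-almost surely). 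The two together force the pointwise equality $f^*(x_i)-f^*(x_j)=\Vert x_i-x_j\Vert$ for $\pi^*$-almost every pair.

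The heart of the argument is then to exploit the equality conditions of the three inequalities chained together in the duality derivation. Equality in the directional-derivative bound \eqref{inequality-1} requires $\nabla f^*(x_t)$ to be a nonnegative scalar multiple of $(x_i-x_j)/\Vert x_i-x_j\Vert$ for Lebesgue-almost every $t\in[0,1]$; equality in the Cauchy-Schwarz step \eqref{inequality-2} requires $\Vert\nabla f^*(x_t)\Vert\sqrt{\mu^{x_i,x_j}(x_t)}$ to be proportional to $1/\sqrt{\mu^{x_i,x_j}(x_t)}$, which, because $\mu^{x_i,x_j}$ is uniform along the segment, reduces to $\Vert\nabla f^*(x_t)\Vert$ being constant in $t$; and equality in \eqref{inequality-3} requires the Sobolev semi-norm of $f^*$ with respect to $\mu^{x_i,x_j}$ to equal $1$. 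The constant-norm condition together with the semi-norm value $1$ pins that constant to $1$, and combined with the direction condition yields $\nabla f^*(x_t)=(x_i-x_j)/\Vert x_i-x_j\Vert$ almost everywhere along the segment. The assumed differentiability of $f^*$, which entails continuity of $\nabla f^*$ along the segment, then promotes this identity to every $t\in[0,1]$.

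The main obstacle I foresee is the measure-theoretic bookkeeping: I must ensure that the three inequalities saturate \emph{simultaneously} for the same $\pi^*$-almost every pair $(x_i,x_j)$, rather than each saturating on its own exceptional set whose union could still have positive measure. This is handled by the contrapositive: if any one of the three inequalities were strict on a set of positive $\pi^*$-measure, the composite bound $f^*(x_i)-f^*(x_j)\le\Vert x_i-x_j\Vert$ would be strict there, contradicting the pointwise equality already derived. Packaging the conclusion gives the probability-one event $\nabla f^*(x_t)=(x_i-x_j)/\Vert x_i-x_j\Vert$ under $(x_i,x_j)\sim\pi^*$, which is precisely the statement of the proposition.
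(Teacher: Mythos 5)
Your overall structure matches the paper's: first show that $f^*(x_i)-f^*(x_j)=\Vert x_i-x_j\Vert$ holds $\pi^*$-almost surely (the paper's Lemma~2), then trace the equality conditions of the three inequalities \eqref{inequality-1}--\eqref{inequality-3} to pin down the gradient (the paper's Theorem~2 in the appendix). Your second step is essentially identical to the paper's: equality in the directional-derivative bound forces $\nabla f^*$ parallel to $x_i-x_j$, equality in Cauchy--Schwarz plus uniformity of $\mu^{x_i,x_j}$ forces $\Vert\nabla f^*\Vert$ constant, and equality in the semi-norm bound pins that constant at $1$.

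Where you genuinely diverge is the first step. The paper establishes the $\pi^*$-a.s. saturation $f^*(x_i)-f^*(x_j)=\Vert x_i-x_j\Vert$ by a sandwich argument: it cites that the optimizer of the KR dual and the optimizer of the further-relaxed dual of \cite{LGAN} both satisfy this property, observes that the Sobolev function class sits between those two function classes, and concludes that a Sobolev optimizer inherits the property because all three duals attain the same value $W_1(P_r,P_g)$. You instead argue directly: strong duality gives $\mathbb{E}_{\pi^*}[f^*(x_i)-f^*(x_j)]=W(P_r,P_g)=\mathbb{E}_{\pi^*}\Vert x_i-x_j\Vert$, the pointwise bound $f^*(x_i)-f^*(x_j)\le\Vert x_i-x_j\Vert$ (established inside the Sobolev-duality proof) holds $\pi^*$-a.s. by the hypothesis $\pi^*(x,x)=0$, and a nonnegative random variable with zero mean vanishes a.s. Your route is more self-contained --- it avoids invoking the separate optimal-critic results for the KR dual and for \cite{LGAN}'s dual, and it avoids the (implicit, slightly delicate) step in the paper that an optimizer over the Sobolev class remains an optimizer when viewed in the larger \cite{LGAN} class. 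The paper's sandwich argument, on the other hand, reuses existing lemmas and emphasizes the ``relaxation hierarchy'' of the three duals, which is thematically tidy but logically longer.

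One small caution common to both proofs: saturation of \eqref{inequality-1} gives $\nabla f^*(x_t)\parallel(x_i-x_j)$ only for Lebesgue-a.e.\ $t$, and promoting this to every $t\in[0,1]$ really needs continuity of $\nabla f^*$ (i.e.\ $f^*\in C^1$), not merely differentiability; the paper glosses over this too, so it is not a gap specific to your argument, but worth knowing.
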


\begin{lemma} \label{lemma2}
Let $\pi^*$ be the optimal transport plan in Eq.~(\ref{eq_w_primal_app}) and $f^*$ be the optimal discriminative function in Eq.~(\ref{eq_s_dual_form_app2}). It holds that 
\begin{equation} \label{w_dual_p}
{P_{(x_i, x_j)\sim \pi^*}} \Big[f^*(x_i)-f^*(x_j) = \Vert x_i - x_j \Vert \Big] = 1. 
\end{equation}
\end{lemma}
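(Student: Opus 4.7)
The plan is to derive the statement from strong duality combined with a pointwise inequality that was already established in the proof of the Sobolev duality itself.

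First I would invoke strong duality, $W_S(P_r,P_g) = W_1(P_r,P_g)$, which has been established in the previous theorem. Evaluating both the primal at its optimal coupling $\pi^*$ and the dual at its optimal function $f^*$, and using the fact that $\pi^*$ has marginals $P_r$ and $P_g$ to rewrite the dual expectation as an expectation over $\pi^*$, I obtain
\begin{equation*}
\mathbb{E}_{(x_i,x_j)\sim\pi^*}\bigl[f^*(x_i) - f^*(x_j)\bigr] \;=\; \mathbb{E}_{(x_i,x_j)\sim\pi^*}\bigl[\Vert x_i - x_j\Vert\bigr].
\end{equation*}

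Next I would establish a pointwise inequality: for every pair $(x_i,x_j)$ in $\mathrm{supp}(P_r)\times\mathrm{supp}(P_g)$, the Sobolev feasibility of $f^*$ forces $f^*(x_i) - f^*(x_j) \le \Vert x_i - x_j\Vert$. This is exactly the chain of bounds derived in part (ii) of the duality proof: parametrize the straight segment from $x_j$ to $x_i$ by $\mathbf{r}(t) = t x_i + (1-t)x_j$, use the fundamental theorem of calculus to write $f^*(x_i) - f^*(x_j)$ as a directional derivative integral along this segment, upper bound the directional derivative by the gradient norm, apply Cauchy–Schwarz against the density $\mu^{x_i,x_j}$, and invoke the unit-ball Sobolev constraint. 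Since $\pi^*$ is supported in $\mathrm{supp}(P_r)\times\mathrm{supp}(P_g)$, this pointwise inequality holds $\pi^*$-almost surely.

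Finally I would combine these two facts in the standard way: if a random variable $Z = \Vert x_i - x_j\Vert - (f^*(x_i) - f^*(x_j))$ is nonnegative almost surely under $\pi^*$ and has expectation zero, then $Z = 0$ almost surely, which is precisely the claim that $P_{(x_i,x_j)\sim\pi^*}[f^*(x_i) - f^*(x_j) = \Vert x_i - x_j\Vert] = 1$.

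The only nontrivial step is the pointwise inequality, but it is not really an obstacle here because it is already carried out inside the earlier duality proof; I would simply quote that calculation rather than redo it. The only subtlety worth flagging is justifying that $(x_i,x_j)$ lies in $\mathrm{supp}(P_r)\times\mathrm{supp}(P_g)$ for $\pi^*$-almost every pair, so that the Sobolev constraint (which is indexed over pairs drawn from $P_r$ and $P_g$) actually applies; this follows from $\pi^*$ being a coupling of $P_r$ and $P_g$.
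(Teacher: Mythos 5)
Your proof is correct, and it takes a genuinely different route from the paper's. The paper proves the lemma by a function-space sandwich argument: it observes that the 1-Lipschitz ball (KR constraint) is contained in the Sobolev feasible set, which in turn is contained in the feasible set of the relaxed dual of \cite{LGAN} defined only over the supports; it then quotes, as external facts, that the optimizers of both the KR dual and the LGAN dual satisfy the complementary-slackness identity \eqref{w_dual_p}, and deduces that the sandwiched Sobolev optimizer must as well (because it is simultaneously optimal in the larger LGAN feasible set). Your argument is instead the standard optimal-transport complementary-slackness derivation: strong duality $W_S = W_1$ gives $\mathbb{E}_{\pi^*}[f^*(x_i)-f^*(x_j)] = \mathbb{E}_{\pi^*}[\Vert x_i - x_j\Vert]$, the pointwise bound $f^*(x_i)-f^*(x_j) \le \Vert x_i - x_j\Vert$ on $\mathrm{supp}(P_r)\times\mathrm{supp}(P_g)$ is exactly the chain of inequalities already carried out in part (ii) of the duality proof, and a nonnegative random variable with zero mean vanishes almost surely. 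Your version is more elementary and self-contained: it avoids invoking the LGAN relaxed duality entirely and does not depend on knowing the complementary-slackness property of other dual formulations, relying only on the Sobolev duality theorem itself and its proof. The one subtlety you flag, that $\pi^*$-a.e.\ pair lies in $\mathrm{supp}(P_r)\times\mathrm{supp}(P_g)$ so the Sobolev constraint applies, is correctly handled since $\pi^*$ is a coupling with those marginals. Both proofs (implicitly) assume attainment of the primal infimum and the dual supremum, which the lemma statement grants by positing $\pi^*$ and $f^*$.
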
 

\begin{proof} $ $\newline 

    
    (i) According to \cite{krdual}, the optimal $f$ in KR duality Eq.~\eqref{eq_w_dual_form_app1} satisfies the property in Eq.~\eqref{w_dual_p}. 
    
    In addition, as illustrated in Section 3.2.1, 1-Lipschitz condition in Eq.~\eqref{eq_w_dual_form_app1} is the sufficient condition of Sobolev duality Eq.~\eqref{eq_s_dual_form_app2}. Therefore, the function space of $f$ in KR duality Eq.~\eqref{eq_w_dual_form_app1} is a subspace of the function space defined in Sobolev duality Eq.~\eqref{eq_s_dual_form_app2}. 

    (ii) Recently, \cite{LGAN} proposed another relaxed dual form of Wasserstein distance. It states that:
    \begin{equation}
    \begin{aligned}
        W_{L}&(P_r,P_g) = {\sup}_{f} \,\, \Big\{ \mathbb{E}_{x_i \sim P_r} \, [f(x_i)] - \mathbb{E}_{x_j \sim P_g} \, [f(x_j)] \Big\},  \, \\
        &\emph{s.t.} \, f(x_i) - f(x_j) \leq \Vert x_i - x_j \Vert, \,\, \forall x_i \sim \mathcal{S}_r, \forall x_j \sim \mathcal{S}_g.
    \end{aligned}
    \label{new_constraint}
    \end{equation}
    
    It is worthy noticing that any $f^*$ under this constraint corresponds to one $f^*$ in Eq.~\eqref{eq_w_dual_form_app1} with the value of $f^*$ on $\mathcal{S}_r$ and $\mathcal{S}_g$ unchanged. Thus, any $f^*$ under the constraint in Eq.~\eqref{new_constraint} also holds the property in Eq.~\eqref{w_dual_p}. 
    

    According to Part (ii) of our proof in Section~\ref{app:proof:section-sobolev-dual}, given the constraint $f \in \mathcal{F}_S (\mathcal{X}, \mu ^{x_i,x_j})$ in Sobolev duality Eq.~\eqref{eq_s_dual_form_app2}, we can get the constraint in Eq.~\eqref{new_constraint} that $f(x_i) - f(x_j) \leq \Vert x_i - x_j \Vert$. Therefore, the restriction in Sobolev duality is the sufficient condition of Eq.~\eqref{new_constraint}, \textit{i.e.}, the function space of $f$ in Sobolev duality Eq.~\eqref{eq_s_dual_form_app2} is a subspace of the function space defined in Eq.~\eqref{new_constraint}. 
    
    (iii) Combining (i) and (ii), we have that 
    \begin{itemize}
    	\item The function space of $f$ in KR duality Eq.~\eqref{eq_w_dual_form_app1} is a subspace of the function space defined in Sobolev duality Eq.~\eqref{eq_s_dual_form_app2}, and the function space of $f$ in Sobolev duality Eq.~\eqref{eq_s_dual_form_app2} is a subspace of the function space defined in Eq.~\eqref{new_constraint}.  
    	\item $f^*$ in both Eq.~\eqref{eq_w_dual_form_app1} and Eq.~\eqref{new_constraint} satisfies the property in Eq.~\eqref{w_dual_p}.
    \end{itemize} 
    Therefore, we conclude that the optimal $f^*$ in Sobolev duality Eq.~\eqref{eq_s_dual_form_app2} also holds this property. 
\end{proof}

\begin{theorem}
    If $f \in \mathcal{F}_S (\mathcal{X}, \mu ^{x_i,x_j})$, given $f^*(x_i)-f^*(x_j) = \Vert x_i - x_j \Vert $, then for $x_t=tx_i+(1-t)x_j$ with $0 \leq t \leq 1$, we have $P_{(x_i,x_j)\sim \pi^*}[\nabla f^* (x_t)  = \frac{x_i-x_j}{\Vert x_i-x_j \Vert}]=1$.
\end{theorem}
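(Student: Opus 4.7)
The plan is to exploit the equality case of the chain of inequalities derived in the main proof of the Sobolev duality (Section~\ref{app:proof:section-sobolev-dual}). That argument established, for any admissible $f$ and the straight segment $\mathcal{C}$ from $x_j$ to $x_i$, the bound
\[
f(x_i) - f(x_j) \;\leq\; \int_{\mathcal{C}} \Vert \nabla_x f(\mathbf{r}) \Vert\, d\mathbf{r} \;\leq\; \sqrt{\int_{\mathcal{C}} \Vert \nabla_x f \Vert^2 \mu^{x_i,x_j}\, d\mathbf{r}} \cdot \sqrt{\int_{\mathcal{C}} \frac{1}{\mu^{x_i,x_j}}\, d\mathbf{r}} \;\leq\; \Vert x_i - x_j \Vert.
\]
Under the hypothesis $f^*(x_i) - f^*(x_j) = \Vert x_i - x_j \Vert$, every inequality in this chain must collapse to equality, and the proof reduces to reading off what each saturation forces on $\nabla_x f^*$ along $\mathcal{C}$.

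Working through them in order, I would first note that equality in the directional-derivative bound $\mathbf{v} \cdot \nabla_x f^*(\mathbf{r}) \leq \Vert \nabla_x f^*(\mathbf{r})\Vert$ (with $\mathbf{v} = (x_i - x_j)/\Vert x_i - x_j \Vert$, a unit vector) forces $\nabla_x f^*(\mathbf{r})$ to be a \emph{non-negative} multiple of $\mathbf{v}$, hence $\nabla_x f^*(\mathbf{r}) = \Vert \nabla_x f^*(\mathbf{r}) \Vert\, \mathbf{v}$ for $\mathbf{r} \in \mathcal{C}$. Next, equality in the Cauchy--Schwarz step forces $\Vert \nabla_x f^*(\mathbf{r}) \Vert \sqrt{\mu^{x_i,x_j}(\mathbf{r})}$ and $1/\sqrt{\mu^{x_i,x_j}(\mathbf{r})}$ to be proportional along $\mathcal{C}$; since $\mu^{x_i,x_j}$ is constant on the segment (equal to $1/\Vert x_i - x_j \Vert$), this pins $\Vert \nabla_x f^*(\mathbf{r}) \Vert$ to a single constant $c \geq 0$ on $\mathcal{C}$. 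Finally, the last inequality in the chain cannot be strict either: if $\Vert f^* \Vert_{W^{1,2}_0(\mathcal{X},\mu^{x_i,x_j})} < 1$ the chain would give $f^*(x_i) - f^*(x_j) < \Vert x_i - x_j \Vert$, contradicting the hypothesis. Hence the Sobolev norm saturates the constraint, and plugging the constant value $c$ into $\int_{\mathcal{C}} c^2 \mu^{x_i,x_j}\, d\mathbf{r} = c^2 = 1$ yields $c = 1$. Combining these three consequences gives $\nabla_x f^*(x_t) = (x_i - x_j)/\Vert x_i - x_j \Vert$ for every $t \in [0,1]$ on the fixed pair $(x_i, x_j)$.

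To upgrade to the probabilistic conclusion, I would invoke Lemma~\ref{lemma2}, which already furnishes $P_{(x_i,x_j)\sim \pi^*}[f^*(x_i) - f^*(x_j) = \Vert x_i - x_j \Vert] = 1$. Applying the pointwise argument above on that $\pi^*$-full-measure set of pairs then delivers the stated claim $P_{(x_i,x_j)\sim \pi^*}[\nabla f^*(x_t) = (x_i - x_j)/\Vert x_i - x_j \Vert] = 1$.

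The main obstacle I anticipate is reconciling the almost-everywhere nature of the equality-case conclusions with the pointwise identity required by the proposition. Strictly speaking, equality in Cauchy--Schwarz and in the directional-derivative bound only constrains $\nabla_x f^*$ up to one-dimensional Lebesgue-null subsets of $\mathcal{C}$; the extension to \emph{every} $t\in[0,1]$ must be justified by the assumed differentiability of $f^*$ together with (at least local) continuity of $\nabla f^*$ along the segment. A secondary subtlety is that an admissible $f^*$ must satisfy $\Vert f^* \Vert_{W^{1,2}_0(\mathcal{X}, \mu^{x_i,x_j})} \leq 1$ \emph{simultaneously} for every pair $(x_i,x_j)$, so one cannot freely rescale on each segment; the chain-of-inequalities argument still works because the hypothesis itself forces saturation precisely on the pairs in the $\pi^*$-full-measure set supplied by Lemma~\ref{lemma2}.
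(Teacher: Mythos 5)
Your proposal follows essentially the same route as the paper's proof: read off the equality case of the same three-step inequality chain (directional derivative bound, Cauchy--Schwarz, Sobolev ball constraint) to pin down first the direction, then the constancy, then the unit norm of $\nabla f^*$ along the segment, and finish with Lemma~\ref{lemma2} for the $\pi^*$-almost-sure statement. Your version is in fact slightly more careful than the paper's in two respects that are worth keeping: you note that the directional-derivative saturation forces a \emph{non-negative} multiple of $\mathbf{v}$ (the paper writes an unsigned scalar $k$), and you flag the a.e.-versus-pointwise subtlety that the assumed differentiability of $f^*$ is needed to upgrade the equality-case conclusions to every $t\in[0,1]$.
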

\begin{proof} $ $\newline

    Given $f^*(x_i)-f^*(x_j) =  \Vert x_i - x_j \Vert $, then the inequality in Eq.~\eqref{inequality-1}, Eq.~\eqref{inequality-2} and Eq.~\eqref{inequality-3} are turned to equations:
    
    (i) The first equation corresponding to Eq.~\eqref{inequality-1} is that $\int_\mathcal{C} \mathbf{v} \cdot \nabla_x f(\mathbf{r}) d \mathbf{r} = \int_\mathcal{C} \Vert \nabla_x f(\mathbf{r}) \Vert d \mathbf{r} $. It indicates that
    \begin{equation}
    \label{sobolev-direction}
        \nabla_x f(\mathbf{r}) = k\mathbf{v} = k\frac{x_i-x_j}{\Vert x_i-x_j \Vert},
    \end{equation}
    where $k$ is a scalar, \textit{i.e.}, we have that
    \begin{equation}
    	\nabla_x f(\mathbf{r}) \parallel \frac{x_i-x_j}{\Vert x_i-x_j \Vert}
    \end{equation}
    
    (ii) The second equation corresponding to Eq.~\eqref{inequality-2} is that $\int_\mathcal{C} \Vert \nabla_x f(\mathbf{r}) \Vert \sqrt{\mu^{x_i,x_j}(\mathbf{r})} \frac{1}{\sqrt{\mu^{x_i,x_j}(\mathbf{r})}}  d \mathbf{r} = \sqrt{\int_\mathcal{C}  \Vert \nabla_x f(\mathbf{r}) \Vert ^2 \mu^{x_i,x_j}(\mathbf{r}) d \mathbf{r} } \sqrt{\int_\mathcal{C}{ \frac{1}{\mu^{x_i,x_j}(\mathbf{r})} d \mathbf{r}} }$. According to Cauchy–Schwarz inequality, it indicates that $\Vert \nabla_x f(\mathbf{r}) \Vert \sqrt{\mu^{x_i,x_j}(\mathbf{r})}$ and $\frac{1}{\sqrt{\mu^{x_i,x_j}(\mathbf{r})}}$ are linearly dependent. More precisely, we have: 
    \begin{equation}
    \begin{aligned}
    \label{sobolev-scalar}
        &\Vert \nabla_x f(\mathbf{r}) \Vert \sqrt{\mu^{x_i,x_j}(\mathbf{r})} = k\frac{1}{\sqrt{\mu^{x_i,x_j}(\mathbf{r})}}, \\
        &\Vert \nabla_x f(\mathbf{r}) \Vert = k\frac{1}{\mu^{x_i,x_j}(\mathbf{r})} = k \Vert x_i - x_j \Vert,
    \end{aligned}
    \end{equation}
    where $k$ is a scalar.
    
    (iii) The third equation corresponding to Eq.~\eqref{inequality-3} is that 
    \begin{equation}
    \label{sobolev-sphere}
        \sqrt{ \int _\mathcal{X} \Vert \nabla _x f(\mathbf{r}) \Vert ^2 \mu ^{x_i,x_j} (\mathbf{r}) dt } = 1
    \end{equation}
    
    Combine Eq.~\eqref{sobolev-scalar} and Eq.~\eqref{sobolev-sphere}, we can get the solution of $k$ in Eq.~\eqref{sobolev-scalar}: $k= \frac{1}{\Vert x_i-x_j \Vert}$.
    
    Assign the value of $k$ into Eq.~\eqref{sobolev-scalar} we get the solution of the gradient norms:
    \begin{equation}
    \label{sobolev-norm}
        \Vert \nabla_x f(\mathbf{r}) \Vert = 1.
    \end{equation}
    
    Combining the gradient direction in Eq.~\eqref{sobolev-direction}  and norm in Eq.~\eqref{sobolev-norm}, we have that:
    \begin{equation}
        \nabla_x f(\mathbf{r}) = 
        \frac{x_i-x_j}{\Vert x_i-x_j \Vert}
    \end{equation}
    To conclude, we showed that if $(x_i,x_j)$ have the property that  $f^*(x_i)-f^*(x_j) = \Vert x_i - x_j \Vert $, then $\nabla_x f(x_t) = \frac{x_i-x_j}{\Vert x_i-x_j \Vert}$. Since this happens with probability 1 under $\pi$, we know that
    \begin{equation}
    P_{(x_i,x_j)\sim \pi^*}[\nabla f^* (x_t)  = \frac{x_i-x_j}{\Vert x_i-x_j \Vert}]=1
    \end{equation}
    and we finished the proof.
\end{proof}

\section{Example for the convergence of Sobolev IPM}
\label{app:sec:example}

\paragraph{Example 1.} Let $P_r$ and $P_g$ be two disjoint 2-dimensional uniform distributions. Consider the following case:
\begin{equation}
\begin{aligned}
P_r(x_1,x_2) &= 
\begin{cases}
1, &-1\leq x_1\leq 0\ and\ -1\leq x_2\leq 0,\cr
0, &otherwise.
\end{cases} \\
P_g(x_1,x_2) &= 
\begin{cases}
1, &0\leq x_1\leq 1\ and\ 0\leq x_2\leq 1,\cr
0, &otherwise.
\end{cases} 
\end{aligned}
\end{equation}
	Eq.~(11) in the paper states the property of the optimal critic $f^*$ in Sobolev IPM that
\begin{equation}
\label{app:sobolev-f}
\nabla_x f^*(x)= \frac{1}{n \mathcal{S}_{\mu}(P_r,P_g)} \frac{D^- F_{P_g}(x)- D^-F_{P_r}(x)}{\mu(x)}.
\end{equation}
According to this property, we have that
\begin{align}
\frac{\partial}{\partial x_1} f^*(a,b) 
=\ &k \Big( D^{-1}F_g(a,b)-D^{-1}F_r(a,b) \Big)\nonumber \\
=\ &k\frac{\partial}{\partial x_2}(F_g(a,b)-F_r(a,b)) \nonumber \\
=\ &k\int_{-\infty}^a \Big(P_g(x_1,b)-P_r(x_1,b)\Big) dx_1 \nonumber \\
=\ &k\int_{-\infty}^a P_g(x_1,b) dx_1 > 0
\end{align}
where $k$ is a scalar equal to $\frac{1}{n \mathcal{S}_{\mu}(P_r,P_g)\mu(a,b)}$. 
Similarly, we also have that $\frac{\partial}{\partial x_2}f^*(a,b)>0$. Thus, for any point $(a,b)$ from $P_g$, we have
\begin{align}
\frac{\partial}{\partial x_1} f^*(a,b) > 0,\ \  \frac{\partial}{\partial x_2}f^*(a,b)>0.
\end{align}
This implies in this case the gradients guide fake data distribution $P_g$ to move along the positive direction of the $x$ and $y$ axes, which is the opposite direction to converge to $P_r$. Thus we have shown that Sobolev IPM will fail to converge in some cases. By contrast, according to Proposition~\ref{app:prop:grad-sdual}, the gradients of the proposed Sobolev duality always follow the optimal transport plan which guarantees the convergence. 

%
%

\section{Proof for the local convergence of SWGAN-CP}
\label{app:sec:cp-proof}

In this section, we present the proof of the local convergence property of proposed SWGAN-CP. This proof is build upon the results from the theory of discrete dynamical systems on previous works \cite{nagarajan2017gradient,mescheder2017numerics,mescheder2018training}. The discrete version of a basic convergence theorem for continuous dynamical systems from \cite{nagarajan2017gradient} can be found in Appendix A.1 in \cite{mescheder2017numerics}. This theorem allows us to make statements about training algorithms for GANs for finite learning rates.
Besides, \cite{mescheder2017numerics} analysed the convergence properties
of simultaneous and alternating gradient descent, and \cite{mescheder2018training} states some eigenvalue bounds that were derived.

Following the notation of \cite{nagarajan2017gradient}, the training objective for the two players in GANs can be described by an objective function of the form
\begin{equation}\label{eq:gan-loss}
\mathcal{L}(\theta, w)= \mathbb{E}_{p(z)}\left[f(D_w(G_\theta(z)))\right] \\
+ \mathbb{E}_{p_{data}(x)}\left[f(-D_w(x))\right]
\end{equation}
for some real-valued function $f$. Then in the toy case defined in Section \ref{sec:suboptimal}, this is simplified to:
\begin{equation}\label{eq:gan-loss-simple}
\mathcal{L}(\theta, w)= f(w\theta) + f(0).
\end{equation}
Now we consider to add the GAN loss with the gradient penalty regularization term used in SWGAN-CP
\begin{equation}
    \mathcal{L}_{cp} = - \lambda (w^4 + w^2),
\end{equation}
then we have the following lemma:

\begin{lemma}
	\label{lemma:gradient-penalty}
	The eigenvalues of the Jacobian of the gradient vector field for the SWGAN-CP at the equilibrium point are given by
	\begin{equation}
	\gamma_{1/2}=-\lambda\pm\sqrt{\lambda^{2}-f^{\prime}(0)^{2}}.
	\end{equation}
	In particular,  for $\lambda>0$ all eigenvalues
	have negative real part.
	Hence, simultaneous and alternating gradient descent are both locally convergent 
	for small enough learning rates.
\end{lemma}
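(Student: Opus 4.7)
The plan is to compute the Jacobian of the SWGAN-CP gradient vector field at the unique equilibrium $(\theta, w) = (0, 0)$, read off its eigenvalues, and then invoke the discrete-time local convergence theorem recalled in Appendix~A.1 of \cite{mescheder2017numerics}, together with the eigenvalue-transfer bounds for simultaneous and alternating gradient descent derived in \cite{mescheder2017numerics, mescheder2018training}.

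To instantiate this, first I would write the SWGAN-CP objective explicitly. Starting from the generic GAN loss $\mathcal{L}(\theta, w) = f(w\theta) + f(0)$ in Eq.~\eqref{eq:gan-loss-simple}, setting $g_0 = 0$ in the gradient penalty of Eq.~\eqref{eq:toy-case-gp-loss}, and adding the new zero-centered penalty $-\lambda w^2$ on the squared gradient norm of the linear critic, one obtains $\mathcal{L}_{tot}(\theta, w) = f(w\theta) + f(0) - \lambda (w^4 + w^2)$. The associated gradient vector field is $v(\theta, w) = \bigl(-\nabla_\theta \mathcal{L}_{tot},\ \nabla_w \mathcal{L}_{tot}\bigr) = \bigl(-w f'(w\theta),\ \theta f'(w\theta) - 2\lambda w(2w^2 + 1)\bigr)$, which manifestly vanishes at the origin, so $(\theta, w) = (0, 0)$ is an equilibrium (its uniqueness in this toy setting was established in \cite{mescheder2018training}).

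Next I would differentiate $v$ entrywise. Every term involving $f''(w\theta)$ or $w^3$ carries a factor of $w$ or $\theta$ and therefore vanishes at the origin, leaving $J := Dv(0, 0) = \begin{pmatrix} 0 & -f'(0) \\ f'(0) & -2\lambda \end{pmatrix}$. The characteristic polynomial $\gamma^2 + 2\lambda\gamma + f'(0)^2 = 0$ then yields exactly $\gamma_{1/2} = -\lambda \pm \sqrt{\lambda^2 - f'(0)^2}$, as claimed. For $\lambda > 0$, a short case split shows both eigenvalues have strictly negative real part (under the mild hypothesis $f'(0) \neq 0$, which holds for standard choices of $f$ such as WGAN and vanilla GAN): if $\lambda^2 \geq f'(0)^2$, both roots are real and the larger one, $-\lambda + \sqrt{\lambda^2 - f'(0)^2}$, is still strictly negative; if $\lambda^2 < f'(0)^2$, the roots form a complex conjugate pair with real part $-\lambda < 0$. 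Feeding this spectral information into the SimGD/AltGD eigenvalue bounds of \cite{mescheder2017numerics, mescheder2018training} and then into the discrete convergence theorem delivers local convergence of both schemes for sufficiently small learning rates.

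The bulk of the argument is a routine Jacobian calculation, so the principal obstacle is bookkeeping rather than insight: I must transcribe the zero-centered modification of the penalty so that both the $w^4$ piece (inherited from $g_0 = 0$) and the extra $w^2$ piece (from penalizing the squared gradient norm) appear with consistent coefficients, and keep the sign convention for $v$ aligned with the one used in the cited convergence theorems. Once those conventions are pinned down, the eigenvalue formula falls out of a two-by-two determinant and the sign analysis is immediate.
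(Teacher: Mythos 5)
Your proof follows the paper's own argument essentially verbatim: same reconstruction of $\mathcal{L}_{cp} = -\lambda(w^4+w^2)$, same gradient vector field $\tilde v(\theta,w) = \bigl(-w f'(w\theta),\ \theta f'(w\theta) - \lambda(4w^3+2w)\bigr)$, same Jacobian evaluated at the origin, and the same $2\times 2$ characteristic polynomial yielding $\gamma_{1/2} = -\lambda \pm \sqrt{\lambda^2 - f'(0)^2}$. The only additions beyond what the paper writes are the explicit case split verifying $\mathrm{Re}(\gamma_{1/2})<0$ and the explicit hypothesis $f'(0)\neq 0$, both of which are correct and tighten up what the paper leaves implicit.
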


\begin{proof}
	The gradient vector field for SWGAN-CP becomes
	\begin{equation}
	\tilde{v}(\theta, w)=\left(\begin{array}{c}
	-f^{\prime}(\theta w) w\\
	f^{\prime}(\theta w)\theta-\lambda(4w^3+2w)
	\end{array}\right).
	\end{equation}
	The Jacobian $\tilde{v}^{\prime}(\theta, w)$ is therefore given by
	\begin{equation}
	\begin{pmatrix}
	-f^{\prime\prime}(\theta w) w^{2} & -f^{\prime}(\theta w)-f^{\prime\prime}(\theta w)\theta w\\
	f^{\prime}(\theta w)+f^{\prime\prime}(\theta w)\theta w & f^{\prime\prime}(\theta w)\theta^{2}-\lambda(12w^2+2)
	\end{pmatrix}.
	\end{equation}
	Evaluating it at $\theta= w=0$ yields
	\begin{equation}
	\tilde{v}^{\prime}(0,0)=\left(\begin{array}{cc}
	0 & -f^{\prime}(0)\\
	f^{\prime}(0) & -2\lambda
	\end{array}\right)
	\end{equation}
	whose eigenvalues are given by
	\begin{equation}
	\gamma_{1/2}=-\lambda \pm \sqrt{\lambda^{2}-f^{\prime}(0)^{2}}.
	\end{equation}
\end{proof}

\newpage

\section{Algorithm for SWGAN-GP and WGAN-AL}

The training procedures of SWGAN-GP and WGAN-AL that we used in Section 5.2 are formally presented in Algorithm \ref{swgan-gp} and Algorithm \ref{wgan-al} respectively. 

\begin{algorithm}[!bthp]
	\caption{Sobolev Wasserstein GAN with Gradient Penalty}
	\label{swgan-gp}
	\textbf{Input}: $\rho$ penalty weight, $\eta$ learning rate, $k$ number of critic iterations per generator iteration, $n$ batch size, $m$ number of points sampled between each pair of fake and real data
	
	\textbf{Initial}: critic parameters $w$, generator parameters $\theta$
	
	\begin{algorithmic}[1] 
		\WHILE{$\theta$ has not converged}
		\FOR{$k=1,...,k$}
		\STATE Sample a minibatch $x_i$, $i=1,...,n$, $x_i\sim P_r$
		\STATE Sample a minibatch $z_i$, $i=1,...,n$, $z_i\sim P_z$
		\STATE $\widetilde{x}_i \leftarrow G_{\theta}(z_i)$, $i=1,...,n$
		\FOR{$i=1,...,n$}
		\STATE Sample a minibatch $\hat{x}_{ij} = t_jx_i + (1-t_j) \widetilde{x}_i$, $j=1,...,m$, $t_j\sim \text{U}(0,1)$
		\STATE $\Omega^{(i)} \leftarrow 1- \frac{1}{m} \sum_{j=1}^{m} \Vert\nabla_xD(\hat{x}_{ij}) \Vert ^2$
		\STATE $\mathcal{L}^{(i)}_{gp} \leftarrow - \lambda\,\big( \max\{-\Omega^{(i)}(D_w,G_\theta), 0\}\big) ^2$
		\ENDFOR
		\STATE $\mathcal{L} \leftarrow \frac{1}{n} \sum_{i=1}^{n} \big\{ D_w(x_i) - D_w(\widetilde{x}_i) + \mathcal{L}_{gp}^{(i)} \big\}$
		\STATE $g_w \leftarrow \nabla_{w} \mathcal{L} (w,\theta,\alpha)$
		\STATE $w \leftarrow w + \eta \text{ADAM}(w,g_w)$
		\ENDFOR
		\STATE Sample a minibatch $z_i$, $i=1,...,n$, $z_i\sim P_z$
		\STATE $d_{\theta} \leftarrow -\nabla_{\theta} \frac{1}{n} \sum_{i=1}^{n} D_w(G_{\theta}(z_i))$
		\STATE $\theta \leftarrow \theta - \eta \text{ADAM} (\theta,d_{\theta})$
		\ENDWHILE
	\end{algorithmic}
\end{algorithm}

\begin{algorithm}[!tbhp]
	\caption{Wasserstein GAN with Augmented Lagrangian}
	\label{wgan-al}
	\textbf{Input}: $\rho$ penalty weight, $\eta$ learning rate, $k$ number of critic iterations per generator iteration, $n$ batch size, $m$ number of points sampled between each pair of fake and real data
	
	\textbf{Initial}: critic parameters $w$, generator parameters $\theta$, Lagrange multiplier $\alpha$
	
	\begin{algorithmic}[1] 
		\WHILE{$\theta$ has not converged}
		\FOR{$k=1,...,k$}
		\STATE Sample a minibatch $x_i$, $i=1,...,n$, $x_i\sim P_r$
		\STATE Sample a minibatch $z_i$, $i=1,...,n$, $z_i\sim P_z$
		\STATE $\widetilde{x}_i \leftarrow G_{\theta}(z_i)$, $i=1,...,n$
		\FOR{$i=1,...,n$}
		\STATE Sample a minibatch $\hat{x}_{ij} = t_jx_i + (1-t_j) \widetilde{x}_i$, $j=1,...,m$, $t_j\sim \text{U}(0,1)$
		\STATE $h^{(ij)} = 1 - \Vert \nabla _x D_w(\hat{x}_{ij})\Vert$
		\STATE $\mathcal{L}_{al}^{(j)} \leftarrow \sum_{j=0}^m\big\{ \alpha h^{(ij)} - \frac{\rho}{2} (h^{(ij)})^2 \big\}$
		\ENDFOR
		\STATE $\mathcal{L} \leftarrow \frac{1}{n} \sum_{i=1}^{n} \big\{ D_w(x_i) - D_w(\widetilde{x}_i) + \mathcal{L}_{al}^{(i)} \big\}$
		\STATE $(g_{\alpha},g_w) \leftarrow (\nabla_{\alpha} \mathcal{L},\nabla_{w} \mathcal{L}) (w,\theta,\alpha)$
		\STATE $\alpha \leftarrow \alpha - \rho g_{\alpha}\ $ (SGD rule on $\alpha$ with learning rate $\rho$)
		\STATE $w \leftarrow w + \eta \text{ADAM}(w,g_w)$
		\ENDFOR
		\STATE Sample a minibatch $z_i$, $i=1,...,n$, $z_i\sim P_z$
		\STATE $d_{\theta} \leftarrow -\nabla_{\theta} \frac{1}{n} \sum_{i=1}^{n} D_w(G_{\theta}(z_i))$
		\STATE $\theta \leftarrow \theta - \eta \text{ADAM} (\theta,d_{\theta})$
		\ENDWHILE
	\end{algorithmic}
\end{algorithm}

\section{ResNet architecture}
\label{app:sec:net-architecture}
We adopt the same ResNet network structures and hyperparameters as \cite{wgan-gp}. The generator and critic are residual networks. \cite{wgan-gp} use pre-activation residual blocks with two $3 \times 3$ convolutional layers each and ReLU nonlinearity. Batch normalization is used in the generator but not the critic. Some residual blocks perform downsampling (in the critic) using mean pooling after the second convolution, or nearest-neighbor upsampling (in the generator) before the second convolution. 

Formally, we present our ResNet architecture in Table~\ref{table:resnet}. Further architectural details can be found in our open-source model in supplementary material. 

\begin{table}[!tphb]
	\footnotesize
	\centering
	\caption{ResNet architecture.}
	\label{table:resnet}
	\begin{tabular}{c  c  c c}
		\toprule
		\multicolumn{4}{c}{Generator $G(z)$}  \\
		\midrule
		Operation &Kernel size& Resample& Output Dims\\
		\midrule
		Noise & N/A & N/A & $128$ \\
		Linear & N/A & N/A & $128\times4\times4$ \\
		Residual block & $[3\times3]\times2$ & Up & $128\times8\times8$ \\
		Residual block & $[3\times3]\times2$ & Up & $128\times16\times16$ \\
		Residual block & $[3\times3]\times2$ & Up & $128\times32\times32$ \\
		Conv, tanh & $3\times3$ & N/A & $3\times32\times32$ \\
		\bottomrule
		\toprule
		\multicolumn{4}{c}{Critic $D(x)$}  \\
		\midrule
		Operation &Kernel size& Resample& Output Dims\\
		\midrule
		Residual block & $[3\times3]\times2$ & Down & $128\times16\times16$ \\
		Residual block & $[3\times3]\times2$ & Down & $128\times8\times8$ \\
		Residual block & $[3\times3]\times2$ & N/A & $128\times8\times8$ \\
		Residual block & $[3\times3]\times2$ & N/A & $128\times8\times8$ \\
		ReLU, mean pool & N/A & N/A & $128$ \\
		Linear & N/A & N/A & $1$ \\
		\bottomrule
	\end{tabular}
\end{table}

\newpage
\section{More generated samples}
\label{app:sec:generated-samples}

\subsection{MNIST samples}

\begin{figure}[H] 
	\centering
	\begin{subfigure}{0.6\linewidth}
		\vspace{10pt}
		\centering
		\includegraphics[width=0.99\columnwidth]{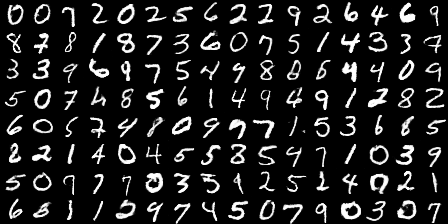}
	\end{subfigure}
	\begin{subfigure}{0.6\linewidth}
		\vspace{10pt}
		\centering
		\includegraphics[width=0.99\columnwidth]{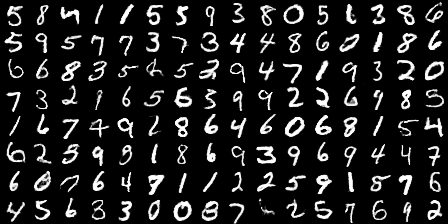}
	\end{subfigure}	
	\begin{subfigure}{0.6\linewidth}
		\vspace{10pt}
		\centering
		\includegraphics[width=0.99\columnwidth]{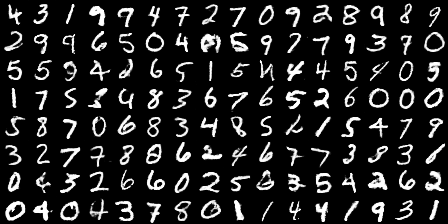}
	\end{subfigure}	
	\caption{Generated MNIST samples of SWGAN-AL.}
\end{figure}

\newpage

\subsection{CIFAR-10 samples}

\begin{figure}[H]
\vspace{10pt}
\centering
\begin{subfigure}{1\linewidth}
    \centering
    \includegraphics[width=0.4\linewidth]{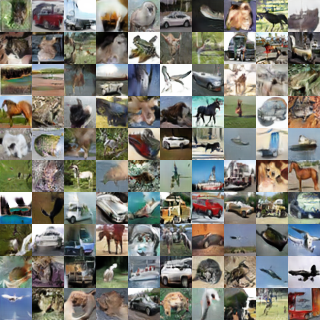}
    \hspace{+20pt}
    \centering
    \includegraphics[width=0.4\linewidth]{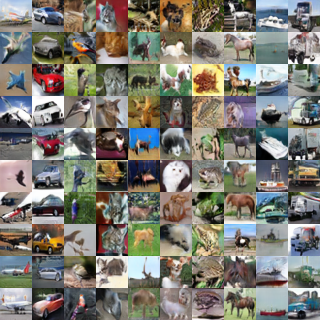}
\end{subfigure}	
\caption{Generated CIFAR-10 samples of SWGAN-AL. \textbf{Left}: Samples generated by our unsupervised model. \textbf{Right}: Samples generated by our conditional model. We add AC-GAN conditioning to our unconditional model. Samples from the same class are displayed in the same column.}
\hspace{+20pt}
\label{figure:cifar-al}
\end{figure} 

\begin{figure}[H]
\centering
\begin{subfigure}{1\linewidth}
    \centering
    \includegraphics[width=0.4\linewidth]{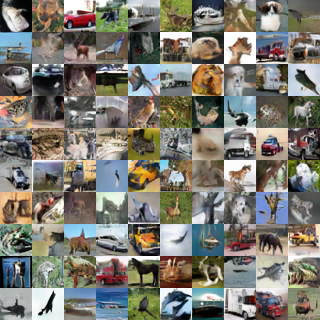}
    \hspace{+20pt}
    \centering
    \includegraphics[width=0.4\linewidth]{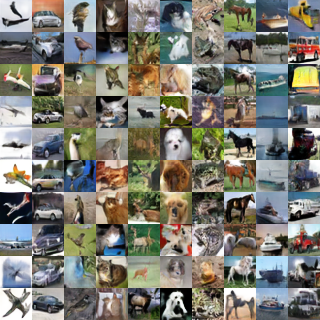}
\end{subfigure}	
    \caption{Generated CIFAR-10 samples of SWGAN-GP. \textbf{Left}: Samples generated by our unsupervised model. \textbf{Right}: Samples generated by our conditional model. We add AC-GAN conditioning to our unconditional model. Samples from the same class are displayed in the same column.}
\label{figure:cifar-gp}
\end{figure} 

\newpage

\subsection{Tiny-ImageNet samples}

\begin{figure}[H]
\centering
    \centering
    \includegraphics[width=0.55\linewidth]{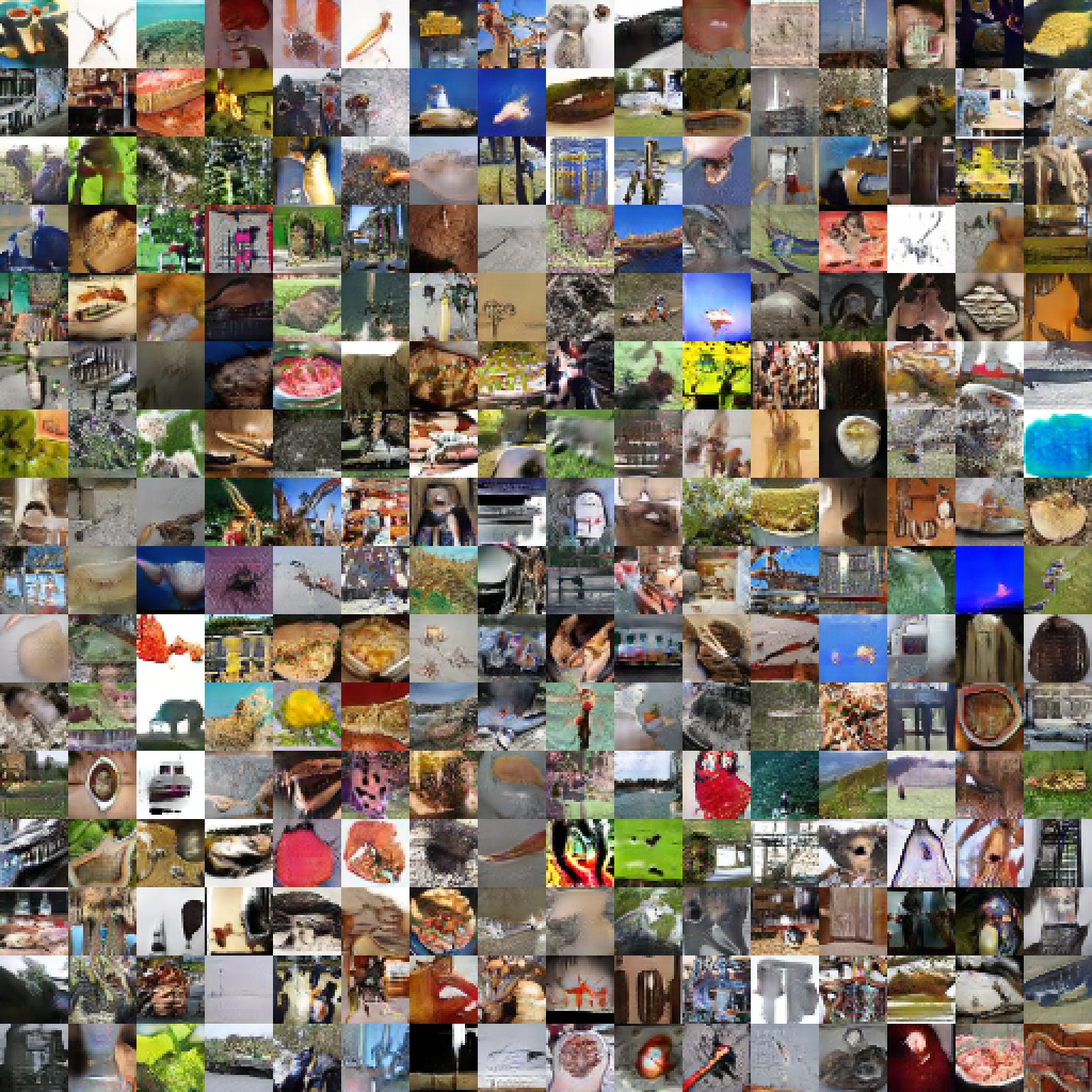}
    \caption{Generated Tiny-ImageNet samples of SWGAN-AL.}
\label{figure:tiny-al}
\end{figure} 

\vspace{-10pt}

\begin{figure}[H]
\centering
    \centering
    \includegraphics[width=0.55\linewidth]{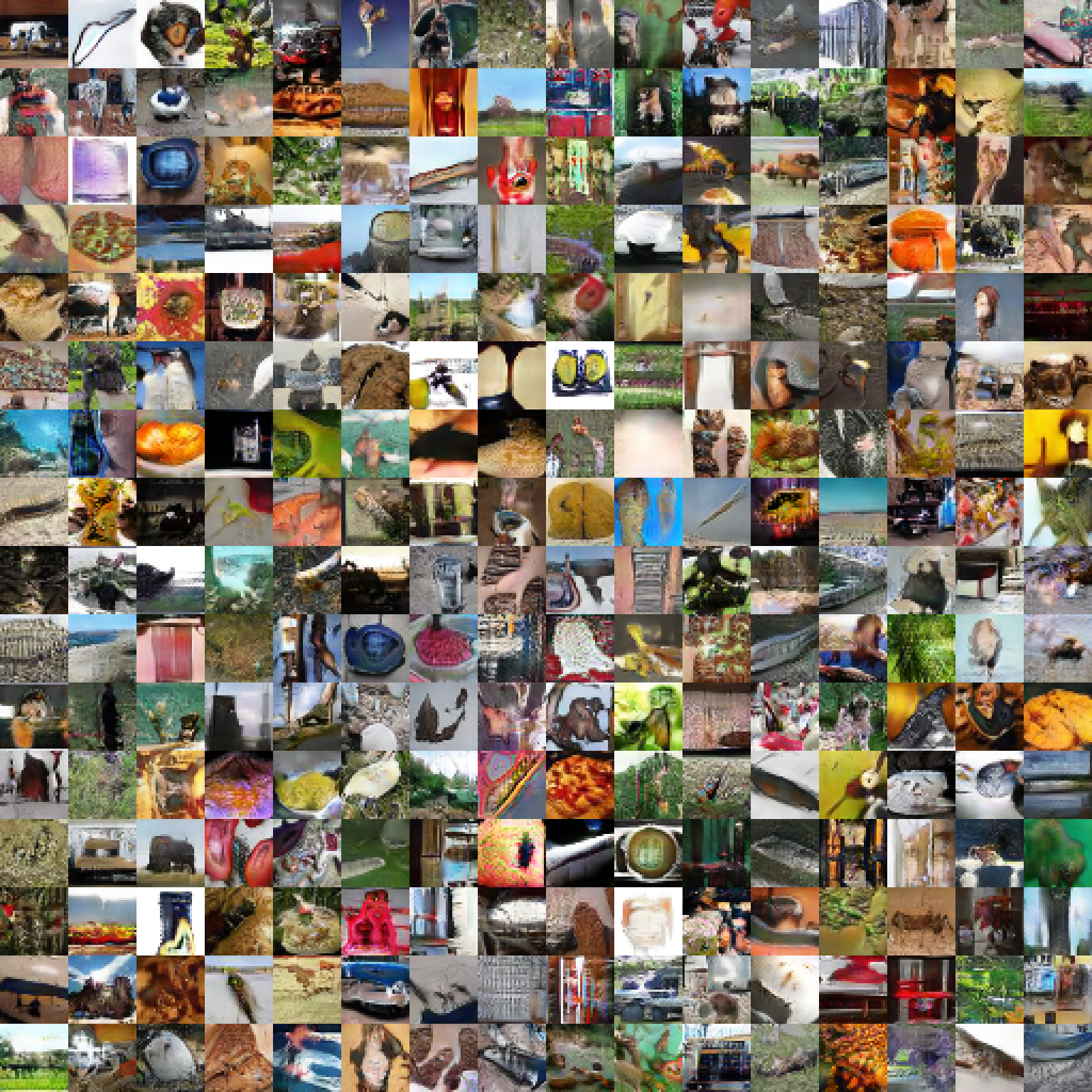}
    \caption{Generated Tiny-ImageNet samples of SWGAN-GP.}
\label{figure:tiny-gp}
\end{figure}



\end{document}